\newcommand{\norm}[1]{\left\lVert#1\right\rVert}
\theoremstyle{plain}
\newtheorem{theorem}{Theorem}
\newtheorem{lemma}{Lemma}
\newtheorem{definition}{Definition}
\newtheorem{corollary}{Corollary}
\newtheorem{conjecture}{Conjecture}
\newtheorem*{theorem*}{Theorem}
\newtheorem*{lemma*}{Lemma}
\newtheorem*{proposition*}{Proposition}
\newtheorem*{definition*}{Definition}
\newtheorem*{example*}{Example}
\newtheorem*{corollary*}{Corollary}
\newtheoremstyle{custom}
{} 
{} 
{\rm } 
{} 
{\bfseries} 
{:} 
{.25em} 
{} 
\theoremstyle{custom}
\newtheorem{remark}{Remark}
\newtheorem*{remark*}{Remark}
\def\sT{{\mathsf T}}
\def\cS{{\mathcal S}}
\def\cA{{\mathcal A}}
\def\cS{{\mathcal S}}
\def\cP{{\mathcal P}}
\def\cN{{\mathcal N}}
\def\cW{{\mathcal W}}
\def\cD{{\mathcal D}}
\def\Var{{\rm Var}}
\def\<{\langle}
\def\>{\rangle}
\def\eps{\epsilon}
\def\reals{{\mathbb R}}
\def\<{\langle}
\def\>{\rangle}
\def\reals{{\mathbb R}}
\def\bv{{\boldsymbol v}}
\def\bx{{\boldsymbol x}}
\def\bw{{\boldsymbol w}}
\def\hbw{\hat{\boldsymbol w}}
\def\bT{{\boldsymbol T}}
\def\by{{\boldsymbol y}}
\def\hy{\hat{y}}
\def\bA{{\boldsymbol A}}
\def\bB{{\boldsymbol B}}
\def\bW{{\boldsymbol W}}
\def\bI{{\boldsymbol I}}
\def\b0{{\boldsymbol 0}}
\def\bu{{\boldsymbol u}}
\def\bg{{\boldsymbol g}}
\def\bv{{\boldsymbol v}}
\def\normal{{\sf N}}
\def\E{{\mathbb E}}
\let\l@ENGLISH\l@english
\title{On the Connection Between Learning Two-Layer Neural Networks and Tensor Decomposition}
\author{Marco~Mondelli\thanks{Department of Electrical Engineering,
    Stanford University}
 \;\;\; and\;\;\; Andrea Montanari\thanks{Department of Electrical Engineering and Department of Statistics, Stanford University}}
\begin{document}

\maketitle
\begin{abstract}
We establish connections between the problem of learning a two-layer neural network and tensor decomposition. We consider a model with feature vectors $\bx \in \mathbb R^d$,
$r$ hidden units with weights $\{\bw_i\}_{1\le i \le r}$ and output $y\in \mathbb R$, i.e., $y=\sum_{i=1}^r \sigma( \bw_i^{\sT}\bx)$,
with activation functions given by low-degree polynomials.
In particular, if $\sigma(x) = a_0+a_1x+a_3x^3$, we prove that no polynomial-time learning algorithm can outperform the trivial predictor that assigns to 
each example the response variable $\E(y)$, when $d^{3/2}\ll r\ll d^2$. 
Our conclusion holds for a `natural data distribution', namely standard Gaussian feature vectors $\bx$, and output distributed according 
to a two-layer neural network with random isotropic weights, and under a certain complexity-theoretic assumption on tensor decomposition. 
Roughly speaking, we assume that no polynomial-time algorithm can substantially outperform current methods for tensor decomposition 
based on the sum-of-squares hierarchy.

We also prove generalizations of this statement for higher degree polynomial activations, and non-random weight vectors.
Remarkably, several existing algorithms for learning two-layer networks with rigorous guarantees are
based on tensor decomposition. Our results support the idea that this is indeed the core computational difficulty in learning such networks,
under the stated generative model for the data. As a side result, we show that under this model learning the network requires accurate learning of its weights, a 
property that does not hold in a more general setting.
\end{abstract}

%

\section{Introduction and Main Results} \label{sec:intro}

Let $\{(\bx_i,y_i)\}_{1\le i\le n}$ be $n$ data points where, for each $i$, $\bx_i\in\reals^d$ is a feature vector and $y_i\in\reals$ is a response
variable or label. The simplest neural network attempts to fit these data using the model
\begin{equation}\label{eq:defhaty}
\hat{y}(\bx;\hbw) = \sum_{i=1}^r \sigma(\left \langle \bx, \hat{\bw}_i\right \rangle)\, .
\end{equation}
Here $\sigma:\reals\to \reals$ is a non-linear activation function, and $\hbw = (\hbw_i)_{i\le r}$, where $\hbw_1,\dots,\hbw_r\in\reals^d$ are model parameters
(weight vectors). In the following, we will often omit the argument $\hbw$ from $\hy$.
Let us emphasize that this is a deliberately oversimplified neural network model: $(i)$~It only includes one hidden layer of $r$ units (neurons);
$(ii)$~The output unit is linear (it takes a linear combination of the hidden units); 
$(iii)$~The hidden units have no offset or output weight. Since our main results are negative (computational hardness), we are not too concerned with  such 
simplifications. For instance, it is unlikely that adding a non-linear output unit can reduce the problem hardness.

Throughout this paper, we will assume the data to be i.i.d. with common distribution $\cD$, namely
$(\bx_i,y_i)\sim\cD$. A rapidly growing literature develops algorithms and rigorous guarantees to learn such a model, see e.g. 
 \cite{janzamin2015beating, soudry2016no, soltanolkotabi2018theoretical, safran2016quality, freeman2016topology, ge2017learning, zhong2017recovery}
and the brief overview in Section \ref{subsec:related}. These papers analyze the landscape of empirical risk minimization for 
the model (\ref{eq:defhaty}), or its variants. Under suitable assumptions on the data distribution $\cD$ (as well as the parameters $d,r,n$)
they develop algorithms that are guaranteed to recover the weights $\hbw_1,\dots,\hbw_r$ with small training error.

In this paper we consider the complementary question, and use a reduction from tensor decomposition 
to provide evidence  that --in certain regimes, and for certain data distributions $\cD$-- the model (\ref{eq:defhaty}) cannot be learnt in polynomial time. 
Let us emphasize two important aspects of our results:
\begin{itemize}
\item Our impossibility results are entirely computational, and do not depend on the data distribution $\cD$. Indeed, they hold
even if we have access to an infinite sample. (More accurately, they hold under a stronger model that allows us to compute expectations with respect to $\cD$).
\item Earlier work has proven computational hardness for simpler problems than the neural network (\ref{eq:defhaty}). For instance, \cite{daniely2016complexity}
proves hardness for learning a single linear classifier. However these proofs are based on the construction of special distributions  $\cD$ that
are are unknown to the learner. Here instead we consider a `natural' class of distributions $\cD$ that is in fact normally assumed in works
estabilishing positive guarantees. This point of view  is similar to the one recently developed in \cite{shamir2016distribution} although our methods and results are quite different.
\end{itemize}

As mentioned above, our results are conditional on a complexity-theoretic assumption for tensor decomposition, i.e. the problem of recovering the weights $\{\bw_i\}_{1\le i\le r}$ 
given access to the $k$-th order tensor $\bT^{(k)} = \sum_{i=1}^r \bw_i^{\otimes k}$. We state this assumption explicitly below, for the case of tensors of order $k=3$.
\begin{conjecture}[$\epsilon$-Hardness of 3-Tensor Decomposition]\label{conj:tensor}
The following holds for some $\eps_0>0$, and all $\delta>0$.
Define a distribution $\cW_{d,r}$ over the weights  $\bw = (\bw_i)_{1\le i \le r}\in (\reals^{d})^r$, by letting
\begin{equation}
\bw_i =\frac{\bg_i -\frac{1}{r}\sum_{j=1}^r \bg_j}{\norm{\bg_i -\frac{1}{r}\sum_{j=1}^r \bg_j}}, \qquad \forall\,i\in [r],\label{eq:WeightDistr}
\end{equation}
where $\{\bg_i\}_{1\le i \le r}\sim_{\rm i.i.d.}\normal (\b0_d, \bI_d/d)$.  Set $\bT(\bw_1, \ldots, \bw_r)= \sum_{i=1}^r \bw_i^{\otimes 3}$. 
Assume $r = r(d)\ge d^{(3/2)+\delta}$ and $\eps<\eps_0$.

Then there is no algorithm  $\cA$ that, given as input $\bT(\bw)$, with $\bw= (\bw_i)_{1\le i \le r}\sim\cW_{d,r}$ fulfills the following two properties:
\begin{itemize}
\item[\sf (P1)] $\cA$ outputs $\{\hat{\bw}_i\}_{1\le i\le r}$ of unit norm such that, with probability at least $1/2$,
 for some $i, j\in [r]$, $|\left \langle \bw_i, \hat{\bw}_j\right \rangle| \ge \epsilon$.
\item[\sf (P2)] $\cA$ has complexity bounded by a polynomial in $d$.
\end{itemize}
\end{conjecture}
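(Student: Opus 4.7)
The statement is a conjecture, so my plan cannot be unconditional; the goal is a reduction \emph{to} Conjecture~\ref{conj:tensor} from an established average-case hardness assumption whose threshold matches the rank cutoff $r \gg d^{3/2}$. The canonical candidate is refutation of random 3-XOR at $m=d^{3/2+\delta}$ clauses (Feige's regime), or equivalently Tensor PCA at signal-to-noise ratio $o(d^{3/4})$; both are widely believed hard and both exhibit SoS lower bounds at exactly the $d^{3/2}$ threshold appearing in the statement, which is what makes the parameter match plausible.

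The reduction I would build proceeds in three parts. \textbf{(a)~Embed the signal.} Given a random 3-XOR instance $\Psi$ on $d$ variables, possibly planted with an assignment $\sigma \in \{\pm 1\}^d$, form the moment tensor $\bT_\Psi = c\,\bar\sigma^{\otimes 3} + \mathbf{N}_\Psi$, where $\bar\sigma = \sigma/\sqrt{d}$ and $\mathbf{N}_\Psi$ is the aggregated clause-noise tensor. \textbf{(b)~Match the noise to $\bT(\bw)$ with $\bw \sim \cW_{d,r}$.} Perform a Gaussian-universality step (Lindeberg swap on the independent clause directions) to replace discrete $\pm 1$ signs by Gaussian directions, then undo the empirical-mean subtraction in \eqref{eq:WeightDistr} by a rank-one correction of tensor-operator norm $O(1/\sqrt r)$, which is negligible in the regime of interest. \textbf{(c)~Lift recovery to refutation.} Feed the resulting tensor to the hypothetical algorithm $\cA$; any unit vector $\hbw$ returned with $|\langle \bar\sigma, \hbw\rangle| \ge \epsilon$ identifies $\sigma$ up to a small Hamming ball, after which a polynomial-time local search produces the refutation witness, contradicting the source hypothesis.

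The principal obstacle is part~(c), which must convert a \emph{recovery} guarantee for a pure (unplanted) tensor problem into a \emph{detection} guarantee for a planted problem. Standard average-case reductions (in the spirit of Brennan--Bresler) typically require delicate planted-vs-unplanted couplings; here the coupling must leave the exchangeable law of $\cW_{d,r}$ intact while inserting a single distinguished direction, and that direction necessarily breaks exchangeability across coordinates. A secondary, arguably larger, obstacle is that the conjecture asks $\cA$ to find a component of a \emph{generic} draw from $\cW_{d,r}$, with no planted structure at all; bridging from hardness of planted recovery to hardness of generic decomposition requires either a self-reducibility argument for overcomplete tensors (unavailable at $r \gg d^{3/2}$ by any currently known technique) or an alternative reduction that embeds the hard instance inside a typical sample from $\cW_{d,r}$ without perturbing its marginal law. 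I expect this planted-to-generic transfer to be the genuine technical heart of the proof, and the reason the authors have chosen to state the claim as a conjecture rather than a theorem.
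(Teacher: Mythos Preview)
The paper does not prove this statement: it is stated as Conjecture~\ref{conj:tensor} and used throughout as an \emph{assumption} on which the hardness results (in particular Theorem~\ref{th:hardgen_Gnew}) are conditioned. The only support the paper offers is the informal remark that the best known tensor-decomposition algorithms, based on or matching the sum-of-squares hierarchy, fail above the $d^{3/2}$ threshold, together with citations to SoS lower bounds for related problems. There is no reduction, no coupling argument, and no attempt to derive the conjecture from 3-XOR refutation or Tensor PCA hardness.

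Your proposal therefore goes well beyond what the paper does. You have correctly identified that the statement is a conjecture and that any ``proof'' would have to be conditional on some other average-case assumption; you have also correctly diagnosed the two main obstacles in the reduction you sketch, namely (i) the detection-to-recovery gap and (ii) the planted-to-generic transfer. These are exactly the reasons no such reduction is known, and the paper makes no attempt to overcome them. In short: there is nothing to compare against, and your assessment that the planted-to-generic step is the real bottleneck is sound, but for the purposes of this paper the statement is simply taken as a hypothesis rather than established.
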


Tensor decomposition has been studied by a number of authors, and the best known algorithms are based on (or match the guarantees of) the sum-of-squares (SoS) hierarchy
 \cite{hopkins2016fast, ma2016polynomial, schramm2015low}. The above assumption amounts to conjecturing that no algorithm can beat SoS for this 
problem\footnote{An important technical remark is that we assume it is impossible to estimate even a single component of $\bT$. This is motivated by the remark that
 in all existing algorithmic approaches for tensor decomposition, the problems of learning a single component and of learning all components are either 
both solvable or both unsolvable, e.g., see \cite{ma2016polynomial, schramm2017fast}. It is also easy to see that they are equivalent if we demand \emph{exact}
reconstruction of the weights.}.
We limit ourselves to noticing that SoS appears to capture computational boundaries in a number of similar statistical problems 
\cite{barak2014sum,hopkins2015tensor,barak2015dictionary,barak2016nearly,hopkins2017power}.
\begin{theorem}\label{th:hardgen_Gnew}
Let  $\sigma(x) = a_0+a_1 x+a_3x^3$ for some $a_0, a_1,a_3\in \mathbb R$ and 
denote by $\cN(d,r)$ the set of functions $\hy(\,\cdot\, ;\hbw) :\reals^d\to\reals$ of the form (\ref{eq:defhaty}) where $\|\hbw_1\|_2=\dots=\|\hbw_r\|_2=1$.
Assume $r=r(d)$ to be such that $d^{(3/2)+\delta}\le  r\le d^{2-\delta}$ for some $\delta>0$. Then, under Conjecture \ref{conj:tensor}, there
 exists $\eta(r,d)\to 0$ as $d\to\infty$ such that the following holds.

Let $\bw = (\bw_j)_{j\le r}\sim\cW_{d,r}$ be random weights, see Eq.~(\ref{eq:WeightDistr}).
Consider data $\{(\bx_i,y_i)\}_{i\le n}$ with common distribution $\cD$ defined by $\bx_i\sim \normal(\b0_d,\bI_d)$
and $y_i =y(\bx_i)= \hy(\bx_i;\bw)$, with $\hy(\bx;\bw)$ given by \eqref{eq:defhaty}. In particular,
\begin{equation}
\min_{\hy(\,\cdot\,) \in\cN(d,r)}\E_{\cD}\left\{|y(\bx)- \hy(\bx)|^2\right\}  =0\, .
\end{equation}

However, for any polynomial-time algorithm $\cP$ that takes as input $\{(\bx_i,y_i)\}_{i\le n}$ and returns a function $\hy_{\cP}\in \cN(d,r)$, we have that
\begin{equation}\label{eq:lbgen_cor_G}
\E_{\cD}\big\{|y(\bx)-\hy_{\cP}(\bx)|^2\big\}\ge\Var\left\{y(\bx)\right\}\, (1-\eta(r, d))\, ,
\end{equation}
with high probability with respect to $\bw \sim\cW_{d,r}$. 
\end{theorem}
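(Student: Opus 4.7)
The plan is to prove the statement by contrapositive, via a reduction from the $3$-tensor decomposition problem of Conjecture~\ref{conj:tensor}. Assuming for contradiction a polynomial-time $\cP$ that, for some constant $\eta_0>0$, achieves $\E_\cD|y-\hy_\cP|^2\le(1-\eta_0)\Var(y)$ with probability bounded away from $0$ over $\bw\sim\cW_{d,r}$, I construct from $\cP$ a polynomial-time algorithm that outputs unit vectors $\{\hbw_j\}$ recovering some component of $\bT(\bw)$ with correlation at least $\eps_0$, contradicting the conjecture.

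The reduction rests on a clean Hermite identity. Writing $\sigma(x)=a_0+(a_1+3a_3)H_1(x)+a_3H_3(x)$ and using $\|\bw_i\|=\|\hbw_j\|=1$, one has
\[ y(\bx)=r a_0+(a_1+3a_3)\langle\bx,\bv\rangle+a_3\sum_i H_3(\langle\bw_i,\bx\rangle),\qquad \bv\coloneqq\sum_i\bw_i, \]
and the analogous expression for $\hy$ in terms of $\hat\bv=\sum_j\hbw_j$ and $\hat\bT=\sum_j\hbw_j^{\otimes 3}$. Orthogonality of Hermite polynomials of distinct degrees, together with the identity $\E[H_3(\langle\bu,\bx\rangle)H_3(\langle\bu',\bx\rangle)]=6\langle\bu,\bu'\rangle^3$ for unit $\bu,\bu'$, then gives
\[ \E|y-\hy|^2=(a_1+3a_3)^2\|\bv-\hat\bv\|^2+6a_3^2\|\bT-\hat\bT\|_F^2,\qquad \Var(y)=(a_1+3a_3)^2\|\bv\|^2+6a_3^2\|\bT\|_F^2. \]
Under $\cW_{d,r}$ the exact centering $\sum_i(\bg_i-\bar\bg)=\bzero$ and the concentration $\|\bg_i-\bar\bg\|=1+O_P(d^{-1/2})$ give $\|\bv\|^2=O(r/d)$ and $\|\bT\|_F^2=r(1+o(1))$, so throughout the regime $d^{3/2+\delta}\le r\le d^{2-\delta}$ the cubic Hermite part dominates both sides.

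The reduction proper is as follows: given the input tensor $\bT$, draw $\bx\sim\normal(\bzero_d,\bI_d)$ and return $\tilde y(\bx)\coloneqq r a_0+a_3\langle\bT,\bx^{\otimes 3}\rangle$. This differs from a genuine $\cD$-sample only by $a_1\langle\bx,\bv\rangle$, whose variance $O(r/d)$ is a vanishing fraction of $\Var(y)\asymp r$ and can be absorbed into an $o(\eta_0)$ slack in the hypothesis on $\cP$. Feeding $\{(\bx_i,\tilde y_i)\}$ to $\cP$ returns $\{\hbw_j\}$, and combining the MSE bound with the Hermite identity yields
\[ \langle\bT,\hat\bT\rangle=\sum_{i,j}\langle\bw_i,\hbw_j\rangle^3\ge\tfrac{1}{2}\bigl(\eta_0-o(1)\bigr)\|\bT\|_F^2=\Omega(\eta_0\, r). \]

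The hard part, and what I expect to be the main obstacle, is the extraction step: promoting $\sum_{i,j}\langle\bw_i,\hbw_j\rangle^3=\Omega(\eta_0 r)$ to the existence of a single pair $(i,j)$ with $|\langle\bw_i,\hbw_j\rangle|\ge\eps_0$ for an absolute constant. A straightforward H\"older bound using $\sum_{i,j}\langle\bw_i,\hbw_j\rangle^2\le r\|\bW\|_{\mathrm{op}}^2=O(r^2/d)$ (valid for random $\bw$ and arbitrary unit $\hbw_j$) only yields a correlation of order $\eta_0 d/r$, which vanishes throughout our regime. Bridging this gap requires exploiting the rank-$r$ symmetric-cube structure of $\hat\bT$: the unit-norm constraint on $\hbw_j$ lower-bounds $\|\hat\bT\|_F^2$ and tightens the inequality on $\langle\bT,\hat\bT\rangle$; and a polynomial-time post-processing that contracts each $\hbw_j$ against the \emph{known} input $\bT$ to form $\bT(\hbw_j,\hbw_j,\cdot)=\sum_i\langle\bw_i,\hbw_j\rangle^2\bw_i$, followed by thresholding, can amplify any partial signal from $\cP$ into the required constant correlation. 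Granted this extraction, composition with $\cP$ gives a polynomial-time $3$-tensor decomposer succeeding with probability $\ge 1/2$; since the contradiction applies for every constant $\eta_0>0$, the best polynomial-time improvement over the trivial predictor must vanish as $d\to\infty$, yielding the desired $\eta(r,d)\to 0$.
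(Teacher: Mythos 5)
Your overall architecture coincides with the paper's (simulate the training data from the tensor, expand the risk in Hermite polynomials so it splits into a degree-one and a degree-three part, and argue that beating the trivial predictor forces a large cross term $\sum_{i,j}\langle\bw_i,\hbw_j\rangle^3$), and your Hermite computation is a correct special case of the paper's first step. But the proof has a genuine gap precisely at the step you flag and then assume. The entire technical content of the theorem in the regime $d^{3/2}\ll r\ll d^2$ is the uniform high-probability bound of Lemma~\ref{lemma:corrweights_G} (built on Lemmas~\ref{lemma:corrweights_Gbis} and~\ref{lemma:corrweights_Gall}): with high probability over $\bw\sim\cW_{d,r}$, \emph{simultaneously for all} unit vectors $\hbw_1,\dots,\hbw_R$ with $\max_{i,j}|\langle\bw_i,\hbw_j\rangle|\le\eps=o(1)$, one has $\sum_{i,j}\langle\bw_i,\hbw_j\rangle^3\le R\cdot o(1)$ whenever $r=o(d^2/(\log d)^2)$; this is proved by truncating the cubic, a Chernoff bound, and an epsilon-net over the sphere, and it is exactly what replaces the worst-case H\"older bound that you correctly observe fails for $r\gg d^{3/2}$. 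Your proposed substitute---contracting the known $\bT$ against each $\hbw_j$ to form $\bT(\hbw_j,\hbw_j,\cdot)=\sum_i\langle\bw_i,\hbw_j\rangle^2\bw_i$ and ``thresholding'' to amplify a sub-constant correlation to a constant one---is not an argument: if all correlations are of order $\eta_0 d/r\ll d^{-1/2}$, that contraction is a sum of $r$ weakly and comparably weighted vectors and has no reason to align with any single $\bw_i$; a routine that provably boosted such signal to constant correlation would itself be an overcomplete tensor-decomposition algorithm, i.e., the object conjectured not to exist. Note also that constant correlation is not needed: Conjecture~\ref{conj:tensor} is assumed for every $\eps<\eps_0$, so (as in the paper, via Theorem~\ref{th:lowergen_G} with $R=r$) it suffices to show that beating the trivial risk forces the output outside $\hat{\mathcal S}_{\eps}$ for a suitable slowly vanishing $\eps$---but that conclusion is exactly the uniform probabilistic bound you have not supplied.

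A second, fixable flaw is in your simulation step. You feed $\cP$ labels $\tilde y(\bx)=ra_0+a_3\langle\bT,\bx^{\otimes 3}\rangle$, which are not distributed according to $\cD$ (the term $a_1\langle\bx,\sum_i\bw_i\rangle$ is missing), while the hypothesis on $\cP$ guarantees its risk bound only on data drawn from $\cD$ itself; ``the discrepancy has small variance'' does not transfer the guarantee, since nothing constrains the behavior of $\cP$ off-distribution (indeed the conditional law of the label given $\bx$ is simply different). The paper avoids this in the proof of Theorem~\ref{th:comp}: since the $\bw_i$ have unit norm, contracting two indices of $\bT^{(3)}$ recovers $\bT^{(1)}=\sum_i\bw_i$ exactly, see Eq.~\eqref{eq:tensred}, so $y(\bx)$ can be computed exactly from the input tensor and the simulated samples are genuinely distributed as $\cD$.
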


 A few remarks are in order.
\begin{remark}
The right-hand side of Eq.~(\ref{eq:lbgen_cor_G}) is the risk of a trivial model that always predicts $y$ with its expectation. 
Hence, Theorem \ref{th:hardgen_Gnew} implies that, under the data distribution $\cD$, no polynomial algorithm can 
predict the response better than a trivial predictor that assigns to each example the same response $\E(y)$.
Notice that this lower bound is independent of $n$, and in fact we prove it under a more powerful model, whereby the algorithm $\cP$
is given access to an oracle that computes expectations with respect to $\cD$.  

On the other hand, under unbounded computation,  it is possible to find a neural network of the form \eqref{eq:defhaty},
with zero test error.
\end{remark}

\begin{remark}
A large part of the theoretical literature adopts the same model of the above theorem, namely
random Gaussian features $\bx\sim \normal(\b0_d,\bI_d)$, and data generated according to a two-layer network with 
random weights, see e.g. \cite{janzamin2015beating,ge2017learning,zhong2017recovery,soltanolkotabi2018theoretical}.
Our theorem implies that within the assumptions of these papers, $r\ll d^{3/2}$ is a computational hardness barrier
(under the stated conjecture on tensor decomposition).

Note that several of these papers use tensor decomposition procedures as a key subroutine (typically to initialize the
weights before a gradient descent phase). Theorem \ref{th:hardgen_Gnew}
implies that the appearance of tensor decomposition in these algorithms is a consequence of a fundamental connection between the two problems.
\end{remark}

In the rest of this introduction we provide a brief overview of related work.
We then present our technical contributions. In Section \ref{sec:gen}, we show that, if we cannot estimate the weights $\{\bw_i\}_{1\le i \le r}$ accurately, then 
the error $\E\{|y(\bx)-\hy(\bx)|^2\}$, typically called generalization error\footnote{The term `generalization error' is often used interchangeably with `risk' and it refers to the expected loss of a prediction rule also in the realizable case, see \cite{bousquet2002stability, zhong2017recovery} and \cite[pp. 34-35]{shalev2014understanding}.}, of the predictor $\hat{y}(\bx)$ is close to that of a trivial predictor. We prove this result in two separate settings:
for deterministic and for random weights $(\bw_i)_{i\le r}$.
In Section \ref{sec:learn}, we present reductions from the problem of tensor decomposition to the problem of estimating the weights $\{\bw_i\}_{1\le i \le r}$
in the two-layer neural network model.
By combining these two results,  in Section \ref{sec:disc} we present  reductions from the problem of tensor decomposition to the problem of learning a two-layer neural network
with small error $\E\{|y(\bx)-\hy(\bx)|^2\}$. These results generalize Theorem \ref{th:hardgen_Gnew} in two directions: we consider non-random weights, and 
 a broader set of polynomial activation functions $\sigma(\,\cdot\,)$. Finally, in Section \ref{sec:simu}, we present numerical experiments supporting our theoretical findings. 

In summary, we consider a popular model for theoretical research (random two-layer neural network with Gaussian feature vectors)
and show that: $(i)$ learning in this model requires accurate weight estimation;
 and $(ii)$ the latter requires solving a tensor decomposition problem, which is computationally expensive.
A promising direction of research would be to understand whether these conclusions can be avoided by considering different generative models.

\subsection{Related Work} \label{subsec:related}

Several recent papers provide recovery guarantees for neural network models, and what follows is a necessarily incomplete overview. 
In \cite{arora2014provable}, the weights are assumed to be sparse and random, and the proposed algorithm learns almost all the models in this class 
with polynomial sample complexity and computational complexity. In \cite{brutzkus2017globally}, the authors consider a two-layer neural network with convolutional 
structure, no overlap\footnote{The filter of the convolutional neural network is applied to non-overlapping parts of the input vector.}, and ReLU activation function. It is shown that learning is NP-complete in the worst case, but gradient descent converges to the global optimum in polynomial time when the input distribution is Gaussian. 
A similar positive result, i.e., convergence to the global optimum of gradient descent  with polynomial complexity and Gaussian input, 
is proved in \cite{tian2017symmetry}. In this work, the author considers a two-layer neural network model of the form \eqref{eq:defhaty}, where $\sigma$ is a 
ReLU activation function and the weights $\{\bw_i\}_{1\le i \le r}$ are orthogonal (which implies that $r\le d$). However, \cite{tian2017symmetry} requires a good initialization 
and does not discuss initialization methods. In \cite{panigrahy2018convergence}, the authors design an activation function that guarantees provable learning, 
but the proposed algorithm runs in $d^{O(d)}$. In \cite{sedghi2015provable}, the subspace spanned by the weight matrix is provably recovered with a tensor 
decomposition  algorithm, and the weights can also be recovered under an additional sparsity assumption. The works 
\cite{brutzkus2017globally, tian2017symmetry, sedghi2015provable} consider only the population risk and do not give bounds on the sample complexity. 
The paper \cite{janzamin2015beating} presents a tensor based algorithm that learns a two-layer neural network with  sample complexity of order $d^3\cdot {\rm poly}(r)/\varepsilon^2$, where $\varepsilon$ is the precision. In \cite{zhong2017recovery}, a tensor initialization algorithm is combined with gradient descent to obtain a procedure with sample complexity of order $d\cdot {\rm poly}(r)\cdot \log(1/\varepsilon)$ and computational complexity $n\cdot d\cdot {\rm poly}(r)\cdot \log(1/\varepsilon)$, where $n$ is the number of samples and it is assumed that $r\le d$. The connection between tensors and neural networks is also studied in \cite{ge2017learning}.

As mentioned above, several hardness results are available for training neural networks or even simple
linear classifiers \cite{blum1989training, bartlett1999hardness, kuhlmann2000hardness, vsima2002training, daniely2016complexity}. However, these
results rely on special constructions of the distribution $\cD$. In contrast here, we consider a specific class of distributions that has been 
frequently studied in the algorithms literature, in order to estabilish rigorous guarantees.
Similar in spirit to our results is the recent work of Ohad Shamir \cite{shamir2016distribution}
which considers data generated according to the model (\ref{eq:defhaty}) with smooth distributions of the feature vectors $\bx$,
and periodic activation functions (while we consider low-degree polynomials). Apart from technical differences in the model definition,
our results are different and complementary to the ones of  \cite{shamir2016distribution}. While \cite{shamir2016distribution} analyzes
a specific class of `approximate gradient' algoritms, we prove a general hardness result, conditional on a complexity-theoretic assumption.

\section{Preliminaries} \label{sec:prel}

\subsection{Notation and System Model} \label{subsec:sys}

Let $[n]$ be a shorthand for $\{1, \ldots, n\}$. Let $\b0_n$ and $\boldsymbol 1_n$ denote the vector consisting of $n$ 0s and $n$ 1s, respectively, and let $\bI_n$ denote the $n\times n$ identity matrix. Given a vector $\bx\in \mathbb R^n$, we let $x(i)$ be its $i$-th element, where $i\in [n]$, and $\norm{\bx}$ be its $\ell_2$ norm. Given a matrix $\bA$, we let $\bA^{\sT}$ be its transpose, ${\rm Tr}(\bA)$ be its trace, $\norm{\bA}_F$ be its Frobenius norm, and $\norm{\bA}_{\rm op}$ be its operator norm. We use $\bA\otimes \bB$ to denote the Kronecker product of $\bA$ and $\bB$, and $\bA^{\otimes k}$ as a shorthand for $\bA\otimes \cdots \otimes \bA$, where $\bA$ appears $k$ times. We also set $\bA^{\otimes 0}=1$. Given two $k$-th order tensors $\bx, \by \in ({\mathbb R}^d)^{\otimes k}$, we let $\left \langle \bx, \by \right \rangle = \sum_{i_1,  \ldots, i_k=1}^d x(i_1, \ldots, i_k)  \cdot y(i_1, \ldots, i_k)$ be their scalar product. Given a $k$-th order tensor $\bx \in ({\mathbb R}^d)^{\otimes k}$, we let $\norm{\bx}_F=\sqrt{\left \langle \bx, \bx \right \rangle}$ be its Frobenius norm. Given an integer $k$, we denote by ${\rm par}(k)$ its parity, i.e., we set ${\rm par}(k)$ to $0$ if $k$ is even and to $1$ if $k$ is odd. Given a polynomial $f$, we denote by ${\rm deg}(f)$ its degree. If $f$ is either even or odd, we denote by ${\rm par}(f)$ its parity, i.e., we set ${\rm par}(f)$ to $0$ if $f$ is even and to $1$ if $f$ is odd. Given a function $\sigma$ in the weighted $L^2$ space\footnote{$L^2(\mathbb R, e^{-x^2/2})= \left\{\sigma : \int_{\mathbb R} |\sigma(x)|^2 e^{-x^2/2}\,{\rm d}x<\infty\right\}$.} $L^2(\mathbb R, e^{-x^2/2})$, we denote by $\hat{\sigma}_k$ its $k$-th Hermite coefficient. It is helpful to write explicitly the formulas to compute $\hat{\sigma}_1$ and $\hat{\sigma}_2$:
\begin{equation}\label{eq:her12}
\hat{\sigma}_1 = {\mathbb E}_{G\sim \normal(0, 1)}\left\{G\cdot \sigma(G)\right\},\qquad
\hat{\sigma}_2 = \frac{1}{\sqrt{2}}{\mathbb E}_{G\sim \normal(0, 1)}\left\{(G^2-1)\sigma(G)\right\}.
\end{equation}

Throughout the paper, we consider a two-layer neural network with input dimension $d$ and $r$ hidden nodes with weights $\bw = (\bw_i)_{1\le i \le r}\in (\reals^{d})^r$. We denote the input by $\bx\in \mathbb R^d$ and the output by $y(\bx; \bw)\in \mathbb R$, which is defined by
\begin{equation}\label{eq:defy}
y(\bx;\bw) = \sum_{i=1}^r \sigma(\left \langle \bx, \bw_i\right \rangle)\, .
\end{equation}
We will often omit the argument $\bw$ from $y$. Given $n$ samples from the neural network, we obtain the estimates $\{\hat{\bw}_i\}_{1\le i \le r}$ on the weights $\{\bw_i\}_{1\le i \le r}$, which allows us to construct $\hat{y}(\bx)$ given by \eqref{eq:defhaty}.

Two error metrics can be considered. A stronger requirement is to learn accurately (up to a permutation) the weights. More formally, we require that the \emph{estimation error} defined below is small:
\begin{equation}\label{eq:quantsmall1}
\min_{\pi}\sum_{i=1}^r\norm{\bw_i-\hat{\bw}_{\pi(i)}}^2,
\end{equation}
where the minimization is with respect to all permutations $\pi:[r]\to[r]$. If we assume that the vectors $\{\bw_i\}_{1\le i \le n}$ and $\{\hat{\bw}_i\}_{1\le i \le n}$ have unit norm, then the quantity in \eqref{eq:quantsmall1} is small if and only if the following quantity is large:
\begin{equation}\label{eq:quantsmall2}
\max_{\pi}\sum_{i=1}^r \left \langle\bw_i, \hat{\bw}_{\pi(i)}\right \rangle.
\end{equation}
A weaker requirement is to predict accurately the output of the network. More formally, we require that the \emph{generalization error} defined below is small:
\begin{equation}\label{eq:quantsmall3}
{\mathbb E} \left\{|y(\bx)-\hat{y}(\bx)|^2\right\},
\end{equation}
where the expectation is with respect to the distribution of $\bx$. Our results of Section \ref{sec:gen} prove that these two requirements are equivalent when $\bx$ is Gaussian: if the stronger requirement does not hold, i.e., the correlation \eqref{eq:quantsmall2} is small, then also the weaker requirement does not hold, i.e., the generalization error \eqref{eq:quantsmall3} is large.

\subsection{Tensor Decomposition} \label{subsec:tensor}

Tensors are arrays of numbers indicized by multiple integers and they can be regarded as a generalization of matrices (indicized by two integers) and vectors (indicized by a single integer). Similarly to the problem of learning a neural network, many problems involving tensors (e.g., the computation of the rank or the spectral norm) are NP-hard in the worst case \cite{haastad1990tensor, hillar2013most}. However, recent work has focused on the development of provably efficient algorithms, especially for low-rank tensor decompositions, by making suitable assumptions about the input and allowing for approximations \cite{pmlr-v40-Anandkumar15, AGJ17, Ge2015DecomposingO3, hopkins2015tensor, hopkins2016fast, barak2015dictionary, ma2016polynomial, schramm2017fast}. 

The typical setting for the problem of tensor decomposition is as follows. Let $\bw_1, \ldots, \bw_r\in \mathbb R^d$ be vectors of unit norm and, for $k\ge 3$, define the $k$-th order tensor $\bT^{(k)}$ as 
\begin{equation}\label{eq:deftns}
\bT^{(k)} = \sum_{i=1}^r \bw_i^{\otimes k}.
\end{equation}
Given a subset of tensors $\{\bT^{(k)}\}_{3\le k \le \ell}$, the objective is to recover the vectors $\bw_1, \ldots, \bw_r$.

A classical algorithm based on matrix diagonalization \cite{harshman1970foundations, de1996blind} solves the tensor decomposition problem when $\bw_1, \ldots, \bw_r$ are linearly independent and $\ell \ge 3$. The requirement that $\bw_1, \ldots, \bw_r$ are linearly independent immediately implies that $r\le d$. Recent works have focused on the overcomplete case, in which $r > d$. The best algorithms are based on (or match the guarantees of) the SoS hierarchy and these results are reviewed below. 

\noindent {\bf Random vectors.} Assume that $\bw_1, \ldots, \bw_r$ are chosen independently at random from the unit sphere in $\mathbb R^d$. Then, with high probability, tensor decomposition can be solved given $\bT^{(3)}$ and $r$ as large as $d^{3/2}$ (up to logarithmic factors), see Theorem 1.2 in \cite{ma2016polynomial}.

\noindent {\bf Separated unit vectors.} Assume that $\bw_1, \ldots, \bw_r$ have at most $\delta$-correlation, i.e., for any $i, j\in [r]$ with $i\neq j$, $|\left \langle \bw_i,\bw_j\right \rangle |\le \delta$. Then, tensor decomposition can be solved given the tensors of order up to $\log r/\log(1/\delta)$ \cite{schramm2015low}.

\noindent {\bf General unit vectors.} In this scenario, $\bw_1, \ldots, \bw_r$ can be any vectors in $\mathbb R^d$. Then, tensor decomposition can be approximated given the tensors of order up to ${\rm poly}(1/\varepsilon)$, where $\varepsilon$ denotes the Hausdorff distance\footnote{The Hausdorff distance between two finite sets $A$ and $B$ is equal to the maximum between $\max_{a\in A}\min_{b\in B}\norm{a-b}$ and $\max_{b\in B}\min_{a\in A}\norm{a-b}$.} between the original set of weights and the set of estimates, see Theorem 1.6 in \cite{ma2016polynomial}.

\section{Lower Bounds on Generalization Error} \label{sec:gen}

In our results, we consider a more general predictor $\hat{y}(\bx)$ given by
\begin{equation}\label{eq:defhatyR}
\hat{y}(\bx) = \sum_{i=1}^R \sigma(\left \langle \bx, \hat{\bw}_i\right \rangle),
\end{equation}
i.e., we allow the number $R$ of estimated weights to be different from the number $r$ of unknown weights. Our first theorem holds when the weights $\{\bw_i\}_{1\le i \le r}$ are separated and isotropic, and our second theorem when the weights $\{\bw_i\}_{1\le i \le r}$ are random. 

\subsection{Separated Isotropic Weights} \label{subsec:iso}

We make the following assumptions on the weights $\{\bw_i\}_{1\le i \le r}$. 

\begin{itemize}

\item[\sf (A1)] \emph{Unit norm}:
\begin{equation}
\norm{\bw_i}=1, \qquad \forall\, i\in [r].
\end{equation}

\item[\sf (A2)] \emph{At most $\delta$-correlation}:
\begin{equation}
|\left \langle \bw_i, \bw_j\right \rangle | \le \delta,\qquad \forall\, i, j \in [r], \mbox{ with }i\neq j.
\end{equation}

\item[\sf (A3)] \emph{Mean $\eta_{\rm avg}$-close to zero}:
\begin{equation}
\norm{\sum_{i=1}^r \bw_i}^2 \le \eta_{\rm avg}\cdot r.
\end{equation}

\item[\sf (A4)] \emph{Covariance $\eta_{\rm var}$-close to scaled identity}: 
\begin{equation}
\norm{\sum_{i=1}^r \bw_i \bw_i^{\sT} -\frac{r}{d} \bI_d}_{\rm op}\le \eta_{\rm var}\cdot r/d.
\end{equation}

\end{itemize}


It is simple to produce weight vectors that satisfy these assumptions. If the matrix of the weights is equal to the identity matrix, then the assumptions hold with $\delta = 0$, $\eta_{\rm avg} =1$, and $\eta_{\rm var} =0$. If we center and rescale the weights by a factor $\sqrt{d/(d-1)}$, we have that the assumptions hold with $\delta= \frac{d+1}{d(d-1)}\approx 1/d$, $\eta_{\rm avg}=0$, and $\eta_{\rm var}=2$. For $r=d+1$, we can take $\widetilde{\bW}$ to be Haar distributed conditional on $\widetilde{\bW}^{\sT}{\boldsymbol 1_{d+1}} = \b0_d$, and let $\{\bw_i\}_{1\le i \le r}$ be $\sqrt{(d+1)/d}$ times the rows of $\widetilde{\bW}$
(these are just the rotations of the vertices of the standard simplex).
 Then, the assumptions hold with $\delta = 1/d$ and $\eta_{\rm avg}=\eta_{\rm var}=0$. 
For $r>d+1$, we concatenate $r/(d+1)$ of these matrices. By doing so, we still have that $\eta_{\rm avg}=\eta_{\rm var}=0$. We expect $\delta$  to be small (say of order $1/\sqrt{d}$).

The result below, whose proof is contained in Appendix \ref{app:mainproof}, considers the case of a Gaussian input distribution and rules out a scenario in which the weights are not estimated well, but the generalization error is still small.

\begin{theorem}[Lower Bound on Generalization Error for Separated Isotropic Weights]\label{th:lowergen}
Consider a two-layer neural network with input dimension $d$, $r$ hidden nodes, and activation function $\sigma\in L^2(\mathbb R, e^{-x^2/2})$. Assume that the weights $\{\bw_i\}_{1\le i\le r}$ satisfy the assumptions {\sf (A1)}-{\sf (A4)} for positive $\delta, \eta_{\rm avg}$ and $\eta_{\rm var}$ such that $1-\delta\cdot(1+\eta_{\rm var})\cdot r/d\ge 0$. Let $y(\bx)$ and $\hat{y}(\bx)$ be defined in \eqref{eq:defy} and \eqref{eq:defhatyR}. Assume that the estimated weights $\{\hat{\bw}_i\}_{1\le i\le R}$ satisfy the assumption {\sf (A1)} and have at most $\epsilon$-correlation with the ground-truth weights $\{\bw_i\}_{1\le i\le r}$, i.e., for some $\epsilon >0$, $|\left \langle \bw_i,\hat{\bw}_j\right \rangle |\le \epsilon$, for all $i\in [r]$ and $j\in [R]$. Then, the following lower bound on the generalization error holds:
\begin{equation}\label{eq:lbgen}
{\mathbb E}\big\{|y(\bx)-\hat{y}(\bx)|^2\big\}\ge \left(\min_{a, b\in \mathbb R}{\mathbb E}\left\{\Big|y(\bx)-\big(a+b\norm{\bx}^2\big)\Big|^2\right\}-c_1\right)\left(1-c_2\right),
\end{equation}
where the expectation is with respect to $\bx\sim \normal(\b0_d,\bI_d)$ and the terms $c_1$ and $c_2$ are given by 
\begin{equation}\label{eq:defc1c2}
c_1 = 2\hat{\sigma}_1^2 \cdot \eta_{\rm avg}\cdot r +2\hat{\sigma}_2^2\cdot \eta_{\rm var}^2\cdot r^2/d,\qquad c_2 = \frac{2\epsilon\cdot(1+\eta_{\rm var})\cdot R/d}{1-\delta\cdot(1+\eta_{\rm var})\cdot r/d},
\end{equation}
with $\hat{\sigma}_1$ and $\hat{\sigma}_2$ defined in \eqref{eq:her12}. 

If we also assume that $\sigma$ is even, then \eqref{eq:lbgen} holds with $c_1$ and $c_2$ given by 
\begin{equation}\label{eq:defc1c2_extra}
c_1 = 2\hat{\sigma}_2^2\cdot \eta_{\rm var}^2\cdot r^2/d,\qquad c_2 = \frac{2\epsilon^2\cdot(1+\eta_{\rm var})\cdot R/d}{1-\delta^2\cdot(1+\eta_{\rm var})\cdot r/d}.
\end{equation}
\end{theorem}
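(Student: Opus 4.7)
The plan is to use the Hermite expansion of $\sigma$ to decompose both sides of the bound into tensor-norm quantities, then control them using the assumptions (A1)--(A4) together with the correlation bound $|\langle\bw_i,\hat{\bw}_j\rangle|\le\epsilon$. First I would write $\sigma(t)=\sum_{k\ge 0}\hat{\sigma}_k H_k(t)$ in the normalized Hermite basis, and use the identity $\E[H_j(\langle\bx,\bu\rangle)H_k(\langle\bx,\bv\rangle)]=\delta_{jk}\langle\bu,\bv\rangle^k$ (valid for unit $\bu,\bv$ and $\bx\sim\normal(\b0_d,\bI_d)$) to obtain
\begin{equation*}
\E\bigl\{|y(\bx)-\hat{y}(\bx)|^2\bigr\}=\sum_{k\ge 0}\hat{\sigma}_k^2\,\bigl\|\bT^{(k)}-\hat{\bT}^{(k)}\bigr\|_F^2,\qquad \bT^{(k)}=\sum_{i=1}^r\bw_i^{\otimes k},\quad \hat{\bT}^{(k)}=\sum_{j=1}^R\hat{\bw}_j^{\otimes k}.
\end{equation*}
The analogous formulas will give the $L^2$ inner products needed below.

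Next I would produce an upper bound on $\min_{a,b}\E\{|y(\bx)-(a+b\|\bx\|^2)|^2\}$ in terms of the $\bT^{(k)}$'s. The trivial predictor lies in the sum of the degree-$0$ Hermite subspace and the one-dimensional isotropic slice of the degree-$2$ subspace spanned by $\|\bx\|^2-d$. Choosing $a^\star,b^\star$ to kill the degree-$0$ component $r\hat{\sigma}_0$ of $y$ and the isotropic part $\hat{\sigma}_2(\bx^{\sT}Q\bx-r)/\sqrt{2}$ of its degree-$2$ component, with $Q=\sum_i\bw_i\bw_i^{\sT}$, the residual Hermite-decomposes into (i)~a degree-$1$ term of norm $\hat{\sigma}_1^2\|\sum_i\bw_i\|^2\le\hat{\sigma}_1^2\eta_{\rm avg}r$ using (A3); (ii)~a traceless degree-$2$ residual $\hat{\sigma}_2^2\|Q-(r/d)\bI_d\|_F^2\le\hat{\sigma}_2^2\,d\|Q-(r/d)\bI_d\|_{\rm op}^2\le\hat{\sigma}_2^2\eta_{\rm var}^2 r^2/d$ using (A4); and (iii)~the higher-order mass $\sum_{k\ge 3}\hat{\sigma}_k^2\|\bT^{(k)}\|_F^2$. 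Reading the resulting inequality backwards yields $\sum_{k\ge 3}\hat{\sigma}_k^2\|\bT^{(k)}\|_F^2\ge\min_{a,b}\E\{|y(\bx)-(a+b\|\bx\|^2)|^2\}-c_1$.

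For the complementary lower bound on $\E\{|y-\hat{y}|^2\}$, I would discard all Hermite indices except $k\ge 3$ and bound each surviving summand termwise. Using $\|\hat{\bT}^{(k)}\|_F^2\ge 0$ gives $\|\bT^{(k)}-\hat{\bT}^{(k)}\|_F^2\ge\|\bT^{(k)}\|_F^2-2|\langle\bT^{(k)},\hat{\bT}^{(k)}\rangle|$. I then control the cross-term via
\begin{equation*}
|\langle\bT^{(k)},\hat{\bT}^{(k)}\rangle|=\Bigl|\sum_{i,j}\langle\bw_i,\hat{\bw}_j\rangle^k\Bigr|\le \epsilon^{k-2}\sum_j\hat{\bw}_j^{\sT}Q\,\hat{\bw}_j\le \epsilon^{k-2}R\cdot(r/d)(1+\eta_{\rm var}),
\end{equation*}
combining the $\epsilon$-correlation hypothesis with (A1) for $\hat{\bw}_j$ and (A4). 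A matching lower bound on the denominator follows from (A2)--(A4): $\|\bT^{(k)}\|_F^2=r+\sum_{i\ne i'}\langle\bw_i,\bw_{i'}\rangle^k\ge r-\delta^{k-2}(\|Q\|_F^2-r)\ge r\bigl(1-\delta^{k-2}(1+\eta_{\rm var})r/d\bigr)$ after estimating $\|Q\|_F^2$ via the operator-norm bound on $Q-(r/d)\bI_d$. Both the numerator $\epsilon^{k-2}$ and the denominator are monotone in $k$, so the worst case is $k=3$, yielding $\|\bT^{(k)}-\hat{\bT}^{(k)}\|_F^2\ge(1-c_2)\|\bT^{(k)}\|_F^2$ uniformly. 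Summing against $\hat{\sigma}_k^2\ge 0$ and chaining with the previous paragraph gives \eqref{eq:lbgen}. For the even-$\sigma$ refinement, $\hat{\sigma}_k$ vanishes for odd $k$: this removes the $\hat{\sigma}_1^2\eta_{\rm avg}r$ term from $c_1$ and shifts the worst case in the ratio to $k=4$, which is exactly what replaces $\epsilon$ and $\delta$ by $\epsilon^2$ and $\delta^2$ in \eqref{eq:defc1c2_extra}.

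The main obstacle I expect is the Paragraph-2 accounting: correctly identifying $a+b\|\bx\|^2$ with its degree-$2$ Hermite representative (a scalar multiple of $\bx^{\sT}\bI_d\bx-d$), and separating the degree-$2$ component of $y$ into the isotropic part matchable by $b$ and the traceless remainder controlled by $\|Q-(r/d)\bI_d\|_{\rm op}$. Once this orthogonal decomposition is in place the rest reduces to the clean tensor-inner-product estimates of Paragraph~3, and only the condition $1-\delta(1+\eta_{\rm var})r/d\ge 0$ is required to keep the denominator of $c_2$ positive.
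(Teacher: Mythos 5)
Your proposal is correct and follows essentially the same route as the paper's proof: the Hermite/tensor identity $\E\{|y-\hat y|^2\}=\sum_k\hat\sigma_k^2\|\bT^{(k)}-\hat\bT^{(k)}\|_F^2$, dropping $k\le 2$, the cross-term bound $\sum_{i,j}\langle\bw_i,\hat\bw_j\rangle^k\le\epsilon^{k-2}(1+\eta_{\rm var})rR/d$ via {\sf (A1)}, {\sf (A4)}, the lower bound $\|\bT^{(k)}\|_F^2\ge r(1-\delta^{k-2}(1+\eta_{\rm var})r/d)$ via {\sf (A2)}, {\sf (A4)}, monotonicity in $k$ (worst case $k=3$, or $k=4$ when $\sigma$ is even), and the comparison of $\sum_{k\ge3}\hat\sigma_k^2\|\bT^{(k)}\|_F^2$ with the trivial predictor using {\sf (A3)}, {\sf (A4)}. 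The only deviation is cosmetic: by using exact orthogonality of the degree-$1$ and degree-$2$ Hermite components you obtain $c_1$ without the factors of $2$ (the paper uses $(A+B)^2\le 2A^2+2B^2$), which is a slightly stronger bound and hence implies the stated one.
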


Some remarks are of order.

\begin{itemize}

\item Note that the generalization error
\begin{equation}
\min_{a, b\in \mathbb R}{\mathbb E}\left\{\big|y(\bx)-\big(a+b\norm{\bx}^2\big)\big|^2\right\}
\end{equation}
is that of a trivial predictor having access only to the norm of the input. Hence, if the weights are not estimated well, then the generalization error is close to that of a predictor that does not really use the input.

\item The assumption that the weights $\{\bw_i\}_{1\le i\le r}$ and $\{\hat{\bw}_i\}_{1\le i\le R}$ have unit norm mainly serves to simplify the proof. On the contrary, the assumption that the weights $\{\bw_i\}_{1\le i\le r}$ are roughly isotropic is crucial. Indeed, if either {\sf (A3)} or {\sf (A4)} do not hold, then it might be possible to learn the mean vector or the covariance matrix of the weights, which could reduce the generalization error for activation functions that have a non-zero linear or quadratic component. Indeed, consider the following example: 
$\sigma(x) = x$,  $\{\bw_i\}_{i\le r}$ arbitrary, and $\hat{\bw}_i =\overline{\bw}\equiv \sum_{i=1}^r\bw_i/r$ for all $i$. Clearly, the weights are not estimated correctly. 
However, the generalization error is $0$ for any input $\bx\in \mathbb R^d$ (and is superior to the one of the trivial predictor). 

\item Let us evaluate the bound for some natural choices of the weights $\{\bw_i\}_{1\le i \le r}$.	Recall that, if $\sigma$ is even (odd), then $\hat{\sigma}_k=0$ for $k$ odd (even). If the matrix of the weights is equal to the identity matrix and $\sigma$ is even, then the generalization error of the neural network is close to that of a trivial predictor, namely, the neural network does not generalize well, as long as $\epsilon^2\cdot R/d$ is small. Suppose now that we center and rescale the weights and that we pick $\sigma$ odd. Then, the neural network does not generalize well as long as $\epsilon\cdot R/d$ is small. If the weights are the rescaled rows of $r/(d+1)$ matrices $\widetilde{\bW}$, where $\widetilde{\bW}$ is Haar distributed conditional on $\widetilde{\bW}^{\sT}{\boldsymbol 1_{d+1}} = \b0_d$, then, for any $\sigma$, the neural network does not generalize well as long as $\epsilon\cdot R/d$ and $\delta\cdot r/d$ are small. Furthermore, when $\sigma$ is even, we only require that $\epsilon^2 \cdot R/d$ and $\delta^2\cdot r/d$ are small.

\end{itemize}

\subsection{Random Weights} \label{sub:iso}

We assume that the weights $\{\bw_i\}_{1\le i \le r}$ have the following form:
\begin{equation}\label{eq:defrandomw}
\bw_i =\frac{\bg_i -\frac{1}{r}\sum_{j=1}^r \bg_j}{\norm{\bg_i -\frac{1}{r}\sum_{j=1}^r \bg_j}}, \qquad \forall\,i\in [r],
\end{equation}
where $\{\bg_i\}_{1\le i \le r}\sim_{\rm i.i.d.}\normal (\b0_d, \bI_d/d)$. The result below, whose proof is contained in Appendix \ref{app:mainproof_G}, is similar in spirit to Theorem \ref{th:lowergen} and it applies to a setting with random weights. 

\begin{theorem}[Lower Bound on Generalization Error for Random Weights]\label{th:lowergen_G}
Consider a two-layer neural network with input dimension $d$, $r$ hidden nodes, and activation function $\sigma\in L^2(\mathbb R, e^{-x^2/2})$ such that $\hat{\sigma}_2=0$, where $\hat{\sigma}_2$ is defined in \eqref{eq:her12}. Assume that the weights $\{\bw_i\}_{1\le i\le r}$ have the form \eqref{eq:defrandomw}. 
Let $y(\bx)$ and $\hat{y}(\bx)$ be defined in \eqref{eq:defy} and \eqref{eq:defhatyR}. For some $\epsilon \in (0, 1)$, define
\begin{equation}\label{eq:defShat}
\hat{\mathcal S}_\epsilon = \{\{\hat{\bw}_i\}_{1\le i\le R} : \norm{\hat{\bw}_i}=1 \,\,\,\forall\, i\in [R], |\left \langle \bw_i, \hat{\bw}_j\right \rangle |\le \epsilon \,\,\,\forall\,i\in [r]\,\,\,\forall\,j\in [R]\}.
\end{equation}
As $r,d\to\infty$, assume that
\begin{equation}\label{eq:ass_G}
\epsilon=o(1),\qquad r= o(d^2/(\log d)^2).
\end{equation}
Then, for a sequence of vanishing constants $\eta(r, d) = o(1)$, with high probability with respect to $\bw=(\bw_i)_{i\le r}$, 
\begin{equation}\label{eq:bdnew_G}
\sup_{\{\hat{\bw}_i\}_{1\le i\le R} \in \hat{\mathcal S}_\epsilon}{\mathbb E}\big\{|y(\bx)-\hat{y}(\bx)|^2\big\}\ge \left(\Var\left\{y(\bx)\right\}-r\cdot \eta(r, d)\right)\left(1-\frac{R}{r}\cdot \eta(r, d)\right),
\end{equation}
where the expectation and the variance is with respect to $\bx\sim \normal(\b0_d,\bI_d)$.
\end{theorem}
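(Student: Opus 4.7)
My plan is to reduce Theorem \ref{th:lowergen_G} to the deterministic bound of Theorem \ref{th:lowergen} by verifying that the random weights defined in \eqref{eq:defrandomw} satisfy assumptions (A1)--(A4) with high probability, with parameters $\delta$, $\eta_{\rm avg}$, $\eta_{\rm var}$ all $o(1)$. Condition (A1) is built in. For (A2), I would combine Gaussian tail bounds on $\langle \bg_i, \bg_j\rangle$ with a union bound over the $\binom{r}{2}$ pairs, noting that the centering and renormalization $\bg_i \mapsto (\bg_i - \bar{\bg})/\|\bg_i - \bar{\bg}\|$ perturbs each inner product only by a factor $1 + O(1/r)$; this yields $\delta = O(\sqrt{\log r / d})$. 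For (A3), the exact identity $\sum_i(\bg_i - \bar{\bg}) = 0$ forces $\sum_i \bw_i$ to be driven only by the fluctuations of the normalizers $\|\bg_i - \bar{\bg}\|$ around their common mean; a direct second-moment computation using $\|\bg_i - \bar{\bg}\|^2 = 1 + O(d^{-1/2})$ then gives $\eta_{\rm avg} = O(1/d)$. For (A4), I would invoke standard Wishart-type operator-norm concentration (Davidson--Szarek / Bai--Yin), yielding $\eta_{\rm var} = o(1)$.

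The assumption $\hat\sigma_2 = 0$ enables two simplifications of the bound in Theorem \ref{th:lowergen}. First, a direct calculation gives $\E[\sigma(\langle\bx, \bw_i\rangle)(\|\bx\|^2 - d)] = \sqrt{2}\,\hat\sigma_2 = 0$, so $y(\bx)$ is orthogonal (in $L^2$) to $\|\bx\|^2 - d$; hence $\min_{a,b \in \mathbb R}\E\{|y(\bx) - (a + b\|\bx\|^2)|^2\} = \Var(y(\bx))$ exactly. Second, $c_1 = 2\hat\sigma_1^2\eta_{\rm avg} r + 2\hat\sigma_2^2 \eta_{\rm var}^2 r^2 / d$ reduces to $O(r/d) = o(r)$. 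As a side computation I would verify $\Var(y(\bx)) = \Theta(r)$ via the Hermite expansion $\Var(y) = r\sum_{k \ge 1}\hat\sigma_k^2 + \sum_{k \ge 1}\hat\sigma_k^2 \sum_{i \ne j}\langle\bw_i, \bw_j\rangle^k$: the $k = 1$ contribution is cancelled by $\sum_{i \ne j}\langle\bw_i, \bw_j\rangle = \|\sum_i \bw_i\|^2 - r = -r(1 + o(1))$ via (A3), the $k = 2$ piece vanishes under $\hat\sigma_2 = 0$, and the $k \ge 3$ off-diagonal sums are $o(r)$ (see the obstacle below). Repackaging the conclusion of Theorem \ref{th:lowergen} in the form $(\Var(y) - r\eta)(1 - (R/r)\eta)$ by choosing $\eta(r, d) = \max(c_1/r,\,(r/R) c_2)$ then yields the claim after verifying $\eta = o(1)$.

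The main obstacle I anticipate is uniformly controlling the off-diagonal sums $\sum_{i \ne j}\langle\bw_i, \bw_j\rangle^k$ for all $k \ge 3$ in the full range $r = o(d^2/(\log d)^2)$. The worst-case bound $\delta^{k-2}\cdot r^2/d$ obtained from (A2) and (A4) is not tight enough once $r \gtrsim d^{3/2}$; instead one exploits that each $\langle\bw_i, \bw_j\rangle$ is approximately a centered $\normal(0, 1/d)$ variable (with $O(1/r)$ corrections induced by the centering in \eqref{eq:defrandomw}) and applies higher-moment or Chebyshev bounds directly to the sum, obtaining typical magnitudes $\tilde O(r/d^{k/2})$ for odd $k \ge 3$ and $\tilde O(r^2/d^{k/2})$ for even $k \ge 4$, both $o(r)$ under the stated scaling. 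An analogous issue arises for the cross-term $\sum_{i,j}\langle\bw_i, \hat{\bw}_j\rangle^k$ inside $c_2$, but there the randomness of $\bw$ cannot be used (since $\hat{\bw}$ may depend on $\bw$), and one must carefully balance the factors of $\epsilon$ across different values of $k$ in the Hermite expansion of $\sigma$.
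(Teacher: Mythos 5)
There is a genuine gap, and it sits exactly at the point the paper identifies as the key technical step. Your plan is to verify {\sf (A1)}--{\sf (A4)} for the random weights and then invoke Theorem \ref{th:lowergen}. But for weights of the form \eqref{eq:defrandomw} assumption {\sf (A2)} only holds with $\delta \asymp \sqrt{\log d / d}$, so the hypothesis $1-\delta(1+\eta_{\rm var})r/d\ge 0$ of Theorem \ref{th:lowergen} already fails once $r \gg d^{3/2}$; and even ignoring that, the resulting $c_2 \asymp \epsilon(1+\eta_{\rm var})R/d$ is not small in the regime of Theorem \ref{th:lowergen_G}, which only assumes $\epsilon = o(1)$ and allows $r, R$ up to $d^2/(\log d)^2$. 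You notice the analogous problem for the diagonal sums $\sum_{i\neq j}\langle \bw_i,\bw_j\rangle^k$ and correctly propose to use the randomness of $\bw$ there, but for the cross-term $\sum_{i,j}\langle \bw_i,\hat{\bw}_j\rangle^k$ you explicitly assert that ``the randomness of $\bw$ cannot be used since $\hat{\bw}$ may depend on $\bw$'' and fall back on ``balancing factors of $\epsilon$ across $k$.'' That cannot work: for $k=3$ (present whenever $\hat{\sigma}_3\neq 0$, e.g.\ the cubic activations of Theorem \ref{th:hardgen_Gnew}), the best bound obtainable from $|\langle\bw_i,\hat{\bw}_j\rangle|\le\epsilon$ together with approximate isotropy is $\epsilon(1+\eta_{\rm var})rR/d = R\cdot \epsilon r/d$, which is not $o(R)$ when $d^{3/2}\ll r \lesssim d^2$ and $\epsilon$ decays arbitrarily slowly; there is no other Hermite order to trade this against, since the $k=3$ term must be controlled on its own.

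The missing idea is that the adversarial dependence of $\hat{\bw}$ on $\bw$ is handled precisely by using the randomness of $\bw$ to prove a bound that holds \emph{uniformly} over all admissible $\hat{\bw}$ with high probability. Concretely, the paper shows that, w.h.p., $\sup_{\bx}\sum_{i=1}^r \langle \bg_i,\bx\rangle^3 = o(1)$ where the supremum runs over unit vectors with $|\langle\bg_i,\bx\rangle|\le\epsilon$ for all $i$: one truncates $t\mapsto t^3$ at level $\asymp\epsilon$ (so each summand is bounded by $\asymp\epsilon^3$), applies a Chernoff bound with $\lambda \asymp d/\epsilon$, takes a union bound over an $\epsilon$-net of ${\sf S}^{d-1}$, and controls the net-approximation error through a separate crude bound on $\max_{\bx\in{\sf S}^{d-1}}\sum_i\langle\bg_i,\bx\rangle^3$; an analogous argument handles $k\ge 4$ via the $k=4$ case, and a perturbation argument transfers the bounds from $\bg_i$ to the centered, normalized $\bw_i$ (Lemmas \ref{lemma:corrweights_Gbis}, \ref{lemma:corrweights_Gall}, \ref{lemma:corrweights_G}). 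Without this uniform concentration step your argument proves the conclusion only for $r\ll d^{3/2}$ (and, through $c_2$, only for $\epsilon\ll d/r$), not the claimed range. The remaining ingredients you outline --- the reduction of the trivial-predictor risk to $\Var\{y(\bx)\}$ using $\hat{\sigma}_2=0$, the bound $\|\sum_i\bw_i\|^2=o(r)$, and the off-diagonal Gram sums being $o(r)$ --- do match the paper's third step, but they are the easier part of the proof.
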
	

Some remarks are of order.
\begin{itemize}

\item Note that $\left\langle \bx, \bw_i\right\rangle$ is of order $1$, hence the term $\Var\left\{y(\bx)\right\}$ is of order $r$. Consequently, in the limit $r,d\to\infty$, the term $r\cdot \eta(r, d)$ is negligible compared to $\Var\left\{y(\bx)\right\}$. 

\item The hypothesis that $\hat{\sigma}_2=0$ can be removed at the cost of a less tight lower bound. For general $\sigma$, we have that 
\begin{equation}
\begin{split}
\sup_{\{\hat{\bw}_i\}_{1\le i\le R} \in \hat{\mathcal S}_\epsilon}{\mathbb E}\big\{|y(\bx)-\hat{y}(\bx)|^2\big\}&\ge \left(\min_{\substack{a\in \mathbb R\\\bA\in \mathbb R^{d\times d}}}{\mathbb E}\left\{\left|y(\bx)-\left(a+\<\bx, \bA\bx\>\right)\right|^2\right\}-r\cdot \eta(r, d)\right)\\
&\hspace{2em}\cdot \left(1-\frac{R}{r}\cdot \eta(r, d)\right).
\end{split}
\end{equation}
In fact, note that $\Var\left\{y(\bx)\right\} = \min_{a\in \mathbb R}\mathbb E\left\{|y(\bx)-a|^2\right\}$. 

\item Theorem \ref{th:lowergen_G} covers regimes different from those of Theorem \ref{th:lowergen}. Indeed, the result of this section guarantees that the generalization error of the neural network is close to that of a trivial predictor for any $\sigma$ such that $\hat{\sigma}_2=0$ and for $r$ up to $d^2$ (modulo logarithmic factors),
unless the weights are estimated `better than random', namely with a non-vanishing correlation. We also allow predictors with a number of nodes $R$ that can be larger than the number of nodes $r$ of the original neural network, as long as $R$ and $r$ are of the same order.


\end{itemize}

The key technical step in the proof is upper bounding the third-order correlation 
\begin{equation}\label{eq:3ord}
\frac{1}{R}\sum_{i\le r, j\le R}\<\bw_i,\hat{\bw}_j\>^3
\end{equation}
 uniformly over all estimates
such that $\max_{i,j}|\<\bw_i,\hat{\bw}_j\>|\le \eps$. A naive bound would be $r \eps^3$, while using the approximate isotropicity of the $\bw_i$ yields an upper bound of order $\eps \max(1,r/d)$. For $\eps \approx 1/\sqrt{d}$ this would vanish only in the regime $r\ll d^{3/2}$. In order to obtain a non-trivial result for $r\gg d^{3/2}$,
we use the randomness of the $\bw_i$, together with an epsilon-net argument and several ad-hoc estimates, which eventually yields that the quantity in \eqref{eq:3ord} is $o(1)$ under the stated assumptions.

\section{Learning a Neural Network and Tensor Decomposition} \label{sec:learn}

We now present reductions from tensor decomposition to the problem of learning the weights of a two-layer neural network. No assumption on the input distribution is necessary and the results hold for any set of inputs. Before giving the statement, let us formally define what we mean when we say that it is algorithmically hard to learn the weights $\{\bw_i\}_{1\le i\le r}$.

\begin{definition}[$\epsilon$-Hardness of Learning]\label{def:hardeps}
A weight-learning problem is defined by a triple $(\eps,\cS,f)$, where $\epsilon\in (0, 1)$, $\cS$ is a set of possible weights
\begin{equation}
\mathcal S\subseteq\{\{\bw_i\}_{1\le i\le r} : \norm{\bw_i}=1, \,  \forall\, i\in [r]\},
\end{equation}
and $f: \mathcal S\to\mathcal I$ is a function, where $\mathcal I$ denotes a set of inputs. We always assume that $r$ and the size of  $\mathcal I$ are bounded by polynomials in $d$. 

We say that the problem $(\eps,\cS,f)$ is \emph{hard} (or, the problem is \emph{$\epsilon$-hard}) if there is no algorithm $\mathcal A$ that, given as input $f(\bw_1, \ldots, \bw_r)$, fulfills the following two properties:
\begin{itemize}

\item[\sf (P1)] $\mathcal A$ outputs $\{\hat{\bw}_i\}_{1\le i\le R}$ of unit norm such that, for some $i\in [r]$ and $j\in [R]$, $|\left \langle \bw_i, \hat{\bw}_j\right \rangle| \ge \epsilon$;

\item[\sf (P2)] $\mathcal A$ has complexity which is polynomial in $d$.

\end{itemize}

\end{definition}

The result below, whose proof is contained in Appendix \ref{sec:proofhard}, provides a reduction for activation functions that are polynomials whose degree is \emph{at most} the order of the tensor to be decomposed.

\begin{theorem}[Learning a Neural Network and Tensor Decomposition]\label{th:comp}
Fix an integer $\ell\ge 3$ and let $y(\bx)$ be defined in \eqref{eq:defy}, where $\sigma$ is the activation function. For $\bx_1, \ldots, \bx_n\in \mathbb R^d$, let $\mathcal P(\bx_1, \ldots, \bx_n)$ be the problem of learning $\{\bw_i\}_{1\le i\le r}$ given as input $\{\bx_j\}_{1\le j \le n}$ and $\{y(\bx_j)\}_{1\le j \le n}$. Then, the following results hold.

\begin{enumerate}
\item Assume that, given as input the tensor $\bT^{(\ell)}$ defined in \eqref{eq:deftns}, the problem of learning $\{\bw_i\}_{1\le i\le r}\in \mathcal S$ is $\epsilon$-hard in the sense of Definition \ref{def:hardeps} for some $\epsilon>0$. Let the activation function $\sigma$ be a polynomial with ${\rm deg}(\sigma)\le \ell$ and ${\rm par}(\sigma)={\rm par}(\ell)$. Then, for any $\bx_1, \ldots, \bx_n\in \mathbb R^d$, the problem $\mathcal P(\bx_1, \ldots, \bx_n)$ is $\epsilon$-hard in the sense of Definition \ref{def:hardeps}.

\item Assume that, given as input the tensors $\bT^{(\ell)}$ and $\bT^{(\ell+1)}$ defined in \eqref{eq:deftns}, the problem of learning $\{\bw_i\}_{1\le i\le r}\in \mathcal S$ is $\epsilon$-hard in the sense of Definition \ref{def:hardeps} for some $\epsilon>0$. Let the activation function $\sigma$ be a polynomial with ${\rm deg}(\sigma)\le \ell+1$. Then, for any $\bx_1, \ldots, \bx_n\in \mathbb R^d$, the problem $\mathcal P(\bx_1, \ldots, \bx_n)$ is $\epsilon$-hard in the sense of Definition \ref{def:hardeps}.
\end{enumerate}

\end{theorem}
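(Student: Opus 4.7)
I would prove both parts by contrapositive: fix inputs $\bx_1,\ldots,\bx_n$ and assume there is a polynomial-time learner $\mathcal{A}$ for $\mathcal{P}(\bx_1,\ldots,\bx_n)$ satisfying {\sf (P1)}--{\sf (P2)}; I will use $\mathcal{A}$ to solve tensor decomposition in polynomial time. The reduction must manufacture the outputs $y_j = y(\bx_j)$ from the tensor(s) given as input, since $\mathcal{A}$ requires labeled $(\bx_j, y_j)$ pairs.

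\textbf{Key identity and reduction.} The crucial observation is that, when $\|\bw_i\|=1$, contracting two indices of the symmetric rank-$1$ tensor $\bw_i^{\otimes k}$ yields $\|\bw_i\|^2\,\bw_i^{\otimes(k-2)} = \bw_i^{\otimes(k-2)}$; summing over $i$ gives a polynomial-time map $\bT^{(k)}\mapsto\bT^{(k-2)}$. Iterating, from $\bT^{(\ell)}$ I can recover every $\bT^{(k)}$ with $k\le\ell$ of parity matching $\ell$, and from $\bT^{(\ell)}$ and $\bT^{(\ell+1)}$ jointly I can recover every $\bT^{(k)}$ for $0\le k\le \ell+1$. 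Expanding $\sigma(x)=\sum_k a_k x^k$ then yields
$$y(\bx_j) \;=\; \sum_{i=1}^r \sigma(\<\bx_j,\bw_i\>) \;=\; \sum_{k} a_k \<\bx_j^{\otimes k}, \bT^{(k)}\>.$$
In Part~1, the nonzero monomials of $\sigma$ are restricted to $k\le\ell$ with parity $\text{par}(\ell)$, and in Part~2 to $k\le\ell+1$; in both cases the tensors appearing on the right-hand side are exactly those produced by the contraction step. The reduction therefore runs: (i) recover all needed $\bT^{(k)}$ by iterated contraction; (ii) evaluate $y_j:=y(\bx_j)$ for each fixed query point via the formula above; (iii) invoke $\mathcal{A}$ on the synthesized dataset $\{(\bx_j,y_j)\}_{j\le n}$; (iv) output its result $\{\hat{\bw}_j\}_{1\le j\le R}$ as a candidate decomposition of $\bT^{(\ell)}$. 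Because the synthesized data is exactly consistent with the unknown weights $\bw_1,\ldots,\bw_r$ generating the input tensor(s), property {\sf (P1)} of $\mathcal{A}$ guarantees $|\<\bw_i,\hat{\bw}_j\>|\ge\epsilon$ for some $i,j$, which is exactly the success criterion of Definition~\ref{def:hardeps} for tensor decomposition---contradicting its $\epsilon$-hardness.

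\textbf{Main obstacle.} The only non-routine ingredient is the contraction identity above: it is what allows the reduction to simulate the neural-network output from a single (or two) tensor(s), rather than needing access to all lower-order moments of the weights. This step critically depends on the unit-norm constraint built into $\cS$, since otherwise contraction would produce a version of $\bT^{(k-2)}$ weighted by the unknown squared norms $\|\bw_i\|^2$. Everything else---expanding $\sigma$, checking that the parity/degree hypotheses of Parts~1 and~2 align the required tensors with those recoverable from the input, and confirming polynomial-time complexity of all contractions and evaluations (using that $r$ and $n$ are polynomial in $d$)---is routine bookkeeping.
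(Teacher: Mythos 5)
Your proposal is correct and follows essentially the same route as the paper's own proof: the contraction identity $\sum_j T^{(\ell)}(j,j,j_3,\ldots,j_\ell)=T^{(\ell-2)}(j_3,\ldots,j_\ell)$ (relying on the unit-norm constraint), synthesis of the labels $y(\bx_j)=\sum_k c_k\langle \bT^{(k)},\bx_j^{\otimes k}\rangle$, and invocation of the hypothesized learner to contradict the $\epsilon$-hardness of tensor decomposition. No gaps; the parity/degree bookkeeping and the polynomial bound on $n$ are handled just as in the paper.
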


In words, learning a two-layer neural network whose activation function is a polynomial of degree $\ell$ and assigned parity (i.e., either even or odd) is as hard as solving tensor decomposition given the tensor of order $\ell$ with the same parity. Furthermore, learning a two-layer neural network whose activation function is a polynomial of degree $\ell+1$ (without any assumption on its parity) is as hard as solving tensor decomposition given the tensors of order $\ell$ and $\ell+1$. In Appendix \ref{subsec:genhard}, we consider a slightly different model of two-layer neural network with an additive error term. By doing so, we can prove a reduction with activation functions that are polynomials with degree \emph{larger} than the order of the tensor.

\section{Generalization Error and Tensor Decomposition} \label{sec:disc}


We now present reductions from tensor decomposition to the problem of finding a predictor of a two-layer neural network with small generalization error. Similarly to Section \ref{sec:learn}, no assumption is necessary on the distribution of the samples given as input to the learning algorithm. However, when taking the expectation to compute the generalization error, we assume that $\bx\sim \normal(\b0, \bI_d)$. 



The corollary below considers the case of separated and isotropic weights and  its proof is readily obtained by combining the results of Theorem \ref{th:lowergen} and \ref{th:comp}. 

\begin{corollary}[Generalization Error and Tensor Decomposition for Separated Isotropic Weights]\label{cor:hardgen}
Fix an integer $\ell\ge 3$, and, for positive $\delta, \eta_{\rm avg}$ and $\eta_{\rm var}$ such that $1-\delta\cdot(1+\eta_{\rm var})\cdot r/d\ge 0$, let
\begin{equation}
\mathcal S'\subseteq\{\{\bw_i\}_{1\le i\le r} : \mbox{assumptions {\sf (A1)}-{\sf (A4)} hold}\}.
\end{equation}
We have the following results.

\begin{enumerate}

\item Assume that, given the tensor $\bT^{(\ell)}$ defined in \eqref{eq:deftns}, the problem of learning $\{\bw_i\}_{1\le i\le r}\in\mathcal S'$ is $\epsilon$-hard in the sense of Definition \ref{def:hardeps} for some $\epsilon>0$. Let $y(\bx)$ be defined in \eqref{eq:defy}, where $\sigma$ is a polynomial with ${\rm deg}(\sigma)\le \ell$ and ${\rm par}(\sigma)={\rm par}(\ell)$. Then, for any $\bx_1, \ldots, \bx_n\in \mathbb R^d$ and for any polynomial algorithm that, given as input $\{\bx_j\}_{1\le j \le n}$ and $\{y(\bx_j)\}_{1\le j \le n}$, outputs $\{\hat{\bw}_i\}_{1\le i\le R}$ of unit norm, we have that
\begin{equation}\label{eq:lbgen_2}
{\mathbb E}\big\{|y(\bx)-\hat{y}(\bx)|^2\big\}\ge \left(\min_{a, b\in \mathbb R}{\mathbb E}\left\{\Big|y(\bx)-\big(a+b\norm{\bx}^2\big)\Big|^2\right\}-c_1\right)\left(1-c_2\right),
\end{equation}
where $\hat{y}(\bx)$ is defined in \eqref{eq:defhatyR}, the expectation is with respect to $\bx\sim \normal(\b0_d,\bI_d)$ and the terms $c_1$ and $c_2$ are given by
\begin{equation}\label{eq:defc1c2_2}
c_1 = 2\hat{\sigma}_1^2 \cdot \eta_{\rm avg}\cdot r +2\hat{\sigma}_2^2\cdot \eta_{\rm var}^2\cdot r^2/d,\qquad c_2 = \frac{2\epsilon\cdot(1+\eta_{\rm var})\cdot R/d}{1-\delta\cdot(1+\eta_{\rm var})\cdot r/d},
\end{equation}
with $\hat{\sigma}_1$ and $\hat{\sigma}_2$ defined in \eqref{eq:her12}. If we also assume that $\sigma$ is even, then \eqref{eq:lbgen} holds with $c_1$ and $c_2$ given by 
\begin{equation}\label{eq:defc1c2_extra_2}
c_1 = 2\hat{\sigma}_2^2\cdot \eta_{\rm var}^2\cdot r^2/d,\qquad c_2 = \frac{2\epsilon^2\cdot(1+\eta_{\rm var})\cdot R/d}{1-\delta^2\cdot(1+\eta_{\rm var})\cdot r/d}.
\end{equation}

\item Assume that, given the tensors $\bT^{(\ell)}$ and $\bT^{(\ell+1)}$ defined in \eqref{eq:deftns}, the problem of learning $\{\bw_i\}_{1\le i\le r}\in\mathcal S'$ is $\epsilon$-hard in the sense of Definition \ref{def:hardeps} for some $\epsilon > 0$. Let $y(\bx)$ be defined in \eqref{eq:defy}, where $\sigma$ is a polynomial with ${\rm deg}(\sigma)\le \ell+1$. Then, for any $\bx_1, \ldots, \bx_n\in \mathbb R^d$ and for any polynomial algorithm that, given as input $\{\bx_j\}_{1\le j \le n}$ and $\{y(\bx_j)\}_{1\le j \le n}$, outputs $\{\hat{\bw}_i\}_{1\le i\le R}$ of unit norm, we have that \eqref{eq:lbgen_2} holds, where $\hat{y}(\bx)$ is defined in \eqref{eq:defhatyR}, the expectation is with respect to $\bx\sim \normal(\b0_d,\bI_d)$ and the terms $c_1$ and $c_2$ are given by \eqref{eq:defc1c2_2}. Furthermore, if $\sigma$ is even, then the terms $c_1$ and $c_2$ are given by \eqref{eq:defc1c2_extra_2}.
\end{enumerate}

\end{corollary}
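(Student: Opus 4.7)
The plan is to obtain Corollary 1 as a direct composition of Theorem 4 (which reduces tensor decomposition to learning the hidden weights) and Theorem 1 (which converts a bound on the weight-correlation into a lower bound on generalization error). Both parts of the corollary share the same skeleton; they differ only in which part of Theorem 4 is invoked.

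First I would argue the contrapositive. Suppose there exists a polynomial-time algorithm taking $\{\bx_j\}_{j\le n}$ and $\{y(\bx_j)\}_{j\le n}$ as input and returning a unit-norm collection $\{\hat{\bw}_i\}_{i\le R}$. Apply Theorem 4 (Part 1 in Case 1, Part 2 in Case 2): under the hypothesis that decomposing $\bT^{(\ell)}$ (resp.\ $\bT^{(\ell)}$ and $\bT^{(\ell+1)}$) over $\mathcal S'$ is $\epsilon$-hard, the weight-learning problem $\mathcal P(\bx_1,\ldots,\bx_n)$ is $\epsilon$-hard in the sense of Definition 2. By negating property {\sf (P1)} of Definition 2, this means that the output of any such polynomial-time algorithm must satisfy
\begin{equation*}
|\langle \bw_i, \hat{\bw}_j\rangle| < \epsilon \qquad \text{for all } i\in [r],\ j\in [R].
\end{equation*}
The parity/degree hypotheses line up by construction: Part 1 of the corollary uses activations with $\mathrm{deg}(\sigma)\le \ell$ and $\mathrm{par}(\sigma)=\mathrm{par}(\ell)$, matching Part 1 of Theorem 4; Part 2 only requires $\mathrm{deg}(\sigma)\le \ell+1$, matching Part 2 of Theorem 4.

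Next I would feed this correlation bound into Theorem 1. By assumption $\{\bw_i\}_{i\le r}\in \mathcal S'$, so {\sf (A1)}--{\sf (A4)} hold with parameters $\delta, \eta_{\rm avg}, \eta_{\rm var}$; the condition $1-\delta(1+\eta_{\rm var})r/d\ge 0$ is also given. The estimate $\{\hat{\bw}_i\}_{i\le R}$ has unit norm and satisfies the $\epsilon$-correlation hypothesis of Theorem 1 by the previous step. Hence Theorem 1 applies verbatim and yields the desired inequality \eqref{eq:lbgen_2} with the constants $c_1, c_2$ from \eqref{eq:defc1c2_2}. For the even-$\sigma$ refinement, I would invoke the second half of Theorem 1 (the one producing \eqref{eq:defc1c2_extra}), whose conclusion transcribes to \eqref{eq:defc1c2_extra_2}; no additional hypothesis on $\sigma$ is needed beyond evenness, which is compatible with the parity constraint in Part 1 (since one can always take $\ell$ even) and is unconstrained in Part 2.

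There is no real obstacle: this is a two-line gluing argument, and the only thing to verify carefully is that the parameter $\epsilon$ appearing in the tensor-decomposition hardness assumption is literally the same $\epsilon$ that feeds into the correlation bound of Theorem 1, and that polynomial-time reductions compose (a formality, since the reduction in Theorem 4 is of polynomial size). The only place where one might wish to be delicate is to note that Theorem 4 holds \emph{for every} choice of $\bx_1,\ldots,\bx_n$, so the conclusion of the corollary holds uniformly over all sample sets; the expectation in \eqref{eq:lbgen_2} is then taken over a fresh $\bx\sim \normal(\b0_d,\bI_d)$ as required by Theorem 1.
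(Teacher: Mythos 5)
Your proposal is correct and follows exactly the paper's intended argument: the paper obtains the corollary by combining Theorem \ref{th:comp} (hardness of weight learning via the tensor reduction, hence any polynomial-time algorithm's output has all correlations below $\epsilon$) with Theorem \ref{th:lowergen} (the correlation bound implies the generalization-error lower bound with the stated $c_1,c_2$). The only nitpick is cosmetic: in Part 1 the even-$\sigma$ refinement implicitly forces $\ell$ to be even via ${\rm par}(\sigma)={\rm par}(\ell)$, rather than $\ell$ being freely chosen.
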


A reduction for the case of random weights is contained in Theorem \ref{th:hardgen_Gnew}, stated in Section \ref{sec:intro}. Its proof follows by combining the result of Theorem \ref{th:lowergen_G} with $R=r$ with the same proof of Theorem \ref{th:comp}.   

Let us now summarize briefly some implications of our results. The discussion in Section \ref{subsec:tensor} suggests that tensor decomposition is hard in the following cases: if the weights are random vectors, given $\bT^{(3)}$ and for $r\gg d^{3/2}$; if the weights are separated unit vectors, given $\{\bT^{(k)}\}_{3\le k \le \ell}$, for fixed $\ell$ and for $r\gg d$. By setting $R=r$, in our paper we consider a model similar to that of \cite{tian2017symmetry, sedghi2015provable, janzamin2015beating, zhong2017recovery}. Our results suggest that it will be difficult to extend those recovery schemes to several interesting regimes:

\begin{enumerate}

\item $d^{3/2}\ll r\ll d^{2}$ for random weights and activation function $\sigma(x)=a_0 +a_1 x+a_3 x^3$ for some $a_0$, $a_1$, $a_3\in \mathbb R$.

\item $d\ll r\ll d/\epsilon$ for separated isotropic weights and polynomial activation function;

\item $d\ll r\ll d/\epsilon^2$ for separated isotropic weights and even polynomial activation function;

\end{enumerate}

\begin{figure*}[p]
    \centering
    \subfloat[Generalization error, $r=50$.]{\includegraphics[width=.48\columnwidth]{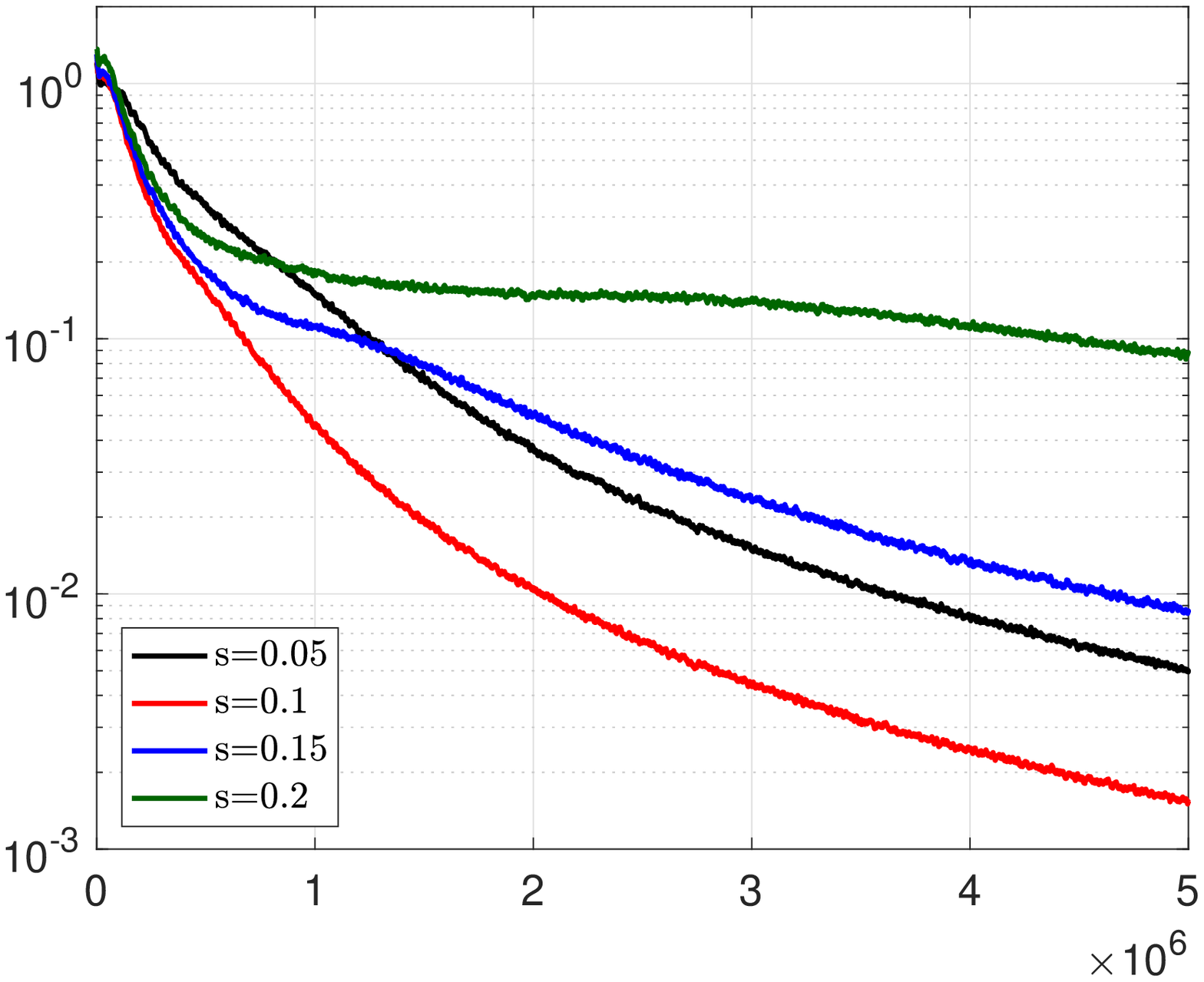}} 
    \subfloat[Weight estimation error, $r=50$.]{\includegraphics[width=.48\columnwidth]{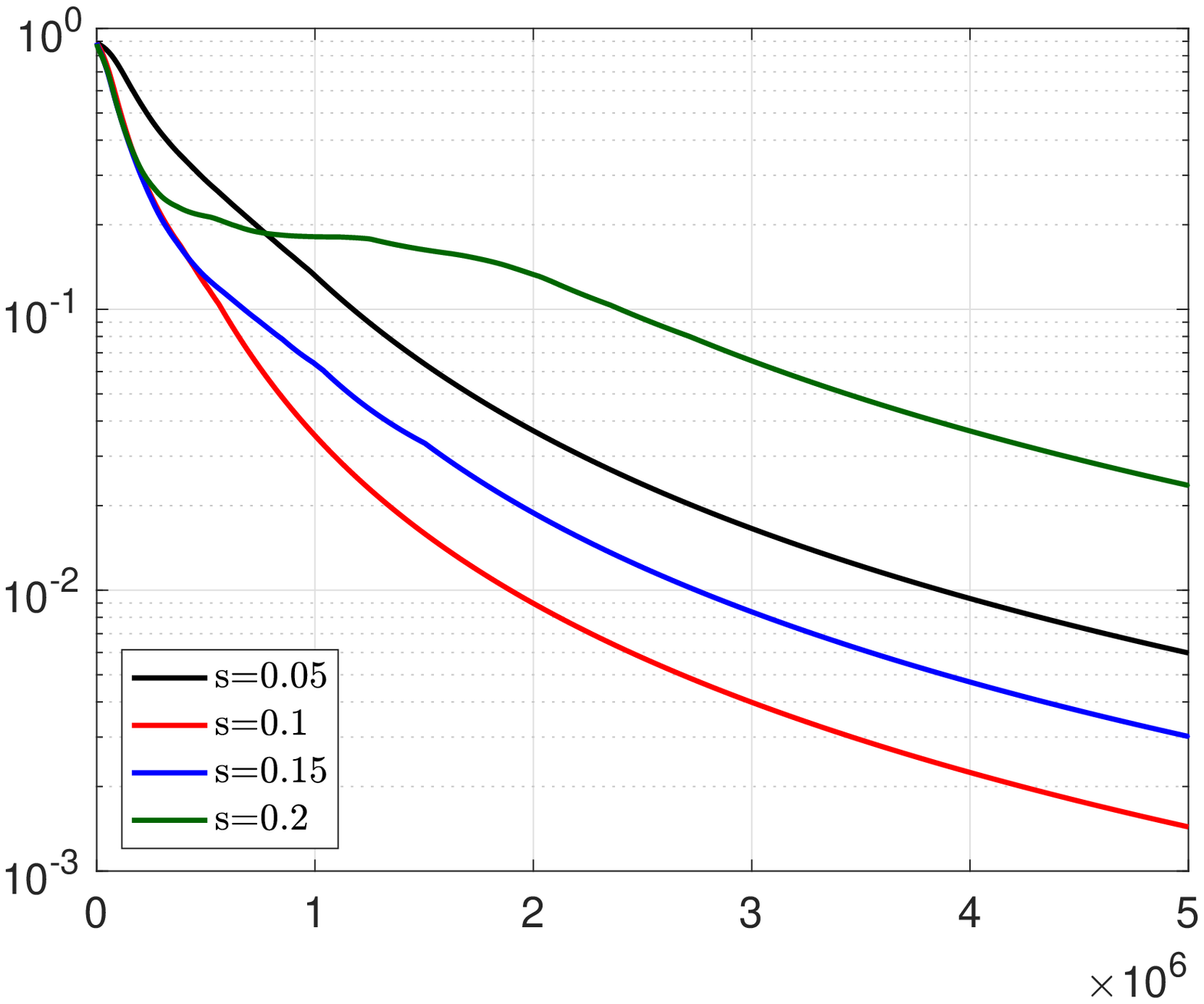}}\\
   \vspace{-.5em}
    \subfloat[Generalization error, $r=350$.]{\includegraphics[width=.48\columnwidth]{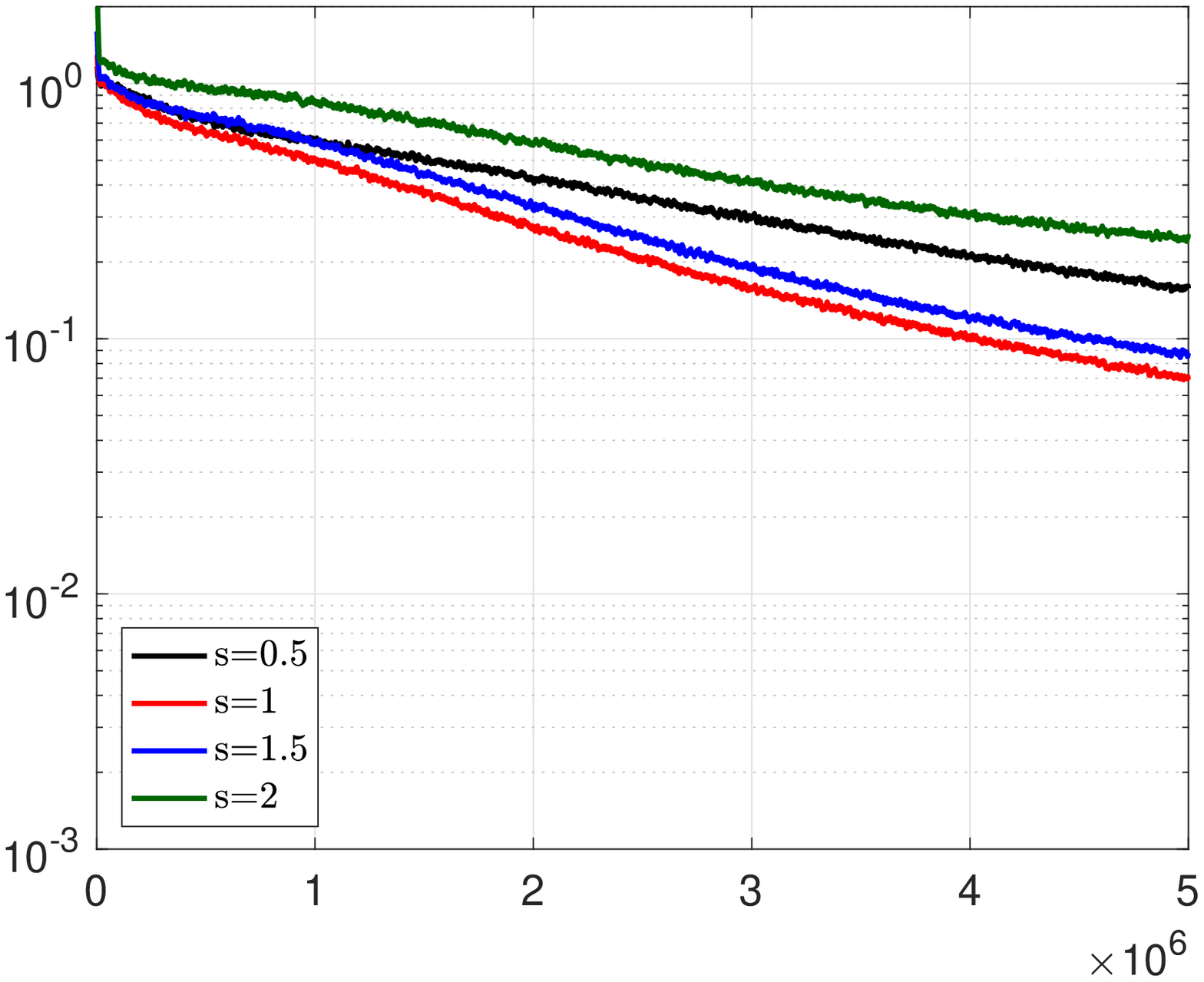}} 
    \subfloat[Weight estimation error, $r=350$.]{\includegraphics[width=.48\columnwidth]{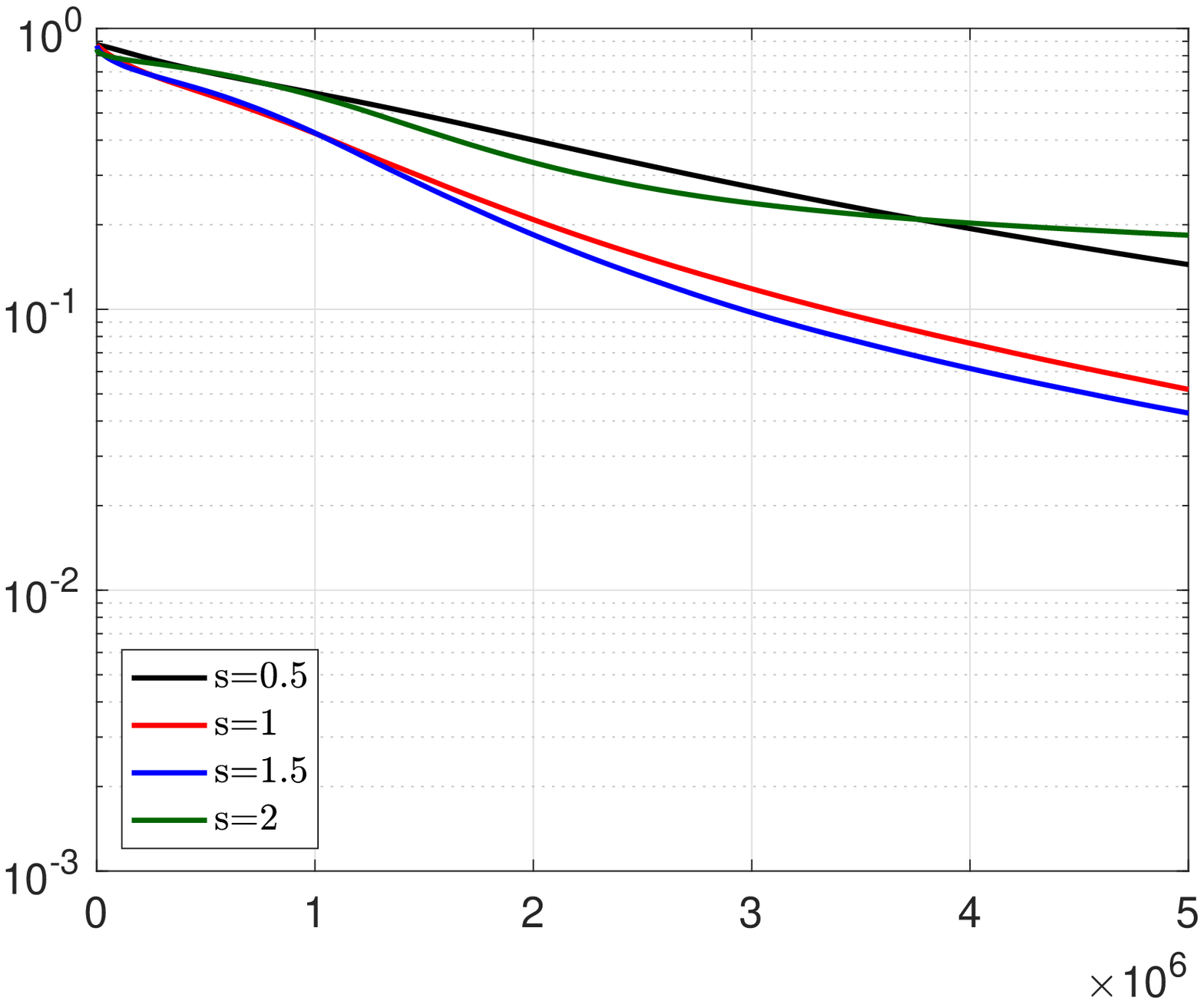}}\\
   \vspace{-.5em}
    \subfloat[Generalization error, $r=2500$.]{\includegraphics[width=.48\columnwidth]{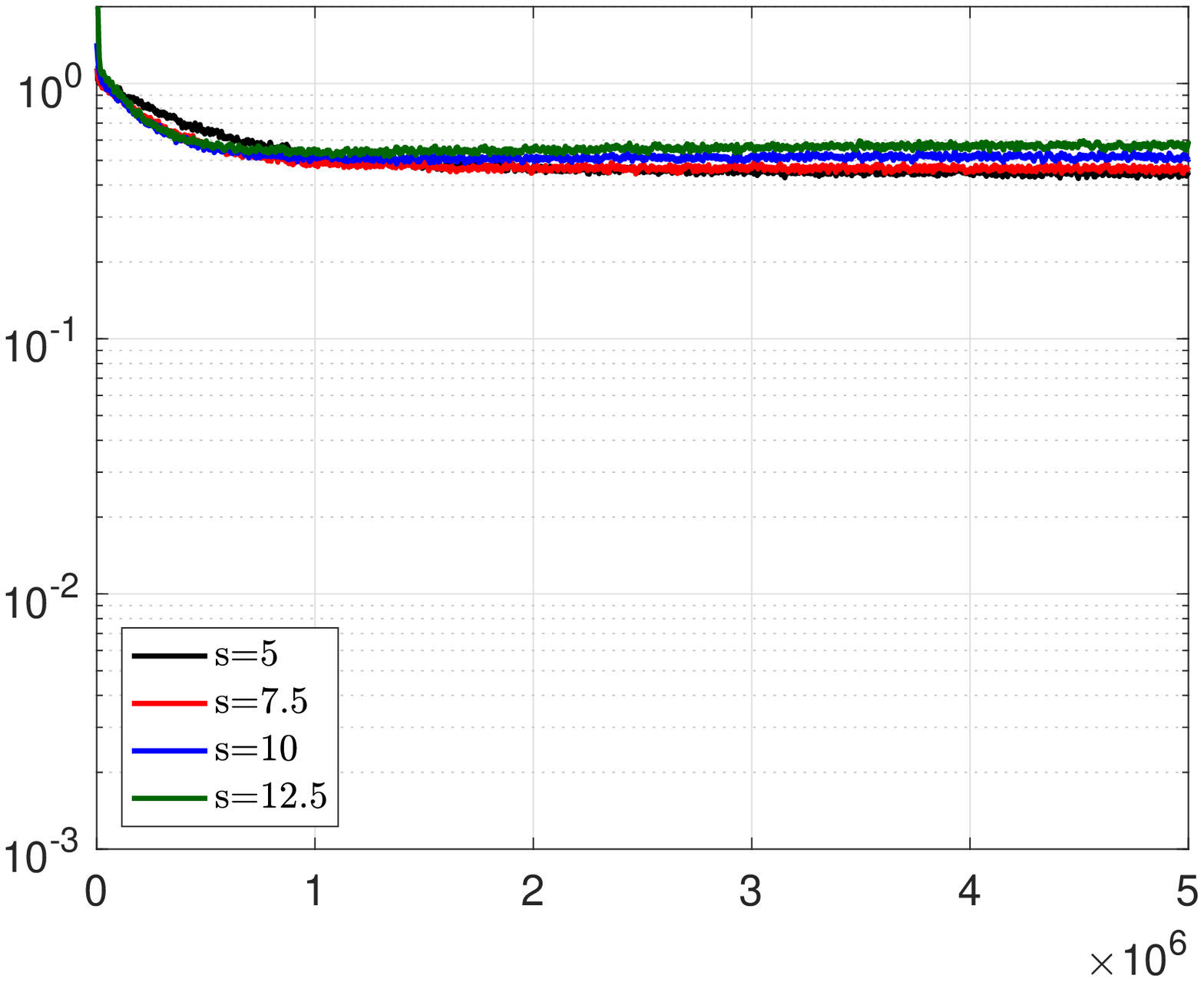}} 
    \subfloat[Weight estimation error, $r=2500$.]{\includegraphics[width=.48\columnwidth]{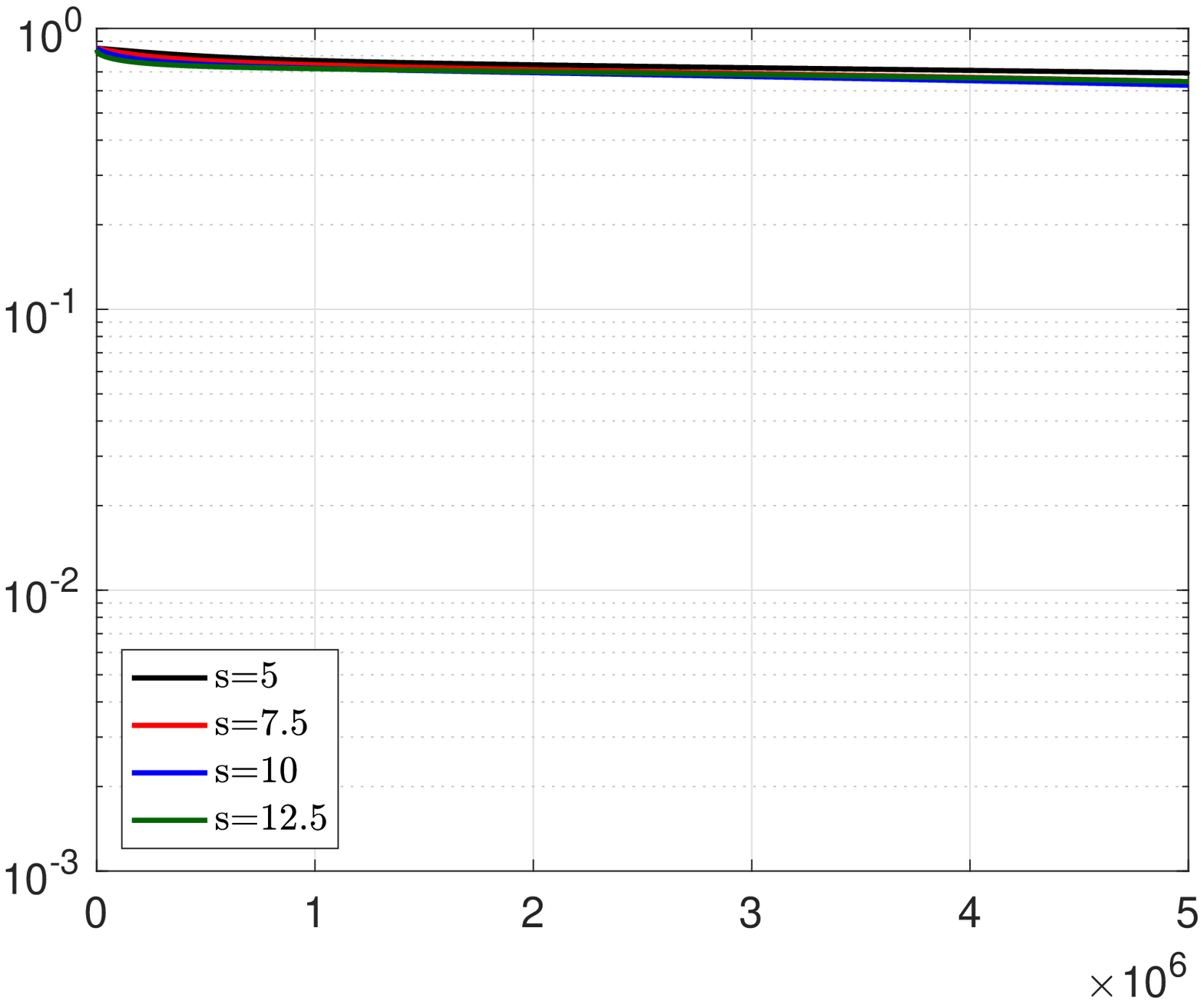}}\\
\caption{Performance of stochastic gradient descent with Gaussian input distribution and separated isotropic weights. }
\label{fig:res}
\end{figure*}

\section{Numerical Experiments} \label{sec:simu}

The setting for the numerical simulations is described as follows. We consider a two-layer neural network with input dimension $d=50$ and $r$ hidden nodes, with $r\in \{50, 350, 2500\}$. The activation function $\sigma$ is equal to $\tanh\left(5x/2\right)$. The weights $\{\bw_i\}_{1\le i\le r}$ are obtained by concatenating $r/d$ random unitary matrices of size $d\times d$ that are independent and identically distributed according to the Haar measure. In particular, each of these matrices is obtained from the SVD of a matrix whose entries that are $\sim_{\rm i.i.d.}\normal(0, 1)$. Then, the weights are centered by subtracting their empirical mean. We generate $n = 5\cdot 10^6$ samples $\{(\bx_j, y(\bx_j))\}_{1\le j \le n}$, where $\{\bx_j\}_{1\le j \le n}\sim_{\rm i.i.d.} \normal(\b0, \bI_d)$ and $y(\bx)$ is given by \eqref{eq:defy}. We perform $n$ iterations of stochastic gradient descent with a fixed step size $s$. We also perform Polyak-Ruppert averaging, i.e., the algorithm outputs at step $j\in [n]$ the average of the estimates obtained so far. Let $\hat{y}_j(\bx)$ be the predictor given by \eqref{eq:defhaty}, where $\{\hat{\bw}_i\}_{1\le i \le r}$ are the weights outputted by the algorithm at step $j\in [n]$.

The results are presented in Figure \ref{fig:res}, where we plot two different performance metrics. On the left, we have the generalization error
\begin{equation}
\frac{(y(\bx_j)-\hat{y}_j(\bx_j))^2}{(y(\bx_j)-y_{\rm LS}(\bx_j))^2}, \qquad j\in [n],
\end{equation}
where $y_{\rm LS}(\bx_j)$ is the prediction of the least-squares estimator with access only to the norm of the input. In order to obtain a smoother curve, we average the results over a window of size $10^4$.  Note that, for $r=2500$, the estimator $y_{\rm LS}(\bx_j)$ generalizes poorly, in the sense that its loss is close to $\Var\left\{y(\bx)\right\}$. On the right, we have the weight estimation error
\begin{equation}
\frac{1}{2r}\sum_{i=1}^r \min_{j\in [r]} \norm{\hat{\bw}_i-\bw_j}^2 + \frac{1}{2r}\sum_{i=1}^r \min_{j\in [r]} \norm{\hat{\bw}_j-\bw_i}^2,
\end{equation} 
which represents the average of the minimum distances between the ground-truth weights $\{\bw_i\}_{1\le i \le r}$ and the estimated weights $\{\hat{\bw}_i\}_{1\le i \le r}$. Different pairs of plots correspond to different choices for the number of hidden nodes $r$, and in each plot we have several curves for different values of the step $s$. Similar results are obtained by taking random weights of the form \eqref{eq:defrandomw}. 

The numerical results corroborate the picture that we have proved in the paper for Gaussian features. As the number of hidden units $r$ becomes much larger than $d$ (from $r=d$ to $r\approx d^{3/2}$ and $r\approx d^{2}$), the problem of learning the weights of the neural network becomes harder and harder, similarly to what happens for tensor decomposition. Furthermore, the generalization error has the same qualitative behavior of the weight estimation error: the neural network generalizes well if and only if the weights are learned accurately.

\section*{Acknowledgement}

M.~M. was supported by an Early Postdoc.Mobility fellowship from the Swiss National Science Foundation and by the Simons Institute for the Theory of Computing. A. M. was partially supported by grants NSF DMS-1613091 and NSF CCF-1714305.

\bibliographystyle{amsalpha}
\bibliography{all-bibliography}

\appendix

\section{Lower Bound on Generalization Error for Separated Isotropic Weights: Proof of Theorem \ref{th:lowergen}} \label{app:mainproof}

Let us start by recalling a basic fact about Hermite polynomials and Fourier analysis on Gaussian spaces. The interested reader is referred to Section 11.2 in \cite{o2014analysis} for further details. Given $\sigma \in L^2(\mathbb R, e^{-x^2/2})$, its Hermite expansion can be written as  
\begin{equation}\label{eq:exp}
\sigma(z) = \sum_{k\in \mathbb N} \hat{\sigma}_k h_k(z),
\end{equation}
with 
\begin{equation}
\hat{\sigma}_k = {\mathbb E}_{G\sim \normal(0, 1)}\left\{h_k(G)\sigma(G)\right\},
\end{equation}
where $\hat{\sigma}_k$ is the $k$-th Hermite coefficient of $\sigma$ and $h_k$ is the $k$-th Hermite polynomial. It is helpful to write explicitly the first three Hermite polynomials:
\begin{equation}\label{eq:firstthree}
h_0(z) = 1, \qquad h_1(z)=z, \qquad h_2(z)=\frac{z^2-1}{\sqrt{2}}.
\end{equation}
The following result, which will be used in the proof of Theorem \ref{th:lowergen}, clarifies in what sense the Hermite expansion is related to analysis on Gaussian spaces. Its proof is a direct consequence of Proposition 11.31 of \cite{o2014analysis}.

\begin{lemma}\label{lemma:hermite}
Let $\sigma, \gamma$ be two functions from $\mathbb R$ to $\mathbb R$ such that $\sigma, \gamma\in L^2(\mathbb R, e^{-x^2/2})$. Then, for any $\bu, \bv\in \mathbb R^d$ s.t. $\norm{\bu}=\norm{\bv}=1$, we have that
\begin{equation}\label{eq:lemmahermite}
\mathbb E\big\{\sigma(\left \langle\bu, \bx\right \rangle)\gamma(\left \langle\bv, \bx\right \rangle) \big\} = \sum_{k\in \mathbb N}\hat{\sigma}_k\hat{\gamma}_k\left \langle \bu, \bv\right \rangle^k,
\end{equation}
where the expectation is with respect to $\bx\sim \normal(\b0,\bI_d)$.
\end{lemma}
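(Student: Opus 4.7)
The plan is to reduce the identity to the standard two-point orthogonality relation for Hermite polynomials under a bivariate Gaussian measure, and then use linearity and a convergence argument.

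First I would introduce $G_1 := \<\bu,\bx\>$ and $G_2 := \<\bv,\bx\>$. Since $\bx\sim\normal(\b0,\bI_d)$ and $\|\bu\|=\|\bv\|=1$, the pair $(G_1,G_2)$ is a centered bivariate Gaussian vector with unit marginal variances and correlation $\rho := \<\bu,\bv\>$. In particular, the joint distribution of $(G_1,G_2)$ depends on $(\bu,\bv)$ only through $\rho$, so the left-hand side of \eqref{eq:lemmahermite} is determined by $\rho$ alone.

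The technical core is the bivariate orthogonality identity
\begin{equation*}
\E\!\left[h_j(G_1)\,h_k(G_2)\right] \;=\; \rho^{k}\,\delta_{jk}, \qquad j,k\in\mathbb{N}.
\end{equation*}
To prove it I would write $G_2 = \rho\,G_1 + \sqrt{1-\rho^{2}}\,Z$ with $Z\sim\normal(0,1)$ independent of $G_1$, and invoke the classical ``addition formula'' for Hermite polynomials, which expands $h_k(\alpha x+\beta z)$ with $\alpha^{2}+\beta^{2}=1$ as an explicit finite linear combination of products $h_i(x)\,h_{k-i}(z)$. Taking expectation and applying the single-variable orthogonality $\E[h_i(G)\,h_{i'}(G)] = \delta_{i i'}$ together with $\E[h_m(Z)] = \delta_{m0}$ collapses the resulting double sum to $\rho^{k}\delta_{jk}$. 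An equivalent route is to compute the bivariate generating function $\E\bigl[\exp\!\bigl(sG_1+tG_2-(s^{2}+t^{2})/2\bigr)\bigr] = e^{st\rho}$ and read off coefficients of $s^{j}t^{k}$. This is precisely the content of Proposition~11.31 of \cite{o2014analysis} that the paper cites.

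Finally, I would plug in the Hermite expansions $\sigma=\sum_{j}\hat\sigma_j h_j$ and $\gamma=\sum_{k}\hat\gamma_k h_k$, which converge in $L^{2}(\reals,e^{-x^{2}/2})$, form the product $\sigma(G_1)\gamma(G_2)$, and interchange expectation with the double sum. The interchange is justified because, by Cauchy--Schwarz, $\E[|\sigma(G_1)\gamma(G_2)|]\le \|\sigma\|_{L^{2}}\|\gamma\|_{L^{2}}<\infty$, and because Parseval gives $\sum_j \hat\sigma_j^{2}<\infty$, $\sum_k\hat\gamma_k^{2}<\infty$, so the double series $\sum_{j,k}\hat\sigma_j\hat\gamma_k\,\E[h_j(G_1)h_k(G_2)]$ is absolutely convergent (using $|\rho|\le 1$). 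Applying the orthogonality identity above annihilates every off-diagonal term and leaves exactly $\sum_{k}\hat\sigma_k\hat\gamma_k\rho^{k}$, which is the asserted formula.

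The main obstacle is the two-point orthogonality relation $\E[h_j(G_1)h_k(G_2)] = \rho^{k}\delta_{jk}$; once that is in hand, the rest of the argument is routine bookkeeping and a standard $L^{2}$-convergence check.
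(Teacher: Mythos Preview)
Your proposal is correct and follows exactly the route the paper takes: the paper simply states that the lemma ``is a direct consequence of Proposition~11.31 of \cite{o2014analysis}'' without further detail, and your argument (reduce to a bivariate Gaussian with correlation $\rho=\langle\bu,\bv\rangle$, invoke the two-point Hermite orthogonality $\E[h_j(G_1)h_k(G_2)]=\rho^k\delta_{jk}$, then expand and pass to the limit) is precisely how one unpacks that proposition. You have supplied the details the paper omits, but the approach is the same.
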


Intuitively, the idea of the proof of Theorem \ref{th:lowergen} is to write the generalization error as a sum similar to the RHS of \eqref{eq:lemmahermite}, where $\bu$ is one of the ground-truth weights $\{\bw_i\}_{1\le i \le r}$ and $\bv$ is one of the estimated weights $\{\hat{\bw}_i\}_{1\le i \le R}$. Then, the terms with $k=1$ and $k=2$ do not give any contribution to the generalization error, since the we know that the weights have zero mean and scaled identity covariance. As for the terms with $k\ge 3$, the following lemma gives an upper bound based on the assumption that $|\left \langle \bw_i,\hat{\bw}_j\right \rangle |\le \epsilon$, for all $i\in [r]$ and $j\in [R]$.

\begin{lemma}\label{lemma:corrweights}
Consider weights $\{\bw_i\}_{1\le i\le r}$ that satisfy the assumption {\sf (A4)} for some $\eta_{\rm var}>0$ and weights $\{\hat{\bw}_i\}_{1\le i\le R}$ that satisfy the assumption {\sf (A1)} and are such that, for some $\epsilon\in (0, 1)$, $|\left \langle \bw_i, \hat{\bw}_j\right \rangle |\le \epsilon$, for all $i\in [r]$ and $j\in [R]$. Then, for any integer $k\ge 3$,
\begin{equation}\label{eq:rescorr}
\sum_{i=1}^r\sum_{j=1}^R\left \langle\bw_i, \hat{\bw}_j\right \rangle^k \le \epsilon^{k-2} \cdot  (1+\eta_{\rm var})\cdot\frac{r\cdot R}{d}.
\end{equation}
\end{lemma}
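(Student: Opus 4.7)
The plan is to reduce the bound on the $k$-th power sum to a bound on the second power sum via the elementary factoring
\[
\langle \bw_i, \hat{\bw}_j\rangle^{k} \;=\; \langle \bw_i, \hat{\bw}_j\rangle^{k-2}\cdot\langle \bw_i, \hat{\bw}_j\rangle^{2},
\]
valid since $k\ge 3$. Because $\langle \bw_i, \hat{\bw}_j\rangle^2\ge 0$ and, by hypothesis, $|\langle \bw_i, \hat{\bw}_j\rangle^{k-2}|\le \eps^{k-2}$, we immediately obtain
\[
\sum_{i=1}^{r}\sum_{j=1}^{R}\langle \bw_i, \hat{\bw}_j\rangle^{k} \;\le\; \eps^{k-2}\sum_{i=1}^{r}\sum_{j=1}^{R}\langle \bw_i, \hat{\bw}_j\rangle^{2}.
\]
Note that this handles odd $k$ as well, because the sign of $\langle\bw_i,\hat{\bw}_j\rangle^{k-2}$ is absorbed by the absolute value bound while $\langle\bw_i,\hat{\bw}_j\rangle^{2}$ is manifestly non-negative.

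Next, I would rewrite the second-moment sum as a quadratic form in $\hat{\bw}_j$:
\[
\sum_{i=1}^{r}\sum_{j=1}^{R}\langle \bw_i, \hat{\bw}_j\rangle^{2}
 \;=\; \sum_{j=1}^{R}\hat{\bw}_j^{\sT}\!\Bigl(\sum_{i=1}^{r}\bw_i\bw_i^{\sT}\Bigr)\hat{\bw}_j.
\]
Assumption \textsf{(A4)} yields the operator-norm inequality $\sum_{i=1}^r \bw_i\bw_i^{\sT} \preceq (1+\eta_{\rm var})(r/d)\,\bI_d$, so each summand on the right is at most $(1+\eta_{\rm var})(r/d)\|\hat{\bw}_j\|^2 = (1+\eta_{\rm var})(r/d)$ by \textsf{(A1)}. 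Summing the $R$ terms and combining with the previous display gives the claimed bound $\eps^{k-2}(1+\eta_{\rm var})rR/d$.

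There is no real obstacle here: the only point requiring a moment's care is ensuring the factoring argument works for all $k\ge 3$ regardless of parity, which is why we split off the square rather than a single power of $\langle\bw_i,\hat{\bw}_j\rangle$. Everything else is a one-line application of \textsf{(A4)} to the Gram-type sum. The $k=3$ case used in the main theorem is precisely where this is applied with the sharpest factor of $\eps$.
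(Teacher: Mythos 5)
Your proof is correct and follows essentially the same route as the paper's: peel off a factor of $\epsilon^{k-2}$ (the paper does this via $\sum_{i,j}|\langle\bw_i,\hat{\bw}_j\rangle|^k\le\epsilon^{k-2}\sum_{i,j}\langle\bw_i,\hat{\bw}_j\rangle^2$, which is the same idea as your factoring), then bound the remaining Gram-type sum $\sum_j\hat{\bw}_j^{\sT}\bigl(\sum_i\bw_i\bw_i^{\sT}\bigr)\hat{\bw}_j$ using the operator-norm consequence of {\sf (A4)} and the unit norms from {\sf (A1)}. No gaps; nothing further is needed.
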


\begin{proof}
The following chain of inequalities holds:
\begin{equation}\label{eq:partcalc}
\begin{split}
\sum_{i=1}^r\sum_{j=1}^R \left \langle\bw_i, \hat{\bw}_j\right \rangle^k &\le\sum_{i=1}^r\sum_{j=1}^R |\left \langle\bw_i, \hat{\bw}_j\right \rangle^k |\\
&\stackrel{\mathclap{\mbox{\footnotesize (a)}}}{\le}\epsilon^{k-2} \sum_{i=1}^r\sum_{j=1}^R \left \langle\bw_i, \hat{\bw}_j\right \rangle^2 \\ 
&=\epsilon^{k-2} \sum_{i=1}^r\sum_{j=1}^R \left \langle \hat{\bw}_j, \bw_i \bw_i^{\sT}\hat{\bw}_j\right \rangle \\ 
&=\epsilon^{k-2} \sum_{j=1}^R \left \langle \hat{\bw}_j, \sum_{i=1}^r\bw_i \bw_i^{\sT}\hat{\bw}_j\right \rangle,
\end{split}
\end{equation}
where in (a) we use that $|\left \langle \bw_i, \hat{\bw}_j\right \rangle |\le \epsilon$ and that $k\ge 3$. Furthermore, as the weights $\{\bw_i\}_{1\le i\le r}$ satisfy the assumption {\sf (A4)}, we immediately have that
\begin{equation}\label{eq:usePDS}
\left \langle \bv, \sum_{i=1}^r\bw_i \bw_i^{\sT}\bv\right \rangle \le (1+\eta_{\rm var})\cdot \frac{r}{d}\cdot \norm{\bv}^2, \qquad \forall\, \bv\in \mathbb R^d.
\end{equation}
As a result, we conclude that
\begin{equation}
\begin{split}
\sum_{i=1}^r\sum_{j=1}^R \left \langle\bw_i, \hat{\bw}_j\right \rangle^k &
\stackrel{\mathclap{\mbox{\footnotesize (a)}}}{\le}\epsilon^{k-2}\cdot (1+\eta_{\rm var})\cdot\frac{r}{d} \sum_{j=1}^R \norm{\hat{\bw}_j}^2 \\
&\stackrel{\mathclap{\mbox{\footnotesize (b)}}}{=} \epsilon^{k-2}\cdot  (1+\eta_{\rm var})\cdot\frac{r\cdot R}{d},
\end{split}
\end{equation}
where in (a) we use \eqref{eq:partcalc} and \eqref{eq:usePDS}, and in (b) we use that the weights $\{\hat{\bw}_i\}_{1\le i\le R}$ satisfy the assumption {\sf (A1)}.
\end{proof}

At this point, we are ready to prove our main result on the generalization error in the setting with separated isotropic weights.

\begin{proof}[Proof of Theorem \ref{th:lowergen}] We divide the proof into three steps. The \emph{first step} consists in showing that
\begin{equation}\label{eq:first}
{\mathbb E}\big\{|y(\bx)-\hat{y}(\bx)|^2\big\} =\sum_{k\in \mathbb N} \hat{\sigma}_k^2\norm{\sum_{i=1}^r\bw_i^{\otimes k}-\sum_{i=1}^R\hat{\bw}_i^{\otimes k}}_F^2.
\end{equation}
This result requires that $\sigma \in L^2(\mathbb R, e^{-x^2/2})$ and that the weights $\{\bw_i\}_{1\le i\le r}$ and $\{\hat{\bw}_i\}_{1\le i \le R}$ satisfy the assumption {\sf (A1)}. Note that \eqref{eq:first} is similar to the claim of Theorem 2.1 of \cite{ge2017learning}.

The \emph{second step} consists in showing that
\begin{equation}\label{eq:second}
\sum_{k\in \mathbb N} \hat{\sigma}_k^2\norm{\sum_{i=1}^r\bw_i^{\otimes k}-\sum_{i=1}^R\hat{\bw}_i^{\otimes k}}_F^2 \ge \sum_{k\ge 3} \hat{\sigma}_k^2\norm{\sum_{i=1}^r\bw_i^{\otimes k}}_F^2\left(1-\frac{2\epsilon^{k-2}\cdot(1+\eta_{\rm var})\cdot\displaystyle\frac{R}{d}}{1-\delta^{k-2}\cdot(1+\eta_{\rm var})\cdot\displaystyle\frac{r}{d}}\right).
\end{equation} 
This result requires that the weights $\{\bw_i\}_{1\le i \le r}$ satisfy the assumptions {\sf (A1)}, {\sf (A2)} and {\sf (A4)} and that the weights $\{\hat{\bw}_i\}_{1\le i \le R}$ satisfy the assumption {\sf (A1)} and have at most $\epsilon$-correlation with $\{\bw_i\}_{1\le i \le r}$.

The \emph{third step} consists in showing that
\begin{equation}\label{eq:third}
\sum_{k\ge 3} \hat{\sigma}_k^2\norm{\sum_{i=1}^r\bw_i^{\otimes k}}_F^2 \ge \min_{a, b\in \mathbb R}{\mathbb E}\left\{\left|y-\left(a+b\norm{\bx}^2\right)\right|^2\right\}-2\hat{\sigma}_1^2 \cdot \eta_{\rm avg}\cdot r - 2\hat{\sigma}_2^2\cdot \eta_{\rm var}^2\cdot \frac{r^2}{d}.
\end{equation}
This result requires that the weights $\{\bw_i\}_{1\le i \le r}$ satisfy the assumptions {\sf (A3)} and {\sf (A4)}.

Note that, for any $k\ge 3$,
\begin{equation}\label{eq:kge3}
1-\frac{2\epsilon^{k-2}\cdot(1+\eta_{\rm var})\cdot\displaystyle\frac{R}{d}}{1-\delta^{k-2}\cdot(1+\eta_{\rm var})\cdot\displaystyle\frac{r}{d}} \ge 1-\frac{2\epsilon\cdot(1+\eta_{\rm var})\cdot\displaystyle\frac{R}{d}}{1-\delta\cdot(1+\eta_{\rm var})\cdot\displaystyle\frac{r}{d}},
\end{equation}
since we can assume that $\epsilon, \delta\in (0, 1)$ without loss of generality. Hence, by putting \eqref{eq:first}, \eqref{eq:second}, \eqref{eq:kge3} and \eqref{eq:third} together, we obtain the lower bound on the generalization error for a generic activation function $\sigma$.

If $\sigma$ is even, then $\hat{\sigma}_k=0$ for $k$ odd. In particular, $\hat{\sigma}_3=0$ and the sum in the RHS of \eqref{eq:second} runs for $k\ge 4$. Note that, for any $k\ge 4$,
\begin{equation}\label{eq:kge4}
1-\frac{2\epsilon^{k-2}\cdot(1+\eta_{\rm var})\cdot\displaystyle\frac{R}{d}}{1-\delta^{k-2}\cdot(1+\eta_{\rm var})\cdot\displaystyle\frac{r}{d}} \ge 1-\frac{2\epsilon^2\cdot(1+\eta_{\rm var})\cdot\displaystyle\frac{R}{d}}{1-\delta^2\cdot(1+\eta_{\rm var})\cdot\displaystyle\frac{r}{d}}.
\end{equation}
Hence, by putting \eqref{eq:first}, \eqref{eq:second}, \eqref{eq:kge4} and \eqref{eq:third} together and by using that $\hat{\sigma}_1=0$, we obtain the lower bound on the generalization error for an even activation function $\sigma$.

\noindent {\bf First step.} By using the definitions \eqref{eq:defy} and \eqref{eq:defhaty}, we have that 
\begin{equation}\label{eq:expansion}
\begin{split}
{\mathbb E}\big\{|y(\bx)-\hat{y}(\bx)|^2\big\}
&= {\mathbb E}\left\{\left|\sum_{i=1}^r \sigma(\left \langle \bx, \bw_i\right \rangle)-\sum_{i=1}^R \sigma(\left \langle \bx, \hat{\bw}_i\right \rangle)\right|^2\right\}\\
&= \sum_{i=1}^r\sum_{j=1}^r{\mathbb E}\left\{ \sigma(\left \langle \bx, \bw_i\right \rangle)\sigma(\left \langle \bx, \bw_j\right \rangle)\right\}-2\sum_{i=1}^r\sum_{j=1}^R{\mathbb E}\left\{ \sigma(\left \langle \bx, \bw_i\right \rangle)\sigma(\left \langle \bx, \hat{\bw}_j\right \rangle)\right\}\\
&\hspace{16em}+\sum_{i=1}^R\sum_{j=1}^R{\mathbb E}\left\{ \sigma(\left \langle \bx, \hat{\bw}_i\right \rangle)\sigma(\left \langle \bx, \hat{\bw}_j\right \rangle)\right\}.
\end{split}
\end{equation}
As $\sigma \in L^2(\mathbb R, e^{-x^2/2})$ and $\norm{{\bw}_i}=\norm{\hat{\bw}_j}=1$ for any $i\in [r]$ and $j\in [R]$, we can apply Lemma \ref{lemma:hermite} and obtain that 
\begin{equation}\label{eq:herm1}
{\mathbb E}\big\{|y(\bx)-\hat{y}(\bx)|^2\big\} =\sum_{k\in \mathbb N} \hat{\sigma}_k^2\left(\sum_{i=1}^r\sum_{j=1}^r\left \langle\bw_i, \bw_j\right \rangle^k - 2\sum_{i=1}^r\sum_{j=1}^R\left \langle\bw_i, \hat{\bw}_j\right \rangle^k+\sum_{i=1}^R\sum_{j=1}^R\left \langle\hat{\bw}_i, \hat{\bw}_j\right \rangle^k\right).
\end{equation}
Note that, for any $k\in \mathbb N$ and any $\bu, \bv\in \mathbb R^d$, 
\begin{equation}\label{eq:tensscal}
\left \langle \bu, \bv \right \rangle^k = \left \langle \bu^{\otimes k}, \bv^{\otimes k}\right \rangle.
\end{equation}
Hence, we can rewrite the RHS of \eqref{eq:herm1} to obtain \eqref{eq:first}. 

\noindent {\bf Second step.} As each term of the sum in the RHS of \eqref{eq:first} is non-negative, we have that
\begin{equation}\label{eq:secfisteq}
\begin{split}
\sum_{k\in \mathbb N} \hat{\sigma}_k^2\norm{\sum_{i=1}^r\bw_i^{\otimes k}-\sum_{i=1}^R\hat{\bw}_i^{\otimes k}}_F^2 
&\ge\sum_{k\ge 3} \hat{\sigma}_k^2\norm{\sum_{i=1}^r\bw_i^{\otimes k}-\sum_{i=1}^R\hat{\bw}_i^{\otimes k}}_F^2\\
&=\sum_{k\ge 3} \hat{\sigma}_k^2 \Bigg(\norm{\sum_{i=1}^r\bw_i^{\otimes k}}_F^2-2\sum_{i=1}^r\sum_{j=1}^R \left \langle \bw_i, \hat{\bw}_j\right \rangle^k+\norm{\sum_{i=1}^R\hat{\bw}_i^{\otimes k}}_F^2\Bigg)\\
&\ge\sum_{k\ge 3} \hat{\sigma}_k^2 \left(\norm{\sum_{i=1}^r\bw_i^{\otimes k}}_F^2-2\sum_{i=1}^r\sum_{j=1}^R \left \langle \bw_i, \hat{\bw}_j\right \rangle^k\right).
\end{split}
\end{equation}
Furthermore, for any $k\ge 3$, the following chain of inequalities holds:
\begin{equation*}
\begin{split}
\norm{\sum_{i=1}^r\bw_i^{\otimes k}}_F^2 &= \sum_{i, j=1}^r \left \langle \bw_i, \bw_j\right \rangle^k \\
&\stackrel{\mathclap{\mbox{\footnotesize (a)}}}{=} r+\sum_{i\neq j} \left \langle \bw_i, \bw_j\right \rangle^k \\
&\ge r-\sum_{i\neq j} |\left \langle \bw_i, \bw_j\right \rangle|^k \\
&\stackrel{\mathclap{\mbox{\footnotesize (b)}}}{\ge} r-\delta^{k-2}\sum_{i\neq j} \left \langle \bw_i, \bw_j\right \rangle^2 \\
&\ge r-\delta^{k-2}\sum_{i, j=1}^r \left \langle \bw_i, \bw_j\right \rangle^2 \\
&= r-\delta^{k-2}\sum_{i=1}^r \left \langle \bw_i, \sum_{j=1}^r\bw_j\bw_j^{\sT}\bw_i\right \rangle, \\
\end{split}
\end{equation*}
where in (a) we use that the weights $\{\bw_i\}_{1\le i\le r}$ satisfy the assumption {\sf (A1)}, and in (b) we use that the weights $\{\bw_i\}_{1\le i\le r}$ satisfy the assumption {\sf (A2)} and that $k\ge 3$. As the weights $\{\bw_i\}_{1\le i\le r}$ satisfy the assumption {\sf (A4)}, \eqref{eq:usePDS} holds. Consequently, for any $k\ge 3$,
\begin{equation}\label{eq:bound1}
\norm{\sum_{i=1}^r\bw_i^{\otimes k}}_F^2 \ge r-\delta^{k-2}\cdot(1+\eta_{\rm var})\cdot\frac{r}{d}\sum_{i=1}^r \norm{\bw_i}^2 =r-\delta^{k-2}\cdot(1+\eta_{\rm var})\cdot\frac{r^2}{d},
\end{equation}
where in the last equality we use again that the weights $\{\bw_i\}_{1\le i\le r}$ satisfy the assumption {\sf (A1)}. By using the hypothesis that $1-\delta\cdot(1+\eta_{\rm var})\cdot r/d\ge 0$, we can rearrange \eqref{eq:bound1} as
\begin{equation}\label{eq:bound1rearr}
r\le \frac{1}{1-\delta^{k-2}\cdot(1+\eta_{\rm var})\cdot\displaystyle\frac{r}{d}}\cdot \norm{\sum_{i=1}^r\bw_i^{\otimes k}}_F^2.
\end{equation}
By combining the result of Lemma \ref{lemma:corrweights} with \eqref{eq:bound1rearr}, we obtain that, for any $k\ge 3$,
\begin{equation*}
\sum_{i=1}^r\sum_{j=1}^R \left \langle \bw_i, \hat{\bw}_j\right \rangle^k \le \frac{\epsilon^{k-2}\cdot(1+\eta_{\rm var})\cdot\displaystyle\frac{R}{d}}{1-\delta^{k-2}\cdot(1+\eta_{\rm var})\cdot\displaystyle\frac{r}{d}} \cdot \norm{\sum_{i=1}^r\bw_i^{\otimes k}}_F^2.
\end{equation*}
Hence, \eqref{eq:second} immediately follows.

\noindent {\bf Third step.} By using the Hermite expansion \eqref{eq:exp} of $\sigma$ and the explicit expression \eqref{eq:firstthree} of the first three Hermite polynomials, we have that
\begin{equation}\label{eq:rewritey}
\begin{split}
y(\bx)&= \hat{\sigma}_0\, r + \hat{\sigma}_1\sum_{i=1}^r\left \langle \bx, \bw_i\right \rangle +\frac{\hat{\sigma}_2}{\sqrt{2}}\sum_{i=1}^r\left(\left \langle \bx, \bw_i\right \rangle^2-1\right)+\sum_{i=1}^r\sum_{k\ge 3} \hat{\sigma}_k h_k(\left \langle \bx, \bw_i\right \rangle)\\
&= \left(\hat{\sigma}_0-\frac{\hat{\sigma}_2}{\sqrt{2}}\right)\cdot r + \hat{\sigma}_1\left \langle \bx, \sum_{i=1}^r\bw_i\right \rangle +\frac{\hat{\sigma}_2}{\sqrt{2}}\left \langle \bx, \sum_{i=1}^r\bw_i\bw_i^{\sT}\bx\right \rangle+\sum_{i=1}^r\sum_{k\ge 3} \hat{\sigma}_k h_k(\left \langle \bx, \bw_i\right \rangle).
\end{split}
\end{equation}
Define
\begin{equation*}
\tilde{y}(\bx)= \left(\hat{\sigma}_0-\frac{\hat{\sigma}_2}{\sqrt{2}}\right)\cdot r +\frac{\hat{\sigma}_2}{\sqrt{2}}\cdot \frac{r}{d}\norm{\bx}^2+\sum_{i=1}^r\sum_{k\ge 3} \hat{\sigma}_k h_k(\left \langle \bx, \bw_i\right \rangle).
\end{equation*}
Then, we immediately have that
\begin{equation}\label{eq:rewritey2}
\begin{split}
\left| y(\bx)-\tilde{y}(\bx)\right|^2
&\le \Bigg|\hat{\sigma}_1\left \langle \bx, \sum_{i=1}^r\bw_i\right \rangle +\frac{\hat{\sigma}_2}{\sqrt{2}}\left \langle \bx, \left(\sum_{i=1}^r\bw_i\bw_i^{\sT}-\frac{r}{d}\bI_d\right)\bx\right \rangle \Bigg|^2\\
&\le 2\Bigg(\hat{\sigma}_1^2\left|\left \langle \bx, \sum_{i=1}^r\bw_i\right \rangle\right|^2 +\frac{\hat{\sigma}_2^2}{2}\left|\left \langle \bx, \left(\sum_{i=1}^r\bw_i\bw_i^{\sT}-\frac{r}{d}\bI_d\right)\bx\right \rangle \right|^2\Bigg).
\end{split}
\end{equation}
Let $\bx\sim \normal(\b0, \bI_d)$. Consequently, for any $\bv\in \mathbb R^d$,
\begin{equation}\label{eq:expvec}
{\mathbb E}\left\{\left|\left \langle \bx, \bv\right \rangle\right|^2\right\} = {\mathbb E}\left\{ \left \langle \bv, \bx\bx^{\sT}\bv\right \rangle \right\}=  \left \langle \bv, {\mathbb E}\left\{\bx\bx^{\sT}\right\}\bv\right \rangle = \norm{\bv}^2,
\end{equation}
where in the last equality we use that ${\mathbb E}\left\{\bx\bx^{\sT}\right\} = \bI_d$. Furthermore, for any $\bA\in \mathbb R^{d\times d}$,
\begin{equation}\label{eq:expmat}
\begin{split}
{\mathbb E}\left\{\left|\left \langle \bx, \bA\bx\right \rangle\right|^2\right\}
&= \sum_{i, j, k, \ell=1}^d A(i, j) A(k, l)\cdot {\mathbb E}\left\{ x(i) x(j) x(k) x(\ell) \right\}\\
&=  \left|{\rm Tr}(\bA)\right|^2 + 2\cdot \norm{\bA}_F^2\\
&\le  \left|{\rm Tr}(\bA)\right|^2 + 2\cdot d\cdot \norm{\bA}_{\rm op}^2,
\end{split}
\end{equation}
where in the last inequality we use that, for any $\bA\in \mathbb R^{d\times d}$,
\begin{equation}
\norm{\bA}_F \le \sqrt{{\rm rank}(A)}\cdot  \norm{\bA}_{\rm op}.
\end{equation}
Note that
\begin{equation}\label{eq:comptrace}
{\rm Tr}\left(\sum_{i=1}^r\bw_i\bw_i^{\sT}-\frac{r}{d}\bI_d\right) = \sum_{i=1}^r\norm{\bw_i}^2 - r= 0.
\end{equation}
Hence, we obtain that 
\begin{equation}\label{eq:rewritey3}
\begin{split}
\left| y(\bx)-\tilde{y}(\bx)\right|^2 
&\stackrel{\mathclap{\mbox{\footnotesize (a)}}}{\le} 2\hat{\sigma}_1^2\norm{\sum_{i=1}^r \bw_i}^2 + 2\hat{\sigma}_2^2\cdot d\norm{\sum_{i=1}^r\bw_i\bw_i^{\sT}-\frac{r}{d}\bI_d}_{\rm op}^2\\
&\stackrel{\mathclap{\mbox{\footnotesize (b)}}}{\le} 2\hat{\sigma}_1^2 \cdot \eta_{\rm avg}\cdot r + 2\hat{\sigma}_2^2\cdot \eta_{\rm var}^2\cdot\frac{r^2}{d},
\end{split}
\end{equation}
where in (a) we combine \eqref{eq:rewritey2}, \eqref{eq:expvec}, \eqref{eq:expmat} and \eqref{eq:comptrace}, and in (b) we use that the weights $\{\bw_i\}_{1\le i\le r}$ satisfy the assumptions {\sf (A3)} and {\sf (A4)}. Furthermore, we have that
\begin{equation}\label{eq:orth}
\begin{split}
{\mathbb E} & \Bigg\{\big( y(\bx)-\tilde{y}(\bx)\big) \left(\tilde{y}(\bx)-\left(\left(\hat{\sigma}_0-\frac{\hat{\sigma}_2}{\sqrt{2}}\right)\cdot r +\frac{\hat{\sigma}_2}{\sqrt{2}}\cdot \frac{r}{d}\norm{\bx}^2\right)\right)\Bigg\} \\
&= {\mathbb E}\Bigg\{\Bigg( \sum_{i=1}^r\sum_{k= 0}^2 \hat{\sigma}_k h_k(\left \langle \bx, \bw_i\right \rangle)-{\mathbb E}\left\{\sum_{i=1}^r\sum_{k= 0}^2 \hat{\sigma}_k h_k(\left \langle \bx, \bg_i\right \rangle)\right\}\Bigg)\sum_{i=1}^r\sum_{k\ge 3} \hat{\sigma}_k h_k(\left \langle \bx, \bw_i\right \rangle)\Bigg\}=0,
\end{split}
\end{equation}
where the inner expectation is with respect to the vectors $\{\bg_i\}_{1\le i \le r}\sim_{\rm i.i.d.}\normal(\b0, \bI_d/d)$. 

Eventually, the following chain of inequalities allows us to conclude:
\begin{equation*}
\begin{split}
\min_{a, b\in \mathbb R}{\mathbb E}&\left\{\left| y(\bx)-\left(a+b\norm{\bx}^2\right)\right |^2\right\} 
\le {\mathbb E}\left\{\left| y(\bx)-\left(\left(\hat{\sigma}_0-\frac{\hat{\sigma}_2}{\sqrt{2}}\right)\cdot r +\frac{\hat{\sigma}_2}{\sqrt{2}}\cdot \frac{r}{d}\norm{\bx}^2\right)\right|^2\right\}\\
&\stackrel{\mathclap{\mbox{\footnotesize (a)}}}{\le} {\mathbb E}\left\{\left|y(\bx)-\tilde{y}(\bx)\right|^2\right\}+{\mathbb E}\left\{\left|\tilde{y}(\bx)-\left(\left(\hat{\sigma}_0-\frac{\hat{\sigma}_2}{\sqrt{2}}\right)\cdot r +\frac{\hat{\sigma}_2}{\sqrt{2}}\cdot \frac{r}{d}\norm{\bx}^2\right)\right|^2\right\}\\
&\stackrel{\mathclap{\mbox{\footnotesize (b)}}}{\le}2\hat{\sigma}_1^2 \cdot \eta_{\rm avg}\cdot r + 2\hat{\sigma}_2^2\cdot \eta_{\rm var}^2\cdot\frac{r^2}{d}+{\mathbb E}\left\{\left|\sum_{i=1}^r\sum_{k\ge 3} \hat{\sigma}_k h_k(\left \langle \bx, \bw_i\right \rangle)\right|^2\right\}\\
&\stackrel{\mathclap{\mbox{\footnotesize (c)}}}{=}2\hat{\sigma}_1^2 \cdot \eta_{\rm avg}\cdot r + 2\hat{\sigma}_2^2\cdot \eta_{\rm var}^2\cdot\frac{r^2}{d}+\sum_{k\ge 3} \hat{\sigma}_k^2\norm{\sum_{i=1}^r\bw_i^{\otimes k}}_F^2,
\end{split}
\end{equation*}
where in (a) we use \eqref{eq:orth}, in (b) we use \eqref{eq:rewritey3}, and (c) is proved by following passages analogous to those of the first step.
\end{proof}

\section{Lower Bound on Generalization Error for Random Weights: Proof of Theorem \ref{th:lowergen_G}} \label{app:mainproof_G}

As mentioned in Section \ref{sub:iso}, the crucial step is to upper bound the third-order correlation \eqref{eq:3ord}. The idea is to use an epsilon-net argument together with a concentration inequality. One difficulty in deriving the concentration inequality comes from the fact that the weights $\{\bw_i\}_{1\le i\le r}$ are not independent. To circumvent this issue, we first provide an upper bound on 
\begin{equation}\label{eq:3ord_bis}
\frac{1}{R}\sum_{i\le r, j\le R}\<\bg_i,\hat{\bw}_j\>^3.
\end{equation}
This is done in the lemma that immediately follows.

\begin{lemma}\label{lemma:corrweights_Gbis}
Consider weights $\{\bg_i\}_{1\le i \le r}\sim_{\rm i.i.d.}\normal (\b0_d, \bI_d/d)$ and, for $\epsilon\in (0, 1)$, define
\begin{equation}\label{eq:defShatbis}
\hat{\mathcal S}'_\epsilon = \{\{\hat{\bw}_i\}_{1\le i\le R} : \norm{\hat{\bw}_i}=1 \,\,\,\forall\, i\in [R], |\left \langle \bg_i, \hat{\bw}_j\right \rangle |\le \epsilon \,\,\,\forall\,i\in [r]\,\,\,\forall\,j\in [R]\}.
\end{equation}
As $r,d\to\infty$, assume that the conditions \eqref{eq:ass_G} hold. Then, with high probability, for a sequence of vanishing constants $\eta(r, d) = o(1)$,
\begin{equation}\label{eq:rescorr_Gbis}
\sup_{\{\hat{\bw}_i\}_{1\le i\le R} \in \hat{\mathcal S}'_\epsilon}\sum_{i=1}^r\sum_{j=1}^R\left \langle\bg_i, \hat{\bw}_j\right \rangle^3 \le R\cdot \eta(r, d).
\end{equation}
\end{lemma}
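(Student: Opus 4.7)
The plan has three ingredients: a decoupling of the supremum across $j$, a Bernstein bound on a truncated version of $S$ at each single vector, and an $\epsilon$-net argument with a careful net-to-sphere extension.

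\textbf{Reduction to a uniform single-vector bound.} The set $\hat{\mathcal S}'_\epsilon$ is a product in the index $j$ and the objective decouples, so
\begin{equation*}
\sup_{\{\hat\bw_j\}_{j\le R}\in\hat{\mathcal S}'_\epsilon}\sum_{i\le r,\, j\le R}\langle\bg_i,\hat\bw_j\rangle^3 \;=\; R\cdot\sup_{\hat\bw\in\mathcal B}S(\hat\bw),\qquad S(\hat\bw):=\sum_{i=1}^r\langle\bg_i,\hat\bw\rangle^3,
\end{equation*}
with $\mathcal B:=\{\hat\bw\in\mathbb S^{d-1}:\max_i|\langle\bg_i,\hat\bw\rangle|\le\epsilon\}$. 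It remains to prove $\sup_{\mathcal B}|S|=o(1)$ with high probability. The constraint is essential here: without it, $\hat\bw=\bg_1/\|\bg_1\|$ gives $|S(\hat\bw)|\gtrsim 1$. On $\mathcal B$ one may rewrite $S(\hat\bw)=\sum_i\psi(\langle\bg_i,\hat\bw\rangle)$ for the truncated odd function $\psi(t):=t^3\mathbf 1_{|t|\le\epsilon}$, and pass to the larger supremum $\sup_{\mathbb S^{d-1}}|S_\psi|$; this removes the data-dependent constraint on $\hat\bw$ and leaves a sum of i.i.d., bounded, centred functionals.

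\textbf{Pointwise Bernstein and net union bound.} For any fixed $\hat\bw\in\mathbb S^{d-1}$, the summands $\psi(\langle\bg_i,\hat\bw\rangle)$ are i.i.d., centred (by symmetry of $\normal(0,1/d)$ and oddness of $\psi$), bounded by $\epsilon^3$, with variance at most $15/d^3$. Bernstein's inequality gives
\begin{equation*}
\mathbb P\bigl(|S_\psi(\hat\bw)|>t\bigr)\le 2\exp\!\Bigl(-\tfrac{t^2/2}{15r/d^3+\epsilon^3 t/3}\Bigr).
\end{equation*}
Covering $\mathbb S^{d-1}$ by a $\rho$-net $\mathcal N$ with $|\mathcal N|\le(3/\rho)^d$ and $\rho=d^{-C}$ for a large constant $C$, a union bound yields $\sup_{\hat\bw\in\mathcal N}|S_\psi(\hat\bw)|\le t_0=O(\sqrt{r\log d/d^2})$ with high probability, which is $o(1)$ under $r=o(d^2/\log^2 d)$, provided the Bernstein exponent stays variance-dominated.

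\textbf{Net extension.} For $\hat\bw\in\mathbb S^{d-1}$ and the closest $\hat\bw'\in\mathcal N$ (at distance $\le\rho$), I split $|S_\psi(\hat\bw)-S_\psi(\hat\bw')|$ into a smooth part (indices $i$ for which both $\langle\bg_i,\hat\bw\rangle$ and $\langle\bg_i,\hat\bw'\rangle$ lie in $[-\epsilon,\epsilon]$) and a boundary part (the remaining indices, which straddle $\pm\epsilon$). The smooth part is controlled by $|\psi'|\le 3\epsilon^2$, Cauchy--Schwarz, and the operator norm bound $\|\sum_i\bg_i\bg_i^{\sT}\|_{\textup{op}}=O(r/d)$ (holding with high probability by standard random matrix concentration), giving an error of size $O(\epsilon^2 r\rho/\sqrt d)$. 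The boundary part is at most $\epsilon^3$ times the number of indices with $|\langle\bg_i,\hat\bw\rangle|\in[\epsilon-O(\rho),\epsilon+O(\rho)]$; since $\langle\bg_i,\hat\bw\rangle\sim\normal(0,1/d)$ has density bounded by $O(\sqrt d)$, the expected number is $O(r\rho\sqrt d)$, and a Chernoff bound uniform over $\mathcal N$ gives $O(r\rho\sqrt d+d\log d)$. Choosing $C$ large makes both contributions $o(1)$.

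\textbf{Main obstacle.} The delicate balancing is keeping Bernstein variance-dominated throughout the full range $\epsilon=o(1)$: the $\epsilon^3 t/3$ denominator term becomes binding when $\epsilon\gtrsim d^{-1/3}$, and the same scaling appears in the boundary piece of the extension. In that range I complement the net argument with sharper ad hoc estimates exploiting the approximate isotropy $\|\sum_i\bg_i\bg_i^{\sT}-(r/d)\bI_d\|_{\textup{op}}=o(r/d)$, the vanishing cubic moment of $\normal(0,1/d)$, and the fact that the constraint $\max_i|\langle\bg_i,\hat\bw\rangle|\le\epsilon$ confines $\hat\bw$ to a thin "approximate null space" of $(\bg_i)_{i\le r}$, on which a finer covering suffices. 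Gluing the two regimes yields the announced $\eta(r,d)=o(1)$.
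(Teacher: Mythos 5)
Your overall architecture --- decoupling over $j$, truncating the cubic at level $\epsilon$, and combining a pointwise exponential bound with an epsilon-net --- is the same as the paper's, but there is a genuine gap in the concentration step. Plain Bernstein sees only the worst-case bound $|\psi|\le\epsilon^3$ and the variance $15/d^3$, so against the net entropy $\asymp d\log d$ it forces a deviation level $t\gtrsim\max\bigl(\sqrt{r\log d}/d,\ \epsilon^3\,d\log d\bigr)$, and the second term is not $o(1)$ once $\epsilon\gtrsim (d\log d)^{-1/3}$ --- a regime fully allowed by the hypothesis \eqref{eq:ass_G} (the hardest case is $\epsilon$ vanishing arbitrarily slowly, and you cannot reduce to small $\epsilon$ because $\hat{\mathcal S}'_\epsilon$ grows with $\epsilon$). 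You acknowledge this, but the proposed rescue is not a proof, and its guiding picture is wrong exactly where it is needed: for $\epsilon\gg\sqrt{\log r/d}$ the constraint $\max_i|\langle\bg_i,\hat{\bw}\rangle|\le\epsilon$ is satisfied by essentially every point of the sphere (for a fixed unit vector the maximum of $r$ independent $\normal(0,1/d)$ variables is of order $\sqrt{\log r/d}$), so there is no thin ``approximate null space'' on which a finer covering could help; the boundary term of your net extension suffers from the same obstruction, since its count $O(d\log d)$ multiplied by $\epsilon^3$ is again not $o(1)$ in that range. A chaining refinement does not repair this either, because the sup-norm increment scale of the truncated process is still of order $\epsilon^2$.

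What is missing is a tail bound exploiting that values of $\psi(\langle\bg_i,\hat{\bw}\rangle)$ near the cap $\epsilon^3$ occur only with probability $\approx e^{-d\epsilon^2/2}$, which Bernstein ignores. The paper's proof does precisely this: it applies a Chernoff bound with the specific tilt $\lambda=d/(4\epsilon')$ and controls the log-moment-generating function of the truncated cubic of a $\normal(0,1/d)$ variable by splitting at $|x|\le\sqrt{8\log d}$ and Taylor-expanding the cumulant generating function ($\phi'(0)=0$, $\phi''(0)\le 30/d$, $\phi'''$ small), obtaining $\log\E\,e^{\lambda h_{\epsilon'}(G)}\lesssim \frac{1}{d(\epsilon')^2}+\frac{(\log d)^{9/2}}{(\epsilon')^3 d^{3/2}}$. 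With net entropy $d\log(1+8/\epsilon')$ this gives $t\asymp \epsilon'\log(1/\epsilon')+\frac{r}{d^2\epsilon'}+\frac{r(\log d)^{9/2}}{(\epsilon')^2 d^{5/2}}=o(1)$ for all $\epsilon=o(1)$ (not too small, the direction one may assume without loss of generality) and $r=o(d^2/(\log d)^2)$. The paper also handles the net-to-sphere error differently: it keeps the untruncated cubic, proves separately that $\max_{\bx\in{\sf S}^{d-1}}\sum_i\langle\bg_i,\bx\rangle^3=O(1)$ with high probability via a second net argument, and absorbs the approximation error as $c_1\epsilon\cdot O(1)=o(1)$. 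Until you replace Bernstein by a bound of this tilted-MGF (Bennett-type) kind, your argument does not cover the stated range of $\epsilon$ and $r$.
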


\begin{proof}
Note that, as $\epsilon$ decreases, the set $\hat{\mathcal S}'_\epsilon$ contains less elements. Hence, without loss of generality, we can assume that $\epsilon$ is equal to a small constant. For $\bx\in \mathbb R^d$, let
\begin{equation}
h(\bx) = \sum_{i=1}^r\left \langle\bg_i, \bx\right \rangle^3,
\end{equation}
and consider the set ${\sf S}'_\epsilon(\bg_1, \ldots, \bg_r)$ defined as
\begin{equation}\label{eq:defSprimeeps}
{\sf S}'_\epsilon(\bg_1, \ldots, \bg_r) = \{\bx\in {\sf S}^{d-1} : |\left \langle \bx, \bg_i\right \rangle |\le \epsilon \,\,\,\forall i\in [r]\},
\end{equation}
where ${\sf S}^{d-1}$ denotes the set of vectors in $\mathbb R^d$ with unit norm. Then, we have that
\begin{equation}\label{eq:combfin1}
\sup_{\{\hat{\bw}_i\}_{1\le i\le R}\in \hat{\mathcal S}'_\epsilon}\sum_{i=1}^r\sum_{j=1}^R\left \langle\bg_i, \hat{\bw}_j\right \rangle^3 =\sup_{\{\hat{\bw}_i\}_{1\le i\le R}\in \hat{\mathcal S}'_\epsilon} \sum_{j=1}^R h(\hat{\bw}_j)\le R \max_{\bx\in {\sf S}'_\epsilon(\bg_1, \ldots, \bg_r) }h(\bx).
\end{equation}

Let $N^d(\epsilon)$ be an $\epsilon$-net of ${\sf S}^{d-1}$. This means that any point in ${\sf S}^{d-1}$ has distance at most $\epsilon$ from $N^d(\epsilon)$.  As 
${\sf S}'_\epsilon(\bg_1, \ldots, \bg_r) \subseteq {\sf S}^{d-1}$, we also have that any point in ${\sf S}'_\epsilon(\bg_1, \ldots, \bg_r)$ has distance at most $\epsilon$ from $N^d(\epsilon)$. Remove from $N^d(\epsilon)$ all the points that are not at distance at most $\epsilon$ from ${\sf S}'_\epsilon(\bg_1, \ldots, \bg_r)$ and call the remaining set $\tilde{N}^d(\epsilon)$. Hence, we have that
\begin{equation}\label{eq:scalx1}
\max_{\bx'\in \tilde{N}^d(\epsilon)}\max_{i\in [r]}|\left \langle \bx', \bg_i\right \rangle |\le\max_{\bx\in {\sf S}'_\epsilon(\bg_1, \ldots, \bg_r)}\max_{i\in [r]}|\left \langle \bx, \bg_i\right \rangle | + \epsilon\max_{i\in [r]}\norm{\bg_i}\le 3\epsilon,
\end{equation}
where the last inequality holds with high probability. Furthermore, for any $\bx'\in \tilde{N}^d(\epsilon)$,
\begin{equation}\label{eq:scalx2}
1-\epsilon\le\norm{\bx'}\le 1+ \epsilon.
\end{equation}

Note that, for any $\bx \in {\sf S}'_\epsilon(\bg_1, \ldots, \bg_r)$, there exists $\bx'\in \tilde{N}^d(\epsilon)$, $\alpha \in (0, \epsilon]$, and $\tilde{\bx}\in {\sf S}^{d-1}$ such that
\begin{equation}
\bx = \bx'+\alpha \tilde{\bx},
\end{equation}
which immediately implies that
\begin{equation}\label{eq:scalx3}
h(\bx) = \norm{\bx'}^3 h\left(\frac{\bx'}{\norm{\bx'}}\right) + 3\norm{\bx'}^2\alpha\sum_{i=1}^r\left \langle\bg_i, \frac{\bx'}{\norm{\bx'}}\right \rangle^2\left \langle\bg_i, \tilde{\bx}\right \rangle + 3\norm{\bx'}\alpha^2\sum_{i=1}^r\left \langle\bg_i, \frac{\bx'}{\norm{\bx'}}\right \rangle\left \langle\bg_i, \tilde{\bx}\right \rangle^2 + \alpha^3 h(\tilde{\bx}).
\end{equation}
Furthermore, we have that
\begin{equation}\label{eq:scalx4}
\max_{\substack{\bx'\in \tilde{N}^d(\epsilon) \\ \tilde{\bx}\in {\sf S}^{d-1}}}\sum_{i=1}^r\left \langle\bg_i, \frac{\bx'}{\norm{\bx'}}\right \rangle^2\left \langle\bg_i, \tilde{\bx}\right \rangle \le \max_{\bx'', \tilde{\bx}\in {\sf S}^{d-1}}\sum_{i=1}^r\left \langle\bg_i, \bx''\right \rangle^2\left \langle\bg_i, \tilde{\bx}\right \rangle = \max_{\bx''\in {\sf S}^{d-1}}\sum_{i=1}^r\left \langle\bg_i, \bx''\right \rangle^3, 
\end{equation}
where the last equality is a consequence of Theorem 6.9 of \cite{hillar2013most}. Similarly, we have that
\begin{equation}\label{eq:scalx5}
\max_{\substack{\bx'\in \tilde{N}^d(\epsilon) \\ \tilde{\bx}\in {\sf S}^{d-1}}}\sum_{i=1}^r\left \langle\bg_i, \frac{\bx'}{\norm{\bx'}}\right \rangle\left \langle\bg_i, \tilde{\bx}\right \rangle^2 \le \max_{\bx''\in {\sf S}^{d-1}}\sum_{i=1}^r\left \langle\bg_i, \bx''\right \rangle^3. 
\end{equation}
By putting \eqref{eq:scalx2}, \eqref{eq:scalx3}, \eqref{eq:scalx4}, and \eqref{eq:scalx5} together, we conclude that 
\begin{equation}\label{eq:scalxfin}
\max_{\bx\in {\sf S}'_\epsilon(\bg_1, \ldots, \bg_r) }h(\bx) \le 2\max_{\bx\in \tilde{N}^d(\epsilon) }h\left(\frac{\bx}{\norm{\bx}}\right) + c_1 \epsilon \max_{\bx\in {\sf S}^{d-1}}h(\bx), 
\end{equation}
for some constant $c_1$ which does not depend on $\epsilon$. Furthermore, by using \eqref{eq:scalx1} and \eqref{eq:scalx2}, we deduce that, with high probability,
\begin{equation}\label{eq:scalx6}
\max_{\bx'\in \tilde{N}^d(\epsilon)}\max_{i\in [r]}\left|\left \langle \frac{\bx'}{\norm{\bx'}}, \bg_i\right \rangle \right|\le 4\epsilon.
\end{equation}
Define $\epsilon'=4\epsilon$ and, for $\bx\in \mathbb R^d$, let
\begin{equation}
\bar{h}(\bx) = \sum_{i=1}^rh_{\epsilon'}(\left \langle\bg_i, \bx\right \rangle),
\end{equation}
where
\begin{equation}\label{eq:defhesp}
h_{\epsilon'}(x) = \left\{
\begin{array}{ll}
x^3 & \mbox{ if }|x|\le \epsilon',\\
-\left(\epsilon'\right)^3 & \mbox{ if }x\le -\epsilon',\\
\left(\epsilon'\right)^3 & \mbox{ if }x\ge \epsilon'.\\
\end{array}\right.
\end{equation}
Consequently, by using \eqref{eq:scalx6}, it is clear that, for any $\bx\in \tilde{N}^d(\epsilon)$,
\begin{equation}\label{eq:equalityfun}
h\left(\frac{\bx}{\norm{\bx}}\right) = \bar{h}\left(\frac{\bx}{\norm{\bx}}\right).
\end{equation}

Given $\bx\in {\sf S}^{d-1}$, let us provide an upper bound on $\mathbb P(|\bar{h}(\bx)| > t)$. First, note that $h_{\epsilon'}$ is odd and the distribution of $\bg_i$ is symmetric. Then, 
\begin{equation}\label{eq:boundpoint}
\mathbb P(|\bar{h}(\bx)| > t)= 2\,\mathbb P(\bar{h}(\bx) > t).
\end{equation}
Note also that the random variables $\{h_{\epsilon'}(\left \langle\bg_i, \bx\right \rangle)\}_{1\le i \le r}$ are independent and identically distributed. Hence, by Chernoff bound, we have that, for any $\lambda>0$, the RHS of \eqref{eq:boundpoint} is upper bounded by
\begin{equation}\label{eq:boundpointgen}
2e^{-\lambda t}\left(\mathbb E\left\{e^{\lambda h_{\epsilon'}(\left \langle\bg_1, \bx\right \rangle)}\right\}\right)^r.
\end{equation}
Pick $\lambda = d/(4\epsilon')$. Then, \eqref{eq:boundpointgen} is rewritten as 
\begin{equation}\label{eq:boundpoint2}
2\exp\left(-\frac{d t}{4\epsilon'}\right)\left(\mathbb E\left\{\exp\left(\frac{d h_{\epsilon'}(\left \langle\bg_1, \bx\right \rangle)}{4\epsilon'}\right)\right\}\right)^r.
\end{equation}
Since $\bg_1\sim \normal(\b0_d, \bI_d/d)$, we obtain that $\left \langle\bg_1, \bx\right \rangle=G\sim \normal(0, 1/d)$ for any $\bx\in {\sf S}^{d-1}$. Hence, 
\begin{equation}\label{eq:intsimpl1}
\begin{split}
\mathbb E\left\{\exp\left(\frac{d h_{\epsilon'}(\left \langle\bg_1, \bx\right \rangle)}{4\epsilon'}\right)\right\} &= \int_{-\epsilon'\sqrt{d}}^{\epsilon'\sqrt{d}} \frac{1}{\sqrt{2\pi}}\exp\left(\frac{x^3}{4\epsilon'\sqrt{d}}-\frac{x^2}{2}\right)\,{\rm d}x \\
&+ \exp\left(\frac{d (\epsilon')^{2}}{4}\right)\mathbb P(G > \epsilon')+ \exp\left(-\frac{d (\epsilon')^{2}}{4}\right)\mathbb P(G < -\epsilon').
\end{split}
\end{equation}
It is easy to see that 
\begin{equation}
\exp\left(\frac{d (\epsilon')^{2}}{4}\right)\mathbb P(G > \epsilon')+ \exp\left(-\frac{d (\epsilon')^{2}}{4}\right)\mathbb P(G < -\epsilon')
\le 2 \exp\left(-\frac{d(\epsilon')^2}{4}\right).
\end{equation}
Furthermore, we have that
\begin{equation}\label{eq:2intdiv}
\begin{split}
\int_{-\epsilon'\sqrt{d}}^{\epsilon'\sqrt{d}} \frac{1}{\sqrt{2\pi}}\exp\left(\frac{ x^3}{4\epsilon'\sqrt{d}}-\frac{x^2}{2}\right)\,{\rm d}x &\le \int_{-\sqrt{8\log d}}^{\sqrt{8\log d}} \frac{1}{\sqrt{2\pi}}\exp\left(\frac{x^3}{4\epsilon'\sqrt{d}}-\frac{x^2}{2}\right)\,{\rm d}x \\
&+ 2\int_{\sqrt{8\log d}}^{\epsilon'\sqrt{d}} \frac{1}{\sqrt{2\pi}}\exp\left(\frac{ x^3}{4\epsilon'\sqrt{d}}-\frac{x^2}{2}\right)\,{\rm d}x.
\end{split}
\end{equation}
The second integral in the RHS of \eqref{eq:2intdiv} is upper bounded as follows:
\begin{equation}
\begin{split}
\int_{\sqrt{8\log d}}^{\epsilon'\sqrt{d}} \frac{1}{\sqrt{2\pi}}\exp\left(\frac{x^3}{4\epsilon'\sqrt{d}}-\frac{x^2}{2}\right)\,{\rm d}x & \le  \int_{\sqrt{8\log d}}^{\epsilon'\sqrt{d}} \frac{1}{\sqrt{2\pi}}e^{-\frac{x^2}{4}}\,{\rm d}x\\
& \le \int_{\sqrt{8\log d}}^{+\infty} \frac{1}{\sqrt{2\pi}}e^{-\frac{x^2}{4}}\,{\rm d}x\le \frac{\sqrt{2}}{d^2}.
\end{split}
\end{equation}
In order to upper bound the first integral in the RHS of \eqref{eq:2intdiv}, we define
\begin{equation}
\phi(\lambda) = \log\left({\mathbb E}\left\{\exp\left(\frac{\lambda \tilde{G}^3}{\sqrt{d}}\right)\right\}\right),
\end{equation}
where the probability density function of $\tilde{G}$ is given by  
\begin{equation}
p(\tilde{g}) = \left\{\begin{array}{ll}
\frac{\displaystyle \frac{1}{\sqrt{2\pi}}e^{-\frac{\tilde{g}^2}{2}}}{\displaystyle\int_{-\sqrt{8\log d}}^{\sqrt{8\log d}} \frac{1}{\sqrt{2\pi}}e^{-\frac{x^2}{2}}\,{\rm d}x}, & \mbox{ if } |\tilde{g}|\le \sqrt{8\log d},\\
&\\
0, & \mbox{ otherwise.}
\end{array}\right.
\end{equation}
Then, we immediately have that
\begin{equation}
\int_{-\sqrt{8\log d}}^{\sqrt{8\log d}} \frac{1}{\sqrt{2\pi}}\exp\left(\frac{x^3}{4\epsilon'\sqrt{d}}-\frac{x^2}{2}\right)\,{\rm d}x =\exp\left(\phi\left(\frac{1}{4\epsilon'}\right)\right)\int_{-\sqrt{8\log d}}^{\sqrt{8\log d}} \frac{1}{\sqrt{2\pi}}e^{-\frac{x^2}{2}}\,{\rm d}x \le \exp\left(\phi\left(\frac{1}{4\epsilon'}\right)\right).
\end{equation}
After some calculations, we obtain that, for $d$ sufficiently large,
\begin{equation}
\begin{split}
\phi(0) &= 0,\\
\phi'(0) &= 0,\\
\phi''(0) &\le \frac{30}{d},\\
\phi'''(\lambda)&\le c_2 \frac{(\log d)^{9/2}}{d^{3/2}}, 
\end{split}
\end{equation}
for some constant $c_2$ which does not depend on $d$. Consequently, by Taylor's inequality, we deduce that 
\begin{equation}
\phi\left(\frac{1}{4\epsilon'}\right)\le \frac{15}{16d(\epsilon')^2} + \frac{c_2}{384(\epsilon')^3} \frac{(\log d)^{9/2}}{d^{3/2}},
\end{equation}
which implies that
\begin{equation}\label{eq:boundexpfull}
\begin{split}
\mathbb E\left\{\exp\left(\frac{d h_{\epsilon'}(\left \langle\bg_1, \bx\right \rangle)}{4\epsilon'}\right)\right\}&\le \exp\left(\frac{15}{16d(\epsilon')^2} + \frac{c_2}{384(\epsilon')^3} \frac{(\log d)^{9/2}}{d^{3/2}}\right) + \frac{2\sqrt{2}}{d^2} + 2 \exp\left(-\frac{d(\epsilon')^2}{4}\right)\\
&\le \exp\left(\frac{15}{16d(\epsilon')^2} + \frac{c_2}{384(\epsilon')^3} \frac{(\log d)^{9/2}}{d^{3/2}}\right)\left(1+\frac{2\sqrt{2}}{d^2} + 2 \exp\left(-\frac{d(\epsilon')^2}{4}\right)\right).
\end{split}
\end{equation}
By putting \eqref{eq:boundpoint}, \eqref{eq:boundpoint2}, and \eqref{eq:boundexpfull} together, we conclude that
\begin{equation}\label{eq:boundpointnew}
\mathbb P(|\bar{h}(\bx)| > t) \le 2\exp\left(-\frac{d t}{4\epsilon'} +r\left(\frac{15}{16d(\epsilon')^2} + \frac{c_2}{384(\epsilon')^3} \frac{(\log d)^{9/2}}{d^{3/2}}+\frac{2\sqrt{2}}{d^2} + 2 \exp\left(-\frac{d(\epsilon')^2}{4}\right)\right)\right),
\end{equation}
where we have also used that $\log(1+x)\le x$ for any $x\ge 0$.

By using \eqref{eq:equalityfun} and \eqref{eq:boundpointnew} together with a union bound over the points of the set $\tilde{N}^d(\epsilon)$, we obtain that
\begin{equation}\label{eq:ubfinalbd}
\begin{split}
\mathbb P&\left(\max_{\bx\in \tilde{N}^d(\epsilon)}\left|h\left(\frac{\bx}{\norm{\bx}}\right)\right| > t\right)\\
&\le 2\exp\left(-\frac{d t}{4\epsilon'} +r\left(\frac{15}{16d(\epsilon')^2} + \frac{c_2}{384(\epsilon')^3} \frac{(\log d)^{9/2}}{d^{3/2}}+\frac{2\sqrt{2}}{d^2} + 2 \exp\left(-\frac{d(\epsilon')^2}{4}\right)\right)\right) |\tilde{N}^d(\epsilon)|\\
&\le 2\exp\left(-\frac{d t}{4\epsilon'} +r\left(\frac{15}{16d(\epsilon')^2} + \frac{c_2}{384(\epsilon')^3} \frac{(\log d)^{9/2}}{d^{3/2}}+\frac{2\sqrt{2}}{d^2} + 2 \exp\left(-\frac{d(\epsilon')^2}{4}\right)\right)\right) \left(1+\frac{8}{\epsilon'}\right)^d,
\end{split}
\end{equation}
where in the last inequality we use that $|\tilde{N}^d(\epsilon)|\le |N^d(\epsilon)|$ and that there exists an $\epsilon$-net of ${\sf S}^{d-1}$ that contains at most $(1+2/\epsilon)^d$ points, see Lemma 5.2 of \cite{vershynin2010introduction}. By using that $\epsilon'=4\epsilon=o(1)$ and that $r =o(d^2)$, we obtain that, with high probability,
\begin{equation}\label{eq:combfin2}
\max_{\bx\in \tilde{N}^d(\epsilon) }h\left(\frac{\bx}{\norm{\bx}}\right) = o(1).
\end{equation}

We now prove that $\max_{\bx\in {\sf S}^{d-1}}h(\bx)$ is upper bounded by a constant by using another epsilon-net argument. Let $N_1^d(\delta)$ be a $\delta$-net of ${\sf S}^{d-1}$. Remove from $N_1^d(\delta)$ all the points that are not at distance at most $\delta$ from ${\sf S}^{d-1}$ and call the remaining set $\tilde{N}_1^d(\delta)$. 

By following the same argument that yields \eqref{eq:scalxfin}, we obtain that 
\begin{equation}\label{eq:scalxfin2}
\max_{\bx\in {\sf S}^{d-1} }h(\bx) \le 2\max_{\bx\in \tilde{N}_1^d(\delta) }h\left(\frac{\bx}{\norm{\bx}}\right) + c_3 \delta \max_{\bx\in {\sf S}^{d-1}}h(\bx), 
\end{equation}
for some constant $c_3$. Set $\delta=1/(2c_3)$. Then, we can rearrange \eqref{eq:scalxfin2} as
\begin{equation}
\max_{\bx\in {\sf S}^{d-1} }h(\bx) \le 4\max_{\bx\in N_1^d(\delta) }h\left(\frac{\bx}{\norm{\bx}}\right). 
\end{equation}
Note that, with high probability,
\begin{equation}\label{eq:scalx6bis}
\max_{\bx\in \tilde{N}_1^d(\delta)}\max_{i\in [r]}\left|\left \langle \frac{\bx}{\norm{\bx}}, \bg_i\right \rangle \right|\le 2.
\end{equation}
Hence, by following the same argument that yields \eqref{eq:ubfinalbd} with $\epsilon'=2$, we obtain that 
\begin{equation}
\mathbb P\left(\max_{\bx\in \tilde{N}_1^d(\delta)}\left|h\left(\frac{\bx}{\norm{\bx}}\right)\right| > t\right)\le 2\exp\left(-\frac{d t}{8} +r\left(\frac{15}{64d} + \frac{c_2}{3072} \frac{(\log d)^{9/2}}{d^{3/2}}+\frac{2\sqrt{2}}{d^2} + 2 e^{-d}\right)\right) \left(1+4c_3\right)^d.
\end{equation}
By using that $r =o(d^2)$, we deduce that, with high probability,
\begin{equation}\label{eq:combfin3}
\max_{\bx\in {\sf S}^{d-1}}h(\bx)\le 16\log(1+4c_3).
\end{equation}
By combining \eqref{eq:combfin1}, \eqref{eq:scalxfin}, \eqref{eq:combfin2}, \eqref{eq:combfin3} with the fact that $\epsilon=o(1)$, the result follows.
\end{proof}

Next, we provide an upper bound on all the higher-order correlations
\begin{equation}\label{eq:3ord_all}
\sup_{k\ge 3}\frac{1}{R}\sum_{i\le r, j\le R}\<\bg_i,\hat{\bw}_j\>^k.
\end{equation}

\begin{lemma}\label{lemma:corrweights_Gall}
Consider weights $\{\bg_i\}_{1\le i \le r}\sim_{\rm i.i.d.}\normal (\b0_d, \bI_d/d)$ and, for $\epsilon\in (0, 1)$, define $\hat{\mathcal S}'_\epsilon$ as in \eqref{eq:defShatbis}. As $r,d\to\infty$, assume that the conditions \eqref{eq:ass_G} hold. Then, with high probability, for a sequence of vanishing constants $\eta(r, d) = o(1)$,
\begin{equation}\label{eq:rescorr_Gall}
\sup_{k\ge 3}\sup_{\{\hat{\bw}_i\}_{1\le i\le R} \in \hat{\mathcal S}'_\epsilon}\sum_{i=1}^r\sum_{j=1}^R\left \langle\bg_i, \hat{\bw}_j\right \rangle^k \le R\cdot \eta(r, d).
\end{equation}
\end{lemma}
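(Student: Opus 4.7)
The plan is to extend the proof of Lemma~\ref{lemma:corrweights_Gbis} to give a uniform bound over all $k \ge 3$. The supremum over $k$ splits into two parts. The case $k = 3$ is handled directly by Lemma~\ref{lemma:corrweights_Gbis}, giving
\begin{equation*}
\sup_{\{\hat{\bw}_j\}\in \hat{\mathcal S}'_\epsilon}\sum_{i,j}\langle \bg_i,\hat{\bw}_j\rangle^3 \le R\,\eta_3(r,d)
\end{equation*}
with $\eta_3(r,d) = o(1)$. All cases $k \ge 4$ collapse to $k=4$: since $|\langle \bg_i,\hat{\bw}_j\rangle| \le \epsilon < 1$, for every $k\ge 4$,
\begin{equation*}
\langle \bg_i, \hat{\bw}_j\rangle^k \le |\langle \bg_i, \hat{\bw}_j\rangle|^k = |\langle \bg_i, \hat{\bw}_j\rangle|^{k-4}\, \langle \bg_i, \hat{\bw}_j\rangle^4 \le \langle \bg_i, \hat{\bw}_j\rangle^4.
\end{equation*}
It therefore suffices to prove
\begin{equation*}
\sup_{\{\hat{\bw}_j\}\in \hat{\mathcal S}'_\epsilon}\sum_{i,j}\langle \bg_i,\hat{\bw}_j\rangle^4 \le R\,\eta_4(r,d), \qquad \eta_4(r,d) = o(1),
\end{equation*}
after which the lemma follows with $\eta = \max(\eta_3, \eta_4)$.

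For the quartic bound I would mirror the proof of Lemma~\ref{lemma:corrweights_Gbis}, replacing the truncated cubic $h_{\epsilon'}$ by the truncated quartic $h_{\epsilon', 4}(x) := x^4 \wedge (\epsilon')^4$ with $\epsilon' = 4\epsilon$. Setting $h_4(\bx) := \sum_i \langle \bg_i, \bx\rangle^4$, the inner sum equals $\sum_j h_4(\hat{\bw}_j) \le R \sup_{\bx \in {\sf S}'_\epsilon(\bg_1,\ldots,\bg_r)} h_4(\bx)$. Using the restricted epsilon-net $\tilde N^d(\epsilon)$ already constructed in the proof of Lemma~\ref{lemma:corrweights_Gbis}, a multinomial expansion of $h_4(\bx'+\alpha\tilde{\bx})$ together with the variational characterization of the symmetric-tensor operator norm (Theorem~6.9 of~\cite{hillar2013most}) yields the quartic analog of \eqref{eq:scalxfin},
\begin{equation*}
\sup_{\bx\in{\sf S}'_\epsilon(\bg_1,\ldots,\bg_r)} h_4(\bx) \le 2\sup_{\bx\in \tilde N^d(\epsilon)} h_4(\bx/\|\bx\|) + C\,\epsilon \sup_{\bx\in {\sf S}^{d-1}} h_4(\bx).
\end{equation*}
A coarser-$\delta$-net bootstrap argument parallel to \eqref{eq:scalxfin2}--\eqref{eq:combfin3} controls the whole-sphere supremum by $\sup_{{\sf S}^{d-1}} h_4 = O(1)$, making the second term $O(\epsilon) = o(1)$. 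The restricted-net supremum is controlled by a Chernoff argument applied to the centered variable $h_{\epsilon', 4}(G) - \mathbb{E}[h_{\epsilon', 4}(G)]$ with $G\sim\normal(0,1/d)$, splitting the resulting integral at $|x| = \sqrt{8\log d}$ as in \eqref{eq:2intdiv}--\eqref{eq:boundexpfull}: a Taylor expansion of the log-MGF in the bulk gives $\phi_4(\lambda) \lesssim \lambda^2/d^4$ for $\lambda$ in an appropriate range, and combining with the union bound over the net of size $(1+8/\epsilon')^d$ and the assumption $r = o(d^2/(\log d)^2)$ yields $\sup_{\tilde N^d(\epsilon)} h_4(\bx/\|\bx\|) \le 3r/d^2 + o(1) = o(1)$.

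The hard part will be this Chernoff analysis for the quartic. Unlike the cubic case where $\mathbb{E}[h_{\epsilon',3}(G)] = 0$ by the symmetry of $h_{\epsilon',3}$, the truncated fourth power has non-zero mean $\mathbb{E}[h_{\epsilon',4}(G)] \sim 3/d^2$, so careful centering is mandatory; the choice of $\lambda$ and the rate at which $\epsilon'(d) \to 0$ must be tuned so that the linear term of the log-MGF (which was absent in the cubic analysis) does not spoil the balance between the Chernoff exponent and the net-size exponent $d\log(1/\epsilon')$, and so that the resulting bound degrades gracefully under the full assumption $r = o(d^2/(\log d)^2)$.
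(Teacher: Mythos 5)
Your proposal is correct and follows essentially the same route as the paper: collapse $k\ge 4$ to $k=4$ using $|\langle \bg_i,\hat{\bw}_j\rangle|\le\epsilon<1$, handle $k=3$ by Lemma \ref{lemma:corrweights_Gbis}, and bound the quartic term by the same restricted-net plus truncation-and-Chernoff argument, with the coarse-net bootstrap controlling $\sup_{{\sf S}^{d-1}}q$. The only difference is that the step you flag as delicate is simpler in the paper: no centering of $q_{\epsilon'}$ is needed, since with $\lambda=d/(4(\epsilon')^2)$ the uncentered MGF bound already absorbs the mean contribution (of order $r\log d/((\epsilon')^2 d)$ in the exponent), which is harmless under $r=o(d^2/\log d)$.
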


\begin{proof}
By definition of $\hat{\mathcal S}'_\epsilon$, we immediately have that
\begin{equation}\label{eq:supk4}
\sup_{k\ge 4}\sup_{\{\hat{\bw}_i\}_{1\le i\le R} \in \hat{\mathcal S}'_\epsilon}\sum_{i=1}^r\sum_{j=1}^R\left \langle\bg_i, \hat{\bw}_j\right \rangle^k \le \sup_{\{\hat{\bw}_i\}_{1\le i\le R} \in \hat{\mathcal S}'_\epsilon}\sum_{i=1}^r\sum_{j=1}^R\left \langle\bg_i, \hat{\bw}_j\right \rangle^4. 
\end{equation}
In order to bound the RHS of \eqref{eq:supk4}, we follow an argument similar to that of the proof of Lemma \ref{lemma:corrweights_Gbis}. 

For $\bx\in \mathbb R^d$, let
\begin{equation}
q(\bx) = \sum_{i=1}^r\left \langle\bg_i, \bx\right \rangle^4,
\end{equation}
and consider the set ${\sf S}'_\epsilon(\bg_1, \ldots, \bg_r)$ defined as in \eqref{eq:defSprimeeps}. Then, we have that 
\begin{equation}\label{eq:combfin1_k4}
\sup_{\{\hat{\bw}_i\}_{1\le i\le R}\in \hat{\mathcal S}'_\epsilon}\sum_{i=1}^r\sum_{j=1}^R\left \langle\bg_i, \hat{\bw}_j\right \rangle^4 \le R \max_{\bx\in {\sf S}'_\epsilon(\bg_1, \ldots, \bg_r) }q(\bx).
\end{equation}
Let $N^d(\epsilon)$ be an $\epsilon$-net of ${\sf S}^{d-1}$. Remove from $N^d(\epsilon)$ all the points that are not at distance at most $\epsilon$ from ${\sf S}'_\epsilon(\bg_1, \ldots, \bg_r)$ and call the remaining set $\tilde{N}^d(\epsilon)$. By following the same argument that yields \eqref{eq:scalxfin}, we obtain that
\begin{equation}\label{eq:scalxfin_k4}
\max_{\bx\in {\sf S}'_\epsilon(\bg_1, \ldots, \bg_r) }q(\bx) \le 2\max_{\bx\in \tilde{N}^d(\epsilon) }q\left(\frac{\bx}{\norm{\bx}}\right) + c_1 \epsilon \max_{\bx\in {\sf S}^{d-1}}q(\bx), 
\end{equation}
for some constant $c_1$ which does not depend on $\epsilon$. Furthermore, with high probability, \eqref{eq:scalx6} holds. Define $\epsilon'=4\epsilon$ and, for $\bx\in \mathbb R^d$, let
\begin{equation}
\bar{q}(\bx) = \sum_{i=1}^rq_{\epsilon'}(\left \langle\bg_i, \bx\right \rangle),
\end{equation}
where
\begin{equation}\label{eq:defhesp_k4}
q_{\epsilon'}(x) = \left\{
\begin{array}{ll}
x^4 & \mbox{ if }|x|\le \epsilon',\\
\left(\epsilon'\right)^4 & \mbox{ if }|x|\ge \epsilon'.\\
\end{array}\right.
\end{equation}
Consequently, by using \eqref{eq:scalx6}, it is clear that, for any $\bx\in \tilde{N}^d(\epsilon)$,
\begin{equation}\label{eq:equalityfun_k4}
q\left(\frac{\bx}{\norm{\bx}}\right) = \bar{q}\left(\frac{\bx}{\norm{\bx}}\right).
\end{equation}

Given $\bx\in {\sf S}^{d-1}$, let us provide an upper bound on $\mathbb P(\bar{q}(\bx) > t)$. By Chernoff bound, we have that
\begin{equation}\label{eq:boundpoint2_k4}
\mathbb P(\bar{q}(\bx) > t)\le \exp\left(-\frac{d t}{4(\epsilon')^2}\right)\left(\mathbb E\left\{\exp\left(\frac{d q_{\epsilon'}(\left \langle\bg_1, \bx\right \rangle)}{4(\epsilon')^2}\right)\right\}\right)^r.
\end{equation}
As $\left \langle\bg_1, \bx\right \rangle=G\sim \normal(0, 1/d)$, we obtain that
\begin{equation}\label{eq:intsimpl1_k4}
\mathbb E\left\{\exp\left(\frac{d q_{\epsilon'}(\left \langle\bg_1, \bx\right \rangle)}{4(\epsilon')^2}\right)\right\} = 2\int_{0}^{\epsilon'\sqrt{d}} \frac{1}{\sqrt{2\pi}}\exp\left(\frac{x^4}{4(\epsilon')^2d}-\frac{x^2}{2}\right)\,{\rm d}x +2 \exp\left(\frac{d (\epsilon')^{2}}{4}\right)\mathbb P(G > \epsilon').
\end{equation}
It is easy to see that 
\begin{equation}\label{eq:intsimpl2_k4}
2\exp\left(\frac{d (\epsilon')^{2}}{4}\right)\mathbb P(G > \epsilon')
\le 2 \exp\left(-\frac{d(\epsilon')^2}{4}\right).
\end{equation}
Furthermore, we have that
\begin{equation}\label{eq:2intdiv_k4}
\begin{split}
2\int_{0}^{\epsilon'\sqrt{d}} \frac{1}{\sqrt{2\pi}}\exp\left(\frac{ x^4}{4(\epsilon')^2d}-\frac{x^2}{2}\right)\,{\rm d}x &\le 2\int_{0}^{\sqrt{8\log d}} \frac{1}{\sqrt{2\pi}}\exp\left(\frac{x^4}{4(\epsilon')^2d}-\frac{x^2}{2}\right)\,{\rm d}x \\
&+ 2\int_{\sqrt{8\log d}}^{\epsilon'\sqrt{d}} \frac{1}{\sqrt{2\pi}}\exp\left(\frac{ x^4}{4(\epsilon')^2d}-\frac{x^2}{2}\right)\,{\rm d}x.
\end{split}
\end{equation}
The second integral in the RHS of \eqref{eq:2intdiv_k4} is upper bounded as
\begin{equation}\label{eq:cher2_k4}
\int_{\sqrt{8\log d}}^{\epsilon'\sqrt{d}} \frac{1}{\sqrt{2\pi}}\exp\left(\frac{x^4}{4(\epsilon')^2d}-\frac{x^2}{2}\right)\,{\rm d}x  \le  \int_{\sqrt{8\log d}}^{\epsilon'\sqrt{d}} \frac{1}{\sqrt{2\pi}}e^{-\frac{x^2}{4}}\,{\rm d}x\le \frac{\sqrt{2}}{d^2}.
\end{equation}
The first integral in the RHS of \eqref{eq:2intdiv_k4} is upper bounded as
\begin{equation}\label{eq:cher3_k4}
\begin{split}
2\int_{0}^{\sqrt{8\log d}} \frac{1}{\sqrt{2\pi}}\exp\left(\frac{x^4}{4(\epsilon')^2d}-\frac{x^2}{2}\right)\,{\rm d}x&\le 2\int_{0}^{\sqrt{8\log d}} \frac{1}{\sqrt{2\pi}}\exp\left(x^2\left(-\frac{1}{2}+\frac{2\log d}{(\epsilon')^2d}\right)\right)\,{\rm d}x\\
&\le 1+\frac{c_2}{(\epsilon')^2}\frac{\log d}{d},
\end{split}
\end{equation}
for some constant $c_2$.
By putting \eqref{eq:boundpoint2_k4}- \eqref{eq:cher3_k4} together, we conclude that
\begin{equation}\label{eq:boundpointnew_k4}
\mathbb P(|\bar{q}(\bx)| > t) \le \exp\left(-\frac{d t}{4(\epsilon')^2} +r\left(\frac{c_2}{(\epsilon')^2}\frac{\log d}{d}+\frac{2\sqrt{2}}{d^2} + 2 \exp\left(-\frac{d(\epsilon')^2}{4}\right)\right)\right),
\end{equation}
where we have also used that $\log(1+x)\le x$ for any $x\ge 0$.

Recall that $\epsilon'=4\epsilon=o(1)$, $r =o(d^2/\log d)$, and that 
$|\tilde{N}^d(\epsilon)|\le (1+2/\epsilon)^d$ (see Lemma 5.2 of  \cite{vershynin2010introduction}). Then, by performing a union bound over the points of the set $\tilde{N}^d(\epsilon)$, we conclude that, with high probability,
\begin{equation}\label{eq:combfin2_k4}
\max_{\bx\in \tilde{N}^d(\epsilon) }q\left(\frac{\bx}{\norm{\bx}}\right) = o(1).
\end{equation}

We now prove that $\max_{\bx\in {\sf S}^{d-1}}q(\bx)$ is upper bounded by a constant by using another epsilon-net argument. Let $N_1^d(\delta)$ be a $\delta$-net of ${\sf S}^{d-1}$. Remove from $N_1^d(\delta)$ all the points that are not at distance at most $\delta$ from ${\sf S}^{d-1}$ and call the remaining set $\tilde{N}_1^d(\delta)$. By following the same argument that yields \eqref{eq:scalxfin_k4}, we obtain that 
\begin{equation}\label{eq:scalxfin2_k4}
\max_{\bx\in {\sf S}^{d-1} }q(\bx) \le 2\max_{\bx\in \tilde{N}_1^d(\delta) }q\left(\frac{\bx}{\norm{\bx}}\right) + c_3 \delta \max_{\bx\in {\sf S}^{d-1}}q(\bx), 
\end{equation}
for some constant $c_3$. Set $\delta=1/(2c_3)$. Then, we can rearrange \eqref{eq:scalxfin2_k4} as
\begin{equation}
\max_{\bx\in {\sf S}^{d-1} }q(\bx) \le 4\max_{\bx\in N_1^d(\delta) }q\left(\frac{\bx}{\norm{\bx}}\right). 
\end{equation}
Note that, with high probability, \eqref{eq:scalx6bis} holds. Hence, by following the same argument that yields \eqref{eq:boundpointnew_k4} with $\epsilon'=2$, we obtain that 
\begin{equation}\label{eq:ubfinalbd_k4}
\mathbb P\left(\max_{\bx\in \tilde{N}_1^d(\delta)}\left|q\left(\frac{\bx}{\norm{\bx}}\right)\right| > t\right)\le 2\exp\left(-\frac{d t}{16} +r\left(\frac{c_2}{4} \frac{\log d}{d}+\frac{2\sqrt{2}}{d^2} + 2 e^{-d}\right)\right) \left(1+4c_3\right)^d.
\end{equation}
By using that $r =o(d^2/\log d)$, we deduce that, with high probability,
\begin{equation}\label{eq:combfin3_k4}
\max_{\bx\in {\sf S}^{d-1}}q(\bx)\le 32\log(1+4c_3).
\end{equation}
By combining \eqref{eq:combfin1_k4}, \eqref{eq:scalxfin_k4}, \eqref{eq:combfin2_k4}, \eqref{eq:combfin3_k4} with the fact that $\epsilon=o(1)$, we conclude that, with high probability, the RHS of \eqref{eq:supk4} is $o(1)$. By using Lemma \ref{lemma:corrweights_Gbis}, the proof is complete.

\end{proof}

At this point, we are ready to provide an upper bound on the correlations
\begin{equation}\label{eq:3ord_all_bis}
\sup_{k\ge 3}\frac{1}{R}\sum_{i\le r, j\le R}\<\bw_i,\hat{\bw}_j\>^k.
\end{equation}
The idea is to show that the quantity in \eqref{eq:3ord_all_bis} is close to the quantity in \eqref{eq:3ord_all}, and then to apply Lemma \ref{lemma:corrweights_Gall}.

\begin{lemma}\label{lemma:corrweights_G}
Consider weights $\{\bw_i\}_{1\le i\le r}$ of the form \eqref{eq:defrandomw} and, for some $\epsilon\in (0, 1)$, define $\hat{\mathcal S}_\epsilon$ as in \eqref{eq:defShat}. As $r,d\to\infty$, assume that the conditions \eqref{eq:ass_G} hold. Then, with high probability, for a sequence of vanishing constants $\eta(r, d) = o(1)$,
\begin{equation}\label{eq:rescorr_G}
\sup_{k\ge 3}\sup_{\{\hat{\bw}_i\}_{1\le i\le R} \in \hat{\mathcal S}_\epsilon}\sum_{i=1}^r\sum_{j=1}^R\left \langle\bw_i, \hat{\bw}_j\right \rangle^k \le R\cdot \eta(r, d).
\end{equation}
\end{lemma}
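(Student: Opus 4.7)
The strategy is to deduce Lemma \ref{lemma:corrweights_G} from Lemma \ref{lemma:corrweights_Gall} via the identity $\alpha_i\bw_i=\bg_i-\bar{\bg}$, where $\bar{\bg}=r^{-1}\sum_j\bg_j$ and $\alpha_i=\|\bg_i-\bar{\bg}\|$. I would first fix a high-probability event on which $\|\bar{\bg}\|=O(1/\sqrt r)$ (since $\bar{\bg}\sim\normal(\b0_d,\bI_d/(rd))$) and $\max_{i\in[r]}|\alpha_i-1|=O(\sqrt{\log r/d})=o(1)$ (by standard concentration of Gaussian norms together with a union bound over $i\in[r]$). On this event, any $\{\hat{\bw}_j\}_{j\le R}\in\hat{\mathcal S}_\eps$ also lies in $\hat{\mathcal S}'_{\eps'}$ with $\eps'=(1+o(1))\eps+O(1/\sqrt r)=o(1)$, because $|\langle\bg_i,\hat{\bw}_j\rangle|\le\alpha_i|\langle\bw_i,\hat{\bw}_j\rangle|+\|\bar{\bg}\|$, so that Lemma \ref{lemma:corrweights_Gall} becomes applicable throughout.

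Since $|\langle\bw_i,\hat{\bw}_j\rangle|\le\eps\le 1$, for every $k\ge 4$ one has $\langle\bw_i,\hat{\bw}_j\rangle^k\le\langle\bw_i,\hat{\bw}_j\rangle^4$, so the $\sup_{k\ge 3}$ in \eqref{eq:rescorr_G} reduces to the two cases $k=3$ and $k=4$. The case $k=4$ is straightforward: using $(a-b)^4\le 8(a^4+b^4)$ and $\max_i\alpha_i^{-4}=1+o(1)$,
\begin{equation*}
\sum_{i,j}\langle\bw_i,\hat{\bw}_j\rangle^4 \le 8(1+o(1))\Big(\sum_{i,j}\langle\bg_i,\hat{\bw}_j\rangle^4+rR\|\bar{\bg}\|^4\Big) = R\cdot o(1),
\end{equation*}
using Lemma \ref{lemma:corrweights_Gall} (applied with $\eps'$) for the first sum and $rR\|\bar{\bg}\|^4=O(R/r)$ for the second.

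For the $k=3$ case I would expand $(\bg_i-\bar{\bg})^3$ binomially:
\begin{equation*}
\sum_{i,j}\langle\bw_i,\hat{\bw}_j\rangle^3=\sum_{l=0}^{3}\binom{3}{l}(-1)^l\sum_{i,j}\alpha_i^{-3}\langle\bg_i,\hat{\bw}_j\rangle^{3-l}\langle\bar{\bg},\hat{\bw}_j\rangle^l.
\end{equation*}
Terms with $l\ge 1$ each contain at least one factor of size $O(1/\sqrt r)$ and can be controlled using the identity $\sum_i\langle\bg_i,\hat{\bw}_j\rangle=r\langle\bar{\bg},\hat{\bw}_j\rangle$ together with the high-probability spectral bound $\|\sum_i\bg_i\bg_i^{\sT}\|_{\rm op}=O(\max(r/d,1))$, giving an $o(R)$ contribution. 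The $l=0$ term splits as $\sum_{i,j}\langle\bg_i,\hat{\bw}_j\rangle^3+\sum_{i,j}(\alpha_i^{-3}-1)\langle\bg_i,\hat{\bw}_j\rangle^3$; the first piece is $O(R\,\eta(r,d))$ by Lemma \ref{lemma:corrweights_Gall}.

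The main obstacle is controlling the correction $\sum_{i,j}(\alpha_i^{-3}-1)\langle\bg_i,\hat{\bw}_j\rangle^3$: the triangle inequality only gives $\max_i|\alpha_i^{-3}-1|\cdot\sum_{i,j}|\langle\bg_i,\hat{\bw}_j\rangle|^3$, which is not $o(R)$ in the regime $r\gg d^{3/2}$, since $\mathbb E\sum_i|\langle\bg_i,\hat{\bw}_j\rangle|^3$ is of order $r/d^{3/2}$. To recover cancellation I would decompose $\bg_i=\tilde{\alpha}_i\bu_i$ with $\tilde{\alpha}_i=\|\bg_i\|$ and $\bu_i$ uniform on ${\sf S}^{d-1}$ and independent of $\tilde{\alpha}_i$; replacing $\alpha_i$ by $\tilde{\alpha}_i$ costs only $O(\|\bar{\bg}\|)=O(1/\sqrt r)$, after which the summand becomes $(1-\tilde{\alpha}_i^3)\langle\bu_i,\hat{\bw}_j\rangle^3$, which has mean zero by the central symmetry of $\bu_i$. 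An epsilon-net plus Chernoff argument analogous to the one in the proof of Lemma \ref{lemma:corrweights_Gbis}, exploiting $\Var(\tilde{\alpha}_i^3)=O(1/d)$ and $\mathbb E\langle\bu_i,\hat{\bw}_j\rangle^{6}=O(d^{-3})$, should give uniform $o(1)$ control over $\hat{\bw}_j$ of the inner sum $\sum_i(1-\tilde{\alpha}_i^3)\langle\bu_i,\hat{\bw}_j\rangle^3$, and summing over $j\in[R]$ then yields the desired $o(R)$ bound.
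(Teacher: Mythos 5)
Your overall architecture coincides with the paper's: pass from the $\bw_i$ to the unnormalized Gaussians $\bg_i$, check that on a high-probability event $\hat{\mathcal S}_\epsilon\subseteq \hat{\mathcal S}'_{\epsilon'}$ with $\epsilon'=o(1)$ so that Lemma \ref{lemma:corrweights_Gall} applies, and control the discrepancy terms using $\norm{\bg_{\rm avg}}\approx 1/\sqrt{r}$, concentration of the norms $\norm{\tilde{\bg}_i}$, and the Wishart operator-norm bound. Your reduction of $\sup_{k\ge 3}$ to $k\in\{3,4\}$, your $k=4$ bound, and your treatment of the binomial terms with $l\ge 1$ are all fine and parallel the paper's estimates.

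The point where you deviate, however, rests on a miscalculation, and the replacement argument is left unexecuted at exactly the hard step. The paper handles your ``main obstacle'' term, $\sum_{i,j}(c_i^k-1)\left\langle\bg_i,\hat{\bw}_j\right\rangle^k$, precisely by the triangle inequality you dismiss (see \eqref{eq:intpt2}): it proves the \emph{uniform} bound $\sup_{\bx\in {\sf S}'_{\epsilon'}(\bg_1,\ldots,\bg_r)}\sum_{i=1}^r|4\langle\bg_i,\bx\rangle|^3 = O\big(\max(1,\, r\sqrt{\log d}/d^{3/2})\big)$ via truncation ($h_{4\epsilon'}$), Chernoff, and an epsilon-net, and multiplies by $\max_i|c_i-1|=O\big(\sqrt{\log d/d}+1/\sqrt r\big)$; the product is $O\big(r\log d/d^2+\sqrt{r\log d}/d^{3/2}+\sqrt{\log d/d}\big)=o(1)$ under \eqref{eq:ass_G}, even when $r\gg d^{3/2}$. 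Your claim that this route ``is not $o(R)$'' forgets to multiply the size $r/d^{3/2}$ by the vanishing prefactor $\max_i|\alpha_i^{-3}-1|$. Moreover, your estimate $\mathbb E\sum_i|\langle\bg_i,\hat{\bw}_j\rangle|^3\asymp r/d^{3/2}$ is a fixed-direction computation, whereas $\hat{\bw}_j$ may depend on the $\bg_i$; what is really needed---in your centered variant just as much as in the crude bound---is control uniform over the data-dependent admissible directions, and that is exactly the nontrivial content (truncation, MGF bound, union bound over a net) that you defer to ``an argument analogous to Lemma \ref{lemma:corrweights_Gbis}'' without carrying it out. So the one genuinely delicate step of the lemma is missing from your proposal. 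Your mean-zero decomposition $(1-\tilde\alpha_i^3)\langle\bu_i,\hat{\bw}_j\rangle^3$, exploiting independence of norm and direction, is plausible and could likely be pushed through (the variance bookkeeping you indicate allows deviations of order $\sqrt{r/d^3}=o(1)$ against a net of size $e^{O(d\log(1/\epsilon))}$, after a truncation to make the Chernoff bound work), but it buys nothing over the paper's simpler product bound and adds the burden of a Bernstein-type bound for the product variables; I would recommend reverting to the direct bound and proving the uniform third-moment estimate.
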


\begin{proof}
A trivial upper bound on the RHS of \eqref{eq:rescorr_G} is given by $\epsilon^3\,R\,r$. Hence, without loss of generality, we can assume that
\begin{equation}\label{eq:wlogup}
\epsilon \ge \frac{2}{\sqrt{r}}.
\end{equation}
Furthermore, as in the proof of Lemma \ref{lemma:corrweights_Gbis}, we can also assume that $\epsilon$ is equal to a small constant.

Recall the definition \eqref{eq:defrandomw} of the weights $\{\bw_i\}_{1\le i\le r}$ and, for $i\in [r]$, let
\begin{equation}\label{eq:defrandomwextra}
\begin{split}
\bg_{\rm avg}&=\frac{1}{r}\sum_{i=1}^r \bg_i,\\
\tilde{\bg}_i&=\bg_i - \bg_{\rm avg},\\
c_i&=\frac{1}{\norm{\tilde{\bg}_i}}.\\
\end{split}
\end{equation}
Then, with high probability,
\begin{equation}\label{eq:scatildeg2}
\sup_{\{\hat{\bw}_i\}_{1\le i\le R} \in \hat{\mathcal S}_\epsilon}\max_{j\in [r]}|\left \langle \hat{\bw}_i, \bg_j\right \rangle |\le \norm{\bg_{\rm avg}} + \sup_{\{\hat{\bw}_i\}_{1\le i\le R} \in \hat{\mathcal S}_\epsilon}\max_{j\in [r]}|\left \langle \hat{\bw}_i, \tilde{\bg}_j\right \rangle| \le 3\epsilon,
\end{equation}
where we have used that the term $\norm{\bg_{\rm avg}}$ concentrates around $1/\sqrt{r}$, that $\max_{j\in [r]}\norm{\tilde{\bg}_j}\le 2$, and that \eqref{eq:wlogup} holds. Set $\epsilon'=3\epsilon$. Then, with high probability, $\hat{\mathcal S}_\epsilon\subseteq  \hat{\mathcal S}'_{\epsilon'}$, where $\hat{\mathcal S}'_{\epsilon'}$ is defined as in \eqref{eq:defShatbis}. Consequently, 
\begin{equation}\label{eq:divinit2}
\begin{split}
\sup_{k\ge 3}\sup_{\{\hat{\bw}_i\}_{1\le i\le R} \in \hat{\mathcal S}_{\epsilon}}\sum_{i=1}^r\sum_{j=1}^R\left \langle\bw_i, \hat{\bw}_j\right \rangle^k & \le \sup_{k\ge 3}\sup_{\{\hat{\bw}_i\}_{1\le i\le R} \in \hat{\mathcal S}'_{\epsilon'}}\sum_{i=1}^r\sum_{j=1}^R\left \langle\bw_i, \hat{\bw}_j\right \rangle^k \\
&\le \sup_{k\ge 3}\sup_{\{\hat{\bw}_i\}_{1\le i\le R} \in \hat{\mathcal S}'_{\epsilon'}}\sum_{i=1}^r\sum_{j=1}^R\left \langle\bg_i, \hat{\bw}_j\right \rangle^k \\
&+ \sup_{k\ge 3}\sup_{\{\hat{\bw}_i\}_{1\le i\le R} \in \hat{\mathcal S}'_{\epsilon'}}\left|\sum_{i=1}^r\sum_{j=1}^R\left \langle\bw_i, \hat{\bw}_j\right \rangle^k-\sum_{i=1}^r\sum_{j=1}^R\left \langle\bg_i, \hat{\bw}_j\right \rangle^k\right|.
\end{split}
\end{equation}
By using Lemma \ref{lemma:corrweights_Gall}, we have that, with high probability, the first term in the RHS of \eqref{eq:divinit2} is $R\cdot o(1)$. The rest of the proof consists in showing that, with high probability, the second term in the RHS of \eqref{eq:divinit2} is also $R\cdot o(1)$. 

Consider the set ${\sf S}'_\epsilon(\bg_1, \ldots, \bg_r)$ defined as in \eqref{eq:defSprimeeps}. Then, it is easy to see that
\begin{equation}\label{eq:dec3}
\begin{split}
\sup_{k\ge 3}\sup_{\{\hat{\bw}_i\}_{1\le i\le R} \in \hat{\mathcal S}'_{\epsilon'}}&\left|\sum_{i=1}^r\sum_{j=1}^R\left \langle\bw_i, \hat{\bw}_j\right \rangle^k-\sum_{i=1}^r\sum_{j=1}^R\left \langle\bg_i, \hat{\bw}_j\right \rangle^k\right| \\
&\le R\cdot \sup_{k\ge 3}\max_{\bx\in {\sf S}'_{\epsilon'}(\bg_1, \ldots, \bg_r) } \left|\sum_{i=1}^r c_i^k \left(\left\langle\tilde{\bg}_i, \bx\right \rangle^k-\left \langle\bg_i, \bx\right \rangle^k\right)\right|\\
&+R\cdot \sup_{k\ge 3}\max_{\bx\in {\sf S}'_{\epsilon'}(\bg_1, \ldots, \bg_r) }\left|\sum_{i=1}^r \left(c_i^k -1\right)\left \langle\bg_i, \bx\right \rangle^k\right|.
\end{split}
\end{equation}

Let us provide an upper bound on the first term in the RHS of \eqref{eq:dec3}. For any $\bx\in {\sf S}^{d-1}$, we have that 
\begin{equation}\label{eq:112}
\begin{split}
\left \langle\tilde{\bg}_i, \bx\right \rangle^k- \left \langle\bg_i, \bx\right \rangle^k&= \left(\left \langle\bg_i, \bx\right \rangle-\left \langle\bg_{\rm avg}, \bx\right \rangle\right)^k- \left \langle\bg_i, \bx\right \rangle^k \\
&\stackrel{\mathclap{\mbox{\footnotesize (a)}}}{\le} k\norm{\bg_{\rm avg}}\left(|\left \langle\bg_i, \bx\right \rangle|+\norm{\bg_{\rm avg}}\right)^{k-1}\\
&\stackrel{\mathclap{\mbox{\footnotesize (b)}}}{\le} k2^{k-2}\norm{\bg_{\rm avg}}\left(|\left \langle\bg_i, \bx\right \rangle|^{k-1}+\norm{\bg_{\rm avg}}^{k-1}\right),\\
\end{split}
\end{equation}
where in (a) we use Taylor's inequality applied to the function $p(x)=x^k$, and in (b) we use that $(a+b)^k\le 2^{k-1}(a^{k}+b^{k})$ for $a,b\ge 0$. Note that, with high probability, $\max_{i\in [r]}c_i\le 2$. Hence, \eqref{eq:112} immediately implies that, with high probability,
\begin{equation}\label{eq:111}
\max_{\bx\in {\sf S}'_{\epsilon'}(\bg_1, \ldots, \bg_r) }\left|\sum_{i=1}^r c_i^k \left(\left\langle\tilde{\bg}_i, \bx\right \rangle^k-\left \langle\bg_i, \bx\right \rangle^k\right)\right|\le\max_{\bx\in {\sf S}'_{\epsilon'}(\bg_1, \ldots, \bg_r) } k 2^{2k-2}\left(\norm{\bg_{\rm avg}}\sum_{i=1}^r |\left \langle\bg_i, \bx\right \rangle|^{k-1}+r\cdot \norm{\bg_{\rm avg}}^k\right).
\end{equation}
With high probability, the term $\norm{\bg_{\rm avg}}$ concentrates around $1/\sqrt{r}$. Furthermore,
\begin{equation}\label{eq:opnormg}
\sum_{i=1}^r  |\left \langle\bg_i, \bx\right \rangle|^2 = \left \langle\bx, \sum_{i=1}^r\bg_i\bg_i^{\sT} \bx\right \rangle\le \norm{\sum_{i=1}^r\bg_i\bg_i^{\sT}}_{\rm op},
\end{equation}
where the last inequality uses that $\norm{\bx}=1$. Note that $\sum_{i=1}^r\bg_i\bg_i^{\sT}$ is a Wishart matrix. Hence, with high probability, its operator norm concentrates around $(1+ \sqrt{r/d})^2$ \cite{BaiSilverstein}. As $r = o(d^2/\log d)$, we conclude that, with high probability, 
\begin{equation}
\sup_{k\ge 3}\max_{\bx\in {\sf S}'_{\epsilon'}(\bg_1, \ldots, \bg_r) }k 2^{2k-2}\left(\norm{\bg_{\rm avg}}\sum_{i=1}^r |\left \langle\bg_i, \bx\right \rangle|^{k-1}+r\cdot \norm{\bg_{\rm avg}}^k\right)=o(1).
\end{equation}

Let us now provide an upper bound on the second term in the RHS of \eqref{eq:dec3}. With high probability, $\max_{i\in [r]} c_i\le 2$. Hence, with high probability, 
\begin{equation}\label{eq:intpt2}
\begin{split}
\sup_{k\ge 3}\max_{\bx\in {\sf S}'_{\epsilon'}(\bg_1, \ldots, \bg_r) }\left|\sum_{i=1}^r \left(c_i^k -1\right)\left \langle\bg_i, \bx\right \rangle^k\right| &\le \max_{i\in [r]} |c_i -1|\sup_{k\ge 3}\max_{\bx\in {\sf S}'_{\epsilon'}(\bg_1, \ldots, \bg_r) } \sum_{i=1}^r |4\left \langle\bg_i, \bx\right \rangle|^k\\
&\le \max_{i\in [r]} |c_i -1| \max_{\bx\in {\sf S}'_{\epsilon'}(\bg_1, \ldots, \bg_r) }\sum_{i=1}^r |4\left \langle\bg_i, \bx\right \rangle|^3,
\end{split}
\end{equation}
where the last inequality uses that $4\epsilon'\le 1$. Note that, for any $i\in [r]$,
\begin{equation}
\mathbb P\left(|\norm{\tilde{\bg}_i}-\mathbb E\{\norm{\tilde{\bg}_i}\}|\ge t\right)\le e^{-c dt^2},
\end{equation}
for some constant $c$, since the norm of a vector is a Lipschitz function of its components. Consequently, with high probability,
\begin{equation}\label{eq:bdmax}
\max_{i\in [r]} \left|c_i-1\right|\le \frac{2}{\sqrt{r}}+\frac{2}{\sqrt{c}}\sqrt{\frac{\log d}{d}}.
\end{equation}
In order to upper bound $\sum_{i=1}^r |4\left \langle\bg_i, \bx\right \rangle|^3$, we use an argument similar to that of the proof of Lemma \ref{lemma:corrweights_Gbis}. First of all, note that
\begin{equation}\label{eq:aaaval}
\max_{\bx\in {\sf S}'_{\epsilon'}(\bg_1, \ldots, \bg_r) }\sum_{i=1}^r |4\left \langle\bg_i, \bx\right \rangle|^3 = \max_{\bx\in {\sf S}'_{\epsilon'}(\bg_1, \ldots, \bg_r) }\sum_{i=1}^rh_{4\epsilon'}(4|\left \langle\bg_i, \bx\right \rangle|)\le \max_{\bx\in {\sf S}^{d-1} }\sum_{i=1}^rh_{4\epsilon'}(4|\left \langle\bg_i, \bx\right \rangle|),
\end{equation}
where $h_{4\epsilon'}$ is defined as in \eqref{eq:defhesp}. By Chernoff bound, we obtain that
\begin{equation}\label{eq:cher1}
\mathbb P\left(\sum_{i=1}^rh_{4\epsilon'}(4|\left \langle\bg_i, \bx\right \rangle|) > t\right) \le \exp\left(-\frac{d t}{4^4\epsilon'}\right)\left(\mathbb E\left\{\exp\left(\frac{d h_{4\epsilon'}(4|\left \langle\bg_1, \bx\right \rangle|)}{4^4\epsilon'}\right)\right\}\right)^r.
\end{equation}
As $\left \langle\bg_1, \bx\right \rangle \sim \normal(0, 1/d)$, we have that 
\begin{equation}\label{eq:cher2}
\mathbb E\left\{\exp\left(\frac{d h_{4\epsilon'}(4|\left \langle\bg_1, \bx\right \rangle|)}{4^4\epsilon'}\right)\right\} \le 2\int_{0}^{\sqrt{8\log d}} \frac{1}{\sqrt{2\pi}}\exp\left(\frac{x^3}{4\epsilon'\sqrt{d}}-\frac{x^2}{2}\right)\,{\rm d}x + \frac{2\sqrt{2}}{d^2}+2 \exp\left(-\frac{d(\epsilon')^2}{4}\right).
\end{equation}
Furthermore, after some calculations, we obtain that
\begin{equation}\label{eq:cher3}
\begin{split}
2\int_{0}^{\sqrt{8\log d}} \frac{1}{\sqrt{2\pi}}\exp\left(\frac{x^3}{4\epsilon'\sqrt{d}}-\frac{x^2}{2}\right)\,{\rm d}x&\le 2\int_{0}^{\sqrt{8\log d}} \frac{1}{\sqrt{2\pi}}\exp\left(x^2\left(-\frac{1}{2}+\frac{\sqrt{8\log d}}{4\epsilon'\sqrt{d}}\right)\right)\,{\rm d}x\\
&\le 1+\frac{c'}{\epsilon'}\sqrt{\frac{\log d}{d}},
\end{split}
\end{equation}
for some constant $c'$. By combining \eqref{eq:cher1}, \eqref{eq:cher2}, and \eqref{eq:cher3} with the fact that $\log(1+x)\le x$ for any $x\ge 0$, we conclude that
\begin{equation}
\mathbb P\left(\sum_{i=1}^rh_{4\epsilon'}(4|\left \langle\bg_i, \bx\right \rangle|) > t\right) \le\exp\left(-\frac{d t}{4^4\epsilon'}+r\left(\frac{c'}{\epsilon'}\sqrt{\frac{\log d}{d}}+ \frac{2\sqrt{2}}{d^2}+2 \exp\left(-\frac{d(\epsilon')^2}{4}\right)\right)\right).
\end{equation}
This bound holds for a fixed $\bx\in {\sf S}^{d-1}$. In order to obtain a bound which is uniform over $\bx$, let $N^d(\alpha)$ be an $\alpha$-net of ${\sf S}^{d-1}$. Then, with high probability, for any  $\bx \in {\sf S}^{d-1}$ and $\bx'\in N^d(\alpha)$ s.t. $\norm{\bx-\bx'}\le \alpha$,
\begin{equation}\label{eq:boundpoint_nn}
\max_{\bx\in {\sf S}^{d-1}}\sum_{i=1}^rh_{4\epsilon'}(4|\left \langle\bg_i, \bx\right \rangle|) \le \max_{\bx\in N^d(\alpha)}\sum_{i=1}^rh_{4\epsilon'}(4|\left \langle\bg_i, \bx\right \rangle|) + 8C\alpha r,
\end{equation}
where $C$ is a constant that upper bounds the Lipschitz constant of $h_{4\epsilon'}$. Furthermore, by using \eqref{eq:boundpoint_nn} and a union bound over the points of the $\alpha$-net, we obtain that
\begin{equation}\label{eq:epsnet_newbound}
\begin{split}
\mathbb P(\max_{\bx\in N^d(\alpha)}&\sum_{i=1}^rh_{4\epsilon'}(4|\left \langle\bg_i, \bx\right \rangle|) > t)\\
&\le \exp\left(-\frac{d t}{4^4\epsilon'}+r\left(\frac{c'}{\epsilon'}\sqrt{\frac{\log d}{d}}+ \frac{2\sqrt{2}}{d^2}+2 \exp\left(-\frac{d(\epsilon')^2}{4}\right)\right)\right) \left(1+\frac{2}{\alpha}\right)^d,
\end{split}
\end{equation}
where we have used the fact that there exists an $\alpha$-net of ${\sf S}^{d-1}$ that contains at most $(1+2/\alpha)^d$ points, see Lemma 5.2 of \cite{vershynin2010introduction}. Pick $\alpha = 1/r^2$ and
\begin{equation*}
t = 2^9\,c' \max\left(1, \frac{r\,\sqrt{\log d}}{d^{3/2}}\right).
\end{equation*}
Then, \eqref{eq:aaaval}, \eqref{eq:boundpoint_nn} and \eqref{eq:epsnet_newbound} imply that, with high probability, 
\begin{equation}
\max_{\bx\in {\sf S}'_{\epsilon'}(\bg_1, \ldots, \bg_r)}\sum_{i=1}^r |4\left \langle\bg_i, \bx\right \rangle|^k \le t+1.
\end{equation}
By using \eqref{eq:bdmax} and that $r =o(d^2/\log d)$, we obtain that the RHS of \eqref{eq:intpt2} is $o(1)$. Consequently, the RHS of \eqref{eq:dec3} is $R\cdot o(1)$, which concludes the proof.
\end{proof}

Eventually, we prove our main result on the generalization error in the setting with random weights. 

\begin{proof}[Proof of Theorem \ref{th:lowergen_G}]
The procedure is similar to that used to prove Theorem \ref{th:lowergen} in Appendix \ref{app:mainproof}. As $\sigma \in L^2(\mathbb R, e^{-x^2/2})$ and the weights $\{\bw_i\}_{1\le i\le r}$ and $\{\hat{\bw}_i\}_{1\le i \le R}$ have unit norm, \eqref{eq:first} and \eqref{eq:secfisteq} hold. Therefore,
\begin{equation}\label{eq:step1_G}
{\mathbb E}\big\{|y(\bx)-\hat{y}(\bx)|^2\big\}\ge\sum_{k\ge 3} \hat{\sigma}_k^2 \left(\norm{\sum_{i=1}^r\bw_i^{\otimes k}}_F^2-2\sum_{i=1}^r\sum_{j=1}^R \left \langle \bw_i, \hat{\bw}_j\right \rangle^k\right).
\end{equation}
Furthermore, 
\begin{equation*}
\norm{\sum_{i=1}^r\bw_i^{\otimes k}}_F^2 = \sum_{i, j=1}^r \left \langle \bw_i, \bw_j\right \rangle^k = r+\sum_{i\neq j} \left \langle \bw_i, \bw_j\right \rangle^k.
\end{equation*}
Let us now show that, with high probability,
\begin{equation}\label{eq:sumcorrdiff}
\sup_{k\ge 3}\sum_{i\neq j} \left \langle \bw_i, \bw_j\right \rangle^k = o(r).
\end{equation}
We start by proving that
\begin{equation}\label{eq:sumcorrdiff_2}
\sup_{k\ge 4}\sum_{i\neq j} \left \langle \bw_i, \bw_j\right \rangle^k = o(r).
\end{equation}
First, note that
\begin{equation}
\begin{split}
\sup_{k\ge 4}\sum_{i\neq j} \left \langle \bw_i, \bw_j\right \rangle^k \le\sup_{k\ge 4} r^2 \max_{i\neq j} |\left \langle \bw_i, \bw_j\right \rangle|^k\le r^2 \max_{i\neq j} |\left \langle \bw_i, \bw_j\right \rangle|^4 \le r^2 \left(\max_{i\neq j} |\left \langle \bw_i, \bw_j\right \rangle|\right)^4.
\end{split}
\end{equation}
Recall the definition \eqref{eq:defrandomw} of the weights $\{\bw_i\}_{1\le i\le r}$ and the definitions in \eqref{eq:defrandomwextra}. Then,
\begin{equation}
\begin{split}
|\left \langle \bw_i, \bw_j\right \rangle| &= c_i c_j |\left \langle\bg_i-\bg_{\rm avg},\bg_j-\bg_{\rm avg}\right \rangle| \\
&\le c_i c_j |\left \langle\bg_i,\bg_j\right \rangle|  +(c_i+c_j)\norm{\bg_{\rm avg}} +c_ic_j\norm{\bg_{\rm avg}}^2.
\end{split}
\end{equation}
With high probability, the term $\norm{\bg_{\rm avg}}$ concentrates around $1/\sqrt{r}$. Furthermore, with high probability, we have that
\begin{equation}\label{eq:boundasc}
\max_{i\in [r]} c_i\le 2,
\end{equation}
\begin{equation}\label{eq:boundasg}
\max_{i\neq j}|\left \langle\bg_i,\bg_j\right \rangle| \le C\sqrt{\frac{\log d}{d}},
\end{equation}
for some constant $C$. Hence, with high probability,
\begin{equation}
\max_{i\neq j} |\left \langle \bw_i, \bw_j\right \rangle| \le 4C\sqrt{\frac{\log d}{d}} +\frac{8}{\sqrt{r}}. 
\end{equation}
As $r=o(d^2/(\log d)^2)$, \eqref{eq:sumcorrdiff} immediately follows.

It remains to deal with the case $k=3$. Note that
\begin{equation}\label{eq:sumdiv1}
\sum_{i\neq j} \left \langle \bw_i, \bw_j\right \rangle^3 = A + B + C,
\end{equation}
where
\begin{equation}
\begin{split}
A & = \sum_{i\neq j} \left \langle \bw_i, \bw_j\right \rangle^3-\bar{c}^6\sum_{i\neq j} \left \langle \tilde{\bg}_i, \tilde{\bg}_j\right \rangle^3,\\
B & = \bar{c}^6\left(\sum_{i\neq j} \left \langle \tilde{\bg}_i, \tilde{\bg}_j\right \rangle^3-\sum_{i\neq j} \left \langle \bg_i, \bg_j\right \rangle^3\right),\\
C & =\bar{c}^6\sum_{i\neq j} \left \langle \bg_i, \bg_j\right \rangle^3. \\
\end{split}
\end{equation}
Let us provide an upper bound on the term $A$ of the RHS of \eqref{eq:sumdiv1}: 
\begin{equation}\label{eq:term1c}
\begin{split}
\sum_{i\neq j} \left \langle \bw_i, \bw_j\right \rangle^3-\bar{c}^6\sum_{i\neq j} \left \langle \tilde{\bg}_i, \tilde{\bg}_j\right \rangle^3 &= \sum_{i\neq j} \left(c_i^3 c_j^3-\bar{c}^6\right)\left \langle \tilde{\bg}_i, \tilde{\bg}_j\right \rangle^3 \\
&\le \sqrt{\sum_{i\neq j} \left(c_i^3 c_j^3-\bar{c}^6\right)^2}\sqrt{\sum_{i\neq j}\left \langle \tilde{\bg}_i, \tilde{\bg}_j\right \rangle^6},
\end{split}
\end{equation}
where we have used Cauchy-Schwarz inequality. Furthermore, we have that
\begin{equation}\label{eq:div1}
\begin{split}
|c_i^3 c_j^3-\bar{c}^6| &\le |c_i^3 c_j^3-c_i^3\bar{c}^3|+|c_i^3\bar{c}^3-\bar{c}^6|\\
&\le c_i^3\left(c_j^2+c_j \bar{c}+\bar{c}^2\right)| c_j-\bar{c}|+\bar{c}^3\left(c_i^2+c_i \bar{c}+\bar{c}^2\right)| c_i-\bar{c}|.
\end{split}
\end{equation}
By using \eqref{eq:div1}, \eqref{eq:boundasc}, and \eqref{eq:bdmax}, we obtain that, with high probability,
\begin{equation}\label{eq:bdas1}
\sqrt{\sum_{i\neq j} \left(c_i^3 c_j^3-\bar{c}^6\right)^2} \le  C_1\left(\sqrt{r}+r\sqrt{\frac{\log d}{d}}\right),
\end{equation}
for some constant $C_1$. Furthermore, by using \eqref{eq:boundasg}, we also obtain that, with high probability,
\begin{equation}\label{eq:bdas2}
\sqrt{\sum_{i\neq j}\left \langle \tilde{\bg}_i, \tilde{\bg}_j\right \rangle^6}\le C^3\frac{r(\log d)^{3/2}}{d^{3/2}}.
\end{equation}
As $r=o(d^2/(\log d)^2)$ , by combining \eqref{eq:bdas1} with \eqref{eq:bdas2}, we conclude that $A=o(r)$.

Let us provide an upper bound on the term $B$ of the RHS of \eqref{eq:sumdiv1}. First, note that
\begin{equation}
\begin{split}
\left \langle \tilde{\bg}_i, \tilde{\bg}_j\right\rangle^3 - \left \langle \bg_i, \bg_j\right\rangle^3 &= \left(\left \langle \bg_i, \bg_j\right\rangle-\left \langle \bg_i+ \bg_j, \bg_{\rm avg}\right\rangle + \norm{\bg_{\rm avg}}^2\right)^3 - \left \langle \bg_i, \bg_j\right\rangle^3\\
&\stackrel{\mathclap{\mbox{\footnotesize (a)}}}{\le} \left(\left \langle \bg_i, \bg_j\right\rangle +3\norm{\bg_{\rm avg}}\right)^3 - \left \langle \bg_i, \bg_j\right\rangle^3\\
&\stackrel{\mathclap{\mbox{\footnotesize (b)}}}{\le} 3\norm{\bg_{\rm avg}}\left(|\left \langle \bg_i, \bg_j\right\rangle| +3\norm{\bg_{\rm avg}}\right)^2 \\
&\stackrel{\mathclap{\mbox{\footnotesize (c)}}}{\le} 6\norm{\bg_{\rm avg}}\left(|\left \langle \bg_i, \bg_j\right\rangle|^2 +9\norm{\bg_{\rm avg}}^2\right) \\
\end{split}
\end{equation}
where in (a) we use that the function $p(x)=x^3$ is increasing and that $\max_{i\in [r]}\norm{\bg_i}\le 2$ with high probability, in (b) we use Taylor's inequality applied to the function $p(x)=x^k$, and in (c) we use that $(a+b)^2 \le 2(a^2 + b^2)$ for any $a, b\ge 0$. By summing over $i\neq j$, we have that 
\begin{equation}\label{eq:eqdiv2}
\sum_{i\neq j}\left \langle \tilde{\bg}_i, \tilde{\bg}_j\right\rangle^3 - \sum_{i\neq j}\left \langle \bg_i, \bg_j\right\rangle^3 \le 6\norm{\bg_{\rm avg}}\sum_{i\neq j}|\left \langle \bg_i, \bg_j\right\rangle|^2 + 18 r^2\norm{\bg_{\rm avg}}^3.
\end{equation}
As $\norm{\bg_{\rm avg}}$ concentrates around $1/\sqrt{r}$, the term $r^2\norm{\bg_{\rm avg}}^3$ is $o(r)$. Furthermore, for any $j\neq i$,
\begin{equation}
\sum_{i=1}^r|\left \langle \bg_i, \bg_j\right\rangle|^2 = \left\langle\bg_j,\sum_{i=1}^r \bg_i \bg_i^{\sT} \bg_j \right\rangle\le \norm{\bg_j}^2\norm{\sum_{i=1}^r \bg_i \bg_i^{\sT}}_{\rm op}.
\end{equation}
As $\sum_{i=1}^r \bg_i \bg_i^{\sT}$ is a Wishart matrix, with high probability, its operator norm concentrates around $(1 +\sqrt{r/d})^2$ \cite{BaiSilverstein}. As $r=o(d^2)$, we conclude that $B=o(r)$.

Let us provide an upper bound on the term $C$ of the RHS of \eqref{eq:sumdiv1}. To do so, we upper bound the second moment:
\begin{equation}
\begin{split}
\mathbb E\left\{\left(\frac{1}{r}\sum_{i\neq j} \left \langle \bg_i, \bg_j\right \rangle^3\right)^2 \right\} &= \frac{1}{r^2}\sum_{i\neq j}\sum_{k\neq \ell}\mathbb E\left\{\left \langle \bg_i, \bg_j\right \rangle^3\left \langle \bg_k, \bg_\ell\right \rangle^3\right\}\\
&=\frac{2}{r^2}\sum_{i\neq j}\mathbb E\left\{\left \langle \bg_i, \bg_j\right \rangle^6\right\}\\
&\le2\max_{i\neq j}\mathbb E\left\{\left \langle \bg_i, \bg_j\right \rangle^6\right\}.\\
\end{split}
\end{equation}
Hence, after some simple calculations, we deduce that, with high probability,
\begin{equation}\label{eq:o1}
\frac{1}{r}\sum_{i\neq j} \left \langle \bg_i, \bg_j\right \rangle^3 = o(1).
\end{equation}
As $A=o(r)$, $B=o(r)$ and $C=o(r)$,  the RHS of \eqref{eq:sumdiv1} is also $o(r)$ with high probability. As a result, \eqref{eq:sumcorrdiff} holds with high probability.

By combining \eqref{eq:sumcorrdiff} with \eqref{eq:step1_G} and with the result of Lemma \ref{lemma:corrweights_G}, we conclude that, with high probability, for a sequence of vanishing constants $\eta(r, d)=o(1)$,
\begin{equation}
\sup_{\{\hat{\bw}_i\}_{1\le i\le R} \in \hat{\mathcal S}_\epsilon}{\mathbb E}\big\{|y(\bx)-\hat{y}(\bx)|^2\big\}\ge \sum_{k\ge 3} \hat{\sigma}_k^2\norm{\sum_{i=1}^r\bw_i^{\otimes k}}_F^2 \left(1-\frac{R}{r}\eta(r, d)\right).
\end{equation}
By following the same passages as those of the third step of the proof of Theorem \ref{th:lowergen} and by using that $\hat{\sigma}_2 = 0$, we also have that 
\begin{equation}\label{eq:third_G}
\begin{split}
\sum_{k\ge 3} \hat{\sigma}_k^2\norm{\sum_{i=1}^r\bw_i^{\otimes k}}_F^2&\ge \min_{a\in \mathbb R}{\mathbb E}\left\{\left|y(\bx)-a\right|^2\right\}-2\hat{\sigma}_1^2 \norm{\sum_{i=1}^r \bw_i}^2\\
&= \Var\left\{y(\bx)\right\}-2\hat{\sigma}_1^2 \norm{\sum_{i=1}^r \bw_i}^2,
\end{split}
\end{equation}
which implies that
\begin{equation}\label{eq:zzz1}
\sup_{\{\hat{\bw}_i\}_{1\le i\le R} \in \hat{\mathcal S}_\epsilon}{\mathbb E}\big\{|y(\bx)-\hat{y}(\bx)|^2\big\}\ge\left(\Var\left\{y(\bx)\right\}-2\hat{\sigma}_1^2 \norm{\sum_{i=1}^r \bw_i}^2\right)\left(1-\frac{R}{r}\eta(r, d)\right).
\end{equation}

It remains to upper bound the term $\norm{\sum_{i=1}^r \bw_i}^2$. We do so by computing its expected value:
\begin{equation}
\begin{split}
\mathbb E\left\{\norm{\sum_{i=1}^r\bw_i}^2\right\} &= \mathbb E\left\{\norm{\sum_{i=1}^r \left(c_i-\bar{c}\right)\tilde{\bg}_i}^2\right\}\\
&= r \mathbb E\left\{ \left(c_1-\bar{c}\right)^2\norm{\tilde{\bg}_1}^2\right\} + r(r-1)\mathbb E\left\{ \left(c_1-\bar{c}\right)\left(c_2-\bar{c}\right)\left\langle\tilde{\bg}_1, \tilde{\bg}_2\right\rangle\right\}\\
\end{split}
\end{equation}
where in the first equality we use that $\sum_{i=1}^r \tilde{\bg}_i = 0$. After some calculations, we have that 
\begin{equation}
\mathbb E\left\{ \left(c_1-\bar{c}\right)^2\norm{\tilde{\bg}_1}^2\right\}\le \frac{1}{d}.
\end{equation}
Furthermore, by applying Stein's lemma for correlated random variables, we have that
\begin{equation}\label{eq:sumlast1}
\begin{split}
\mathbb E\left\{ \left(c_1-\bar{c}\right)\left(c_2-\bar{c}\right)\left\langle\tilde{\bg}_1, \tilde{\bg}_2\right\rangle\right\} &= -\frac{1}{r}\mathbb E\left\{ \left(c_1-\bar{c}\right)\left(c_2-\bar{c}\right)\right\} \\
&+\frac{1}{d}\left(1-\frac{1}{r}\right)\mathbb E\left\{ \left(c_2-\bar{c}\right)\left\langle-\frac{\tilde{\bg}_1}{\norm{\tilde{\bg}_1}^3}, \tilde{\bg}_2\right\rangle\right\}\\
&+\frac{1}{d}\left(-\frac{1}{r}\right)\mathbb E\left\{ \left(c_1-\bar{c}\right)\left\langle-\frac{\tilde{\bg}_2}{\norm{\tilde{\bg}_2}^3}, \tilde{\bg}_2\right\rangle\right\}.\\
\end{split}
\end{equation}
We upper bound the first term in the RHS of \eqref{eq:sumlast1} as
\begin{equation}
\frac{1}{r}|\mathbb E\left\{ \left(c_1-\bar{c}\right)\left(c_2-\bar{c}\right)\right\}|\le \frac{1}{r\cdot d}.
\end{equation}
By applying Cauchy-Schwarz inequality, we upper bound the second term in the RHS of \eqref{eq:sumlast1} as
\begin{equation}\label{eq:steineq2}
\frac{1}{d}\left(1-\frac{1}{r}\right)\left|\mathbb E\left\{ \left(c_2-\bar{c}\right)\left\langle-\frac{\tilde{\bg}_1}{\norm{\tilde{\bg}_1}^3}, \tilde{\bg}_2\right\rangle\right\}\right|\le \frac{1}{d}\sqrt{\mathbb E\left\{\left(c_2-\bar{c}\right)^2\right\}}\sqrt{\mathbb E\left\{\frac{\left\langle\tilde{\bg}_1, \tilde{\bg}_2\right\rangle^2}{\norm{\tilde{\bg}_1}^6}\right\}}.
\end{equation}
Note that
\begin{equation}
\begin{split}
\left\langle\tilde{\bg}_1, \tilde{\bg}_2\right\rangle^2 &= \left(\left\langle\bg_1, \bg_2\right\rangle - \left\langle\bg_1+ \bg_2,\bg_{\rm avg}\right\rangle + \norm{\bg_{\rm avg}}^2\right)^2 \\
&\le 2\left(\left\langle\bg_1, \bg_2\right\rangle - \left\langle\bg_1+ \bg_2,\bg_{\rm avg}\right\rangle\right)^2 + 2\norm{\bg_{\rm avg}}^4 \\
&\le 4\left\langle\bg_1, \bg_2\right\rangle^2 + 4\left\langle\bg_1+ \bg_2,\bg_{\rm avg}\right\rangle^2 +2 \norm{\bg_{\rm avg}}^4 \\
&\le 4\left\langle\bg_1, \bg_2\right\rangle^2 + 4\left(\norm{\bg_1}+ \norm{\bg_2}\right)^2\norm{\bg_{\rm avg}}^2 +2 \norm{\bg_{\rm avg}}^4, \\
\end{split}
\end{equation}
which implies that the RHS of \eqref{eq:steineq2} is at most $4/d+18/r$.

We upper bound the third term in the RHS of \eqref{eq:sumlast1} as
\begin{equation}
\frac{1}{d\cdot r}\left|\mathbb E\left\{ \left(c_1-\bar{c}\right)\left\langle-\frac{\tilde{\bg}_2}{\norm{\tilde{\bg}_2}^3}, \tilde{\bg}_2\right\rangle\right\}\right|\le \frac{1}{r\cdot d^{3/2}}.
\end{equation}
By using that $r=o(d^2)$, we deduce that
\begin{equation}
\mathbb E\left\{\norm{\sum_{i=1}^r\bw_i}^2\right\}  = o(r) = r \cdot \eta(r, d).
\end{equation}
Hence, by Markov's inequality, we conclude that, with high probability,
\begin{equation}
\norm{\sum_{i=1}^r\bw_i}^2 = o(r),
\end{equation}
which, combined with \eqref{eq:zzz1}, implies the desired result. 
\end{proof}

\section{Learning a Neural Network and Tensor Decomposition: Proof of Theorem \ref{th:comp}} \label{sec:proofhard}

\begin{proof}
We start by proving the first claim. Assume that the thesis is false. Then, there exists an algorithm $\mathcal A$ that, given $\{(\bx_j, y(\bx_j))\}_{1\le j\le n}$, has polynomial complexity and outputs $\{\hat{\bw}_{i}\}_{1\le i \le R}$ with unit norm s.t. $|\left \langle \bw_i, \hat{\bw}_j\right \rangle| \ge \epsilon$ for some $i \in [r]$ and $j\in [R]$. Note that, as $\mathcal A$ has polynomial complexity, we can assume without loss of generality that $n$ is bounded by a polynomial in $d$.

If $\sigma$ is a polynomial with ${\rm deg}(\sigma)\le \ell$ and ${\rm par}(\sigma)={\rm par}(\ell)$, then it can be written as
\begin{equation}\label{eq:defsigma}
\sigma(z) = \sum_{\substack{k\in [\ell]\\ {\rm par}(k) = {\rm par}(\ell)}} c_k z^k,
\end{equation}
for some choice of the coefficients $\{c_k\}_{k\in [\ell],\, {\rm par}(k) = {\rm par}(\ell)}$.
By definition of tensor, we have that
\begin{equation}\label{eq:tensred}
\begin{split}
\sum_{j=1}^d  T^{(\ell)}(j, j, j_3, \ldots, j_\ell) 
&= \sum_{j=1}^d \sum_{i=1}^r w_i(j)\cdot w_i(j) \prod_{m=3}^\ell w_i(j_m)\\
&= \sum_{i=1}^r \norm{\bw_i}^2 \prod_{m=3}^\ell w_i(j_m) = T^{(\ell-2)}(j_3, \ldots, j_\ell),
\end{split}
\end{equation}
where the last equality follows from the fact that the weights $\{\bw_i\}_{1\le i \le r}$ have unit norm. Consequently, given the tensor $\bT^{(\ell)}$, we can construct with polynomial complexity the tensor $\bT^{(k)}$ for any $k\in [\ell]$ such that ${\rm par}(k) = {\rm par}(\ell)$. This implies that, given $\bx_j$, we can construct with polynomial complexity the following quantity:
\begin{equation}\label{eq:quantity}
\sum_{\substack{k\in [\ell]\\ {\rm par}(k) = {\rm par}(\ell)}} c_k \left \langle \bT^{(k)}, \bx_j^{\otimes k}\right \rangle. 
\end{equation} 
By applying \eqref{eq:tensscal} and \eqref{eq:defsigma}, we obtain that the quantity in \eqref{eq:quantity} equals $y(\bx_j)$. Consequently, we can construct the set $\{(\bx_j, y(\bx_j))\}_{1\le j\le n}$ with polynomial complexity. By applying the algorithm $\mathcal A$ with input $\{(\bx_j, y(\bx_j))\}_{1\le j\le n}$, we obtain with polynomial complexity the estimates $\{\hat{\bw}_{i}\}_{1\le i \le R}$ with unit norm s.t. $|\left \langle \bw_i, \hat{\bw}_j\right \rangle| \ge \epsilon$ for some $i \in [r]$ and $j\in [R]$. As a result, there exists an algorithm that, given the tensor $\bT^{(\ell)}$, has polynomial complexity and outputs $\{\hat{\bw}_{i}\}_{1\le i \le R}$ with unit norm s.t. $|\left \langle \bw_i, \hat{\bw}_j\right \rangle| \ge \epsilon$ for some $i \in [r]$ and $j\in [R]$. Hence, given the tensor $\bT^{(\ell)}$, the problem of learning $\{\bw_{i}\}_{1\le i \le r}$ is not $\epsilon$-hard, which violates the hypothesis and concludes the proof of the first claim.

The proof of the second claim is similar. Suppose there exists an algorithm $\mathcal A'$ that, given $\{(\bx_j, y(\bx_j))\}_{1\le j\le n}$, has polynomial complexity and outputs $\{\hat{\bw}_{i}\}_{1\le i \le R}$ with unit norm s.t. $|\left \langle \bw_i, \hat{\bw}_j\right \rangle| \ge \epsilon$ for some $i \in [r]$ and $j\in [R]$. If $\sigma$ is a polynomial with ${\rm deg}(\sigma)\le \ell+1$, then it can be written as
\begin{equation}\label{eq:defsigmanew}
\sigma(z) = \sum_{k=0}^{\ell+1} c'_k z^k,
\end{equation}
for some choice of the coefficients $\{c'_k\}_{0\le k\le \ell+1}$. By using \eqref{eq:tensred}, given the tensors $\bT^{(\ell)}$ and $\bT^{(\ell+1)}$, we can construct with polynomial complexity the tensor $\bT^{(k)}$ for any $k\in [\ell+1]$. This implies that, given $\bx_j$, we can construct with polynomial complexity the quantity
\begin{equation}\label{eq:quantitynew}
 \sum_{k=0}^{\ell+1} c'_k \left \langle \bT^{(k)}, \bx_j^{\otimes k}\right \rangle,
\end{equation} 
that is equal to $y(\bx_j)$. By using the algorithm $\mathcal A'$, we have found an algorithm that, given the tensors $\bT^{(\ell)}$ and $\bT^{(\ell+1)}$, has polynomial complexity and outputs $\{\hat{\bw}_{i}\}_{1\le i \le R}$ with unit norm s.t. $|\left \langle \bw_i, \hat{\bw}_j\right \rangle| \ge \epsilon$ for some $i \in [r]$ and $j\in [R]$. 
Hence, the hypothesis is violated and the proof is complete.
\end{proof}

\section{Learning a Neural Network and Tensor Decomposition -- Noisy Case} \label{subsec:genhard}

Let us consider a slightly different model of two-layer neural network with an error term $E(\bx)$, where the output $y_{\rm noisy}(\bx)$ is given by 
\begin{equation}\label{eq:defynoise}
y_{\rm noisy}(\bx) = y(\bx)+E(\bx)= \sum_{i=1}^r \sigma(\left \langle \bx, \bw_i\right \rangle) + E(\bx).
\end{equation}
For $\delta\ge 0$, define
\begin{equation}
\mathcal S_\delta\subseteq\{\{\bw_i\}_{1\le i\le r} : \norm{\bw_i}=1 \,  \,\,\forall\, i\in [r],\,\,|\left \langle \bw_i, \bw_j\right \rangle | \le \delta\, \,\,\forall\, i\neq j \in [r]\}.
\end{equation}
We now state the reduction from tensor decomposition to the problem of learning the weights of a two-layer neural network with noisy output and activation function which is a polynomial with degree \emph{larger} than the order of the tensor.

\begin{theorem}[Learning a Neural Network and Tensor Decomposition -- Noisy Case]\label{th:compnoise}
Fix integers $\ell\ge 3$, $p\in [\ell+1]$, and $m\in [\lfloor\sfrac{\ell}{(p-1)}\rfloor]$. Assume also that $p$ is even.  Let $\sigma$ be an even positive polynomial that can be written as
\begin{equation}\label{eq:sigmahp}
\sigma(z) = \sum_{k=m}^{\lfloor\ell/(p-1)\rfloor} c_{k} z^{p (\ell-(p-1)k)},
\end{equation}
for some choice of the positive coefficients $\{c_k\}_{m\le k \le \lfloor\ell/(p-1)\rfloor}$. Let $y_{\rm noisy}(\bx)$ be defined in \eqref{eq:defynoise}. For $\bx_1, \ldots, \bx_n\in \mathbb R^d$, let $\mathcal P_{\rm noisy}(\bx_1, \ldots, \bx_n)$ be the problem of learning $\{\bw_i\}_{1\le i\le r}\in\mathcal S_\delta$ given as input $\{\bx_j\}_{1\le j \le n}$ and $\{y_{\rm noisy}(\bx_j)\}_{1\le j \le n}$. Assume that, given as input the tensor $\bT^{(\ell)}$ defined in \eqref{eq:deftns}, the problem of learning $\{\bw_i\}_{1\le i\le r}\in\mathcal S_\delta$ is $\epsilon$-hard in the sense of Definition \ref{def:hardeps} for some $\epsilon>0$. Then, there exists a choice of the error term $E(\bx)$ with
\begin{equation}\label{eq:err}
|E(\bx)|\le (\delta^m \cdot r)^{p-1} \cdot y(\bx), \quad\quad \forall \,\bx\in\mathbb R^d,
\end{equation}
such that, for any $\bx_1, \ldots, \bx_n\in \mathbb R^d$, the problem $\mathcal P_{\rm noisy}(\bx_1, \ldots, \bx_n)$ is $\epsilon$-hard in the sense of Definition \ref{def:hardeps}.
\end{theorem}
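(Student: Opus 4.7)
My plan is to follow the reduction template of Theorem \ref{th:comp}: assume toward contradiction that a polynomial-time algorithm $\mathcal A$ solves $\mathcal P_{\rm noisy}(\bx_1,\dots,\bx_n)$ in the sense of fulfilling {\sf (P1)}--{\sf (P2)} of Definition \ref{def:hardeps}; I will then use $\mathcal A$ as a black box to decompose $\bT^{(\ell)}$, contradicting the $\eps$-hardness assumption. The new ingredient compared to Theorem \ref{th:comp} is that ${\rm deg}(\sigma)$ may exceed $\ell$, so $y(\bx_j)$ cannot be read off from $\bT^{(\ell)}$ exactly; the unavoidable shortfall is what I will identify with $E(\bx_j)$.

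For each $k\in\{m,\dots,\lfloor \ell/(p-1)\rfloor\}$, the partial contraction
\[
\bS_k(\bx)\;=\;\sum_{i=1}^r\<\bx,\bw_i\>^{\ell-(p-1)k}\,\bw_i^{\otimes(p-1)k}\;\in\;(\reals^d)^{\otimes (p-1)k}
\]
is a polynomial-time function of $\bT^{(\ell)}$ and $\bx$, obtained by contracting $\bT^{(\ell)}$ against $\bx^{\otimes(\ell-(p-1)k)}$ (which is well-defined since $k\le\lfloor\ell/(p-1)\rfloor$). Next, I define a $p$-linear form $M_k$ on $(\reals^d)^{\otimes(p-1)k}$ by a complete-graph contraction pattern: split the $(p-1)k$ indices of each of the $p$ copies into $p-1$ blocks of $k$ indices---one block per other copy---and contract matching blocks index-by-index. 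Such an assignment exists since every vertex of $K_p$ has degree $p-1$. With this convention,
\[
M_k(\bS_k(\bx))\;=\;\sum_{i_1,\dots,i_p}\prod_{t=1}^p\<\bx,\bw_{i_t}\>^{\ell-(p-1)k}\prod_{1\le t<t'\le p}\<\bw_{i_t},\bw_{i_{t'}}\>^k,
\]
and the diagonal piece $i_1=\cdots=i_p=i$ collapses, via $\|\bw_i\|=1$, to $\sum_i\<\bx,\bw_i\>^{p(\ell-(p-1)k)}$. I then set $y_{\rm noisy}(\bx):=\sum_{k} c_k\,M_k(\bS_k(\bx))$ and define $E(\bx):=y_{\rm noisy}(\bx)-y(\bx)$, which is precisely the sum of off-diagonal contributions weighted by the $c_k$.

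To verify \eqref{eq:err}, I bound $E(\bx)$ pointwise. For any non-constant tuple $(i_1,\dots,i_p)$, at least $p-1$ of the $\binom{p}{2}$ pairs satisfy $i_t\ne i_{t'}$ (the minimum is attained by tuples in which exactly one entry differs from the rest), so the separation assumption $|\<\bw_i,\bw_j\>|\le\delta$ gives $|\prod_{t<t'}\<\bw_{i_t},\bw_{i_{t'}}\>^k|\le\delta^{k(p-1)}$. Using that $p$ is even and applying H\"older's inequality,
\[
\sum_{(i_t)\text{ not constant}}\prod_{t=1}^p|\<\bx,\bw_{i_t}\>|^{\ell-(p-1)k}\;\le\;\Big(\sum_i|\<\bx,\bw_i\>|^{\ell-(p-1)k}\Big)^p\;\le\;r^{p-1}\sum_i\<\bx,\bw_i\>^{p(\ell-(p-1)k)}.
\]
Combining these, using $k\ge m$, $\delta\le 1$, and $c_k\ge 0$, and summing over $k$ yields $|E(\bx)|\le(\delta^m r)^{p-1}\sum_k c_k\sum_i\<\bx,\bw_i\>^{p(\ell-(p-1)k)}=(\delta^m r)^{p-1}\,y(\bx)$, which is exactly \eqref{eq:err}. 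The reduction is then immediate: given $\{(\bx_j,y_{\rm noisy}(\bx_j))\}_{j\le n}$, all computable from $\bT^{(\ell)}$ in polynomial time, $\mathcal A$ returns unit-norm $\{\hat\bw_i\}_{i\le R}$ with $|\<\bw_{i},\hat\bw_{j}\>|\ge\eps$ for some $i,j$, contradicting the $\eps$-hardness of decomposing $\bT^{(\ell)}$. The main technical step is the combinatorial construction of $M_k$ so that diagonal terms survive and off-diagonal ones are uniformly controlled by $\delta^{k(p-1)}$; once that design is in place, the error estimate is a short H\"older-type calculation, and the hypothesis that $p$ is even is used to ensure that the $k$-th diagonal sum $\sum_i\<\bx,\bw_i\>^{p(\ell-(p-1)k)}$ is nonnegative, so the error is controlled by a positive multiple of $y(\bx)$ with no sign cancellation.
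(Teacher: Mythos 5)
Your proposal is correct and follows essentially the same route as the paper's proof: the $p$-linear complete-graph contraction $M_k(\bS_k(\bx))$ you build is exactly the paper's tensor $\bT_0^{(k)}$ (obtained by gluing $p$ copies of $\bT^{(\ell)}$ along $k$ shared indices per pair) contracted against $\bx^{\otimes p(\ell-(p-1)k)}$, the diagonal/off-diagonal split defines $E(\bx)$ identically, and the $\delta^{k(p-1)}$ plus H\"older estimate (with $p$ even guaranteeing nonnegativity) is the same error bound. The concluding black-box reduction to the $\epsilon$-hardness of decomposing $\bT^{(\ell)}$ also matches the paper.
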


As an example, set $p=2$ and $m=3$. Then, the activation function $\sigma$ has the form
\begin{equation}
\sigma(z) = a_0 + a_1 z^2 + a_2 z^4 + \cdots + a_{\ell-3} z^{2(\ell-3)},
\end{equation}
for some choice of the positive coefficients $a_0, \ldots, a_{\ell-3}$.
Furthermore, the error term $E(\bx)$ is negligible with respect to the signal $y(\bx)$ as long as $\delta^3 \cdot r$ is also negligible. Thus, if $\delta$ is at most of order $1/\sqrt{d}$, Theorem \ref{th:compnoise} holds for $r$ as large as $d^{3/2}$.

\begin{proof}[Proof of Theorem \ref{th:compnoise}]
The proof follows the lines of that of Theorem \ref{th:comp}. Assume that the thesis is false. Then, there exists an algorithm $\mathcal A$ that, given $\{(\bx_j, y_{\rm noisy}(\bx_j))\}_{1\le j\le n}$, has polynomial complexity and outputs $\{\hat{\bw}_{i}\}_{1\le i \le R}$ with unit norm s.t. $|\left \langle \bw_i, \hat{\bw}_j\right \rangle| \ge \epsilon$ for some $i \in [r]$ and $j\in [R]$. Without loss of generality, we can assume that $n$ is bounded by a polynomial in $d$.

Let $k\in \{m, m+1, \ldots, \lfloor\sfrac{\ell}{(p-1)}\rfloor\}$ and define the tensor $\bT_0^{(k)}$ of order $p(\ell-(p-1)k)$ as follows:
\begin{equation}
\begin{split}
T_0^{(k)}(&j^{(1)}_{(p-1)k+1}, \ldots, j^{(1)}_{\ell}, j^{(2)}_{(p-1)k+1},\ldots,  j^{(2)}_{\ell}, \ldots, j^{(p)}_{(p-1)k+1}, \ldots, j^{(p)}_{\ell})\\
& =\sum_{\{j_a^{(b, c)}\}}\prod_{q=1}^p T^{(\ell)}(j_1^{(1, q)}, \ldots, j_k^{(1, q)}, \ldots, j_1^{(q-1, q)}, \ldots, j_k^{(q-1, q)}, j^{(q)}_{(p-1)k+1}, \ldots, j^{(q)}_{\ell},\\
&\hspace{4em} j_1^{(q, q+1)}, \ldots, j_k^{(q, q+1)}, \ldots, j_1^{(q, p)}, \ldots, j_k^{(q, p)}),
\end{split}
\end{equation}
where the first sum is over all indices $j_a^{(b, c)}$, where $a\in [k]$, $b < c$ and $b, c\in [p]$. In words, in order to obtain $\bT_0^{(k)}$, we multiply $p$ copies of the tensor $T^{(\ell)}$, each pair of copies shares $k$ indices, and we perform the summation over those shared indices. By definition of tensor, we have that
\begin{equation}\label{eq:tens1}
\begin{split}
T_0^{(k)}(j^{(1)}_{(p-1)k+1}, \ldots, j^{(1)}_{\ell}, j^{(2)}_{(p-1)k+1}, \ldots, 
& j^{(2)}_{\ell}, \ldots, j^{(p)}_{(p-1)k+1}, \ldots, j^{(p)}_{\ell})\\
&= \sum_{i_1, \ldots, i_p\in [r]} \prod_{\substack{b, c\in [p]\\b<c}}\left \langle \bw_{i_{b}}, \bw_{i_c} \right \rangle^k \prod_{q = 1}^p \prod_{s=(p-1)k+1}^\ell w_{i_q}(j^{(q)}_{s}).
\end{split}
\end{equation}

Note that, given the tensor $\bT^{(\ell)}$, we can construct with polynomial complexity the tensor $\bT_0^{(k)}$ for any $k\in \{m, m+1, \ldots, \lfloor\sfrac{\ell}{(p-1)}\rfloor\}$. Hence, given $\bx_j$, we can also construct with polynomial complexity the following quantity:
\begin{equation}\label{eq:q0}
\sum_{k=m}^{\lfloor \ell/(p-1)\rfloor}c_k\left \langle \bT_0^{(k)}, \bx_j^{\otimes p(\ell-(p-1)k)} \right \rangle,
\end{equation}
which, by using \eqref{eq:tens1}, can be rewritten as 
\begin{equation}\label{eq:q1}
\sum_{k=m}^{\lfloor \ell/(p-1)\rfloor}  c_k \sum_{i_1, \ldots, i_p\in [r]} \prod_{\substack{b, c\in [p]\\b<c}}\left \langle \bw_{i_{b}}, \bw_{i_c} \right \rangle^k \prod_{q = 1}^p\left \langle \bw_{i_q}, \bx_j \right \rangle^{\ell-(p-1)k} .
\end{equation}
Let $\mathcal D$ be the set of $p$-tuples $(i_1, \ldots, i_p)$ whose components are all equal, i.e., 
\begin{equation}
\mathcal D = \{(i_1, \ldots, i_p) : i_b =i_c \quad \forall\,b, c\in [p]\},
\end{equation}
 and let $\mathcal D^{\rm c}$ be its complement, i.e.,
 \begin{equation}
\mathcal D^{\rm c} = [r]^p\setminus \mathcal D.
\end{equation}
Consider the sum over $i_1, \ldots, i_p$ and let us perform it first over the $p$-tuples in $\mathcal D$ and then over the $p$-tuples in $\mathcal D^{\rm c}$. Then, \eqref{eq:q1} is equal to  
\begin{equation}\label{eq:q1bis}
\sum_{k=m}^{\lfloor \ell/(p-1)\rfloor}\hspace{-1em}  c_k \sum_{i\in [r]}\left \langle \bw_{i}, \bx_j \right \rangle^{p(\ell-(p-1)k)} + \sum_{k=m}^{\lfloor \ell/(p-1)\rfloor}\hspace{-1em}  c_k \hspace{-1em}\sum_{(i_1, \ldots, i_p)\in \mathcal D^{\rm c}} \prod_{\substack{b, c\in [p]\\b<c}}\left \langle \bw_{i_{b}}, \bw_{i_c} \right \rangle^k \prod_{q = 1}^p\left \langle \bw_{i_q}, \bx_j \right \rangle^{\ell-(p-1)k}.
\end{equation}
The first term in the RHS of \eqref{eq:q1bis} is equal to $y(\bx_j)$, where $\sigma$ is given by \eqref{eq:sigmahp}. We set the error term $E(\bx)$ to the second term in the RHS of \eqref{eq:q1bis} and we bound it as follows: 
\begin{equation}
\begin{split}
\Bigg|\sum_{k=m}^{\lfloor \ell/(p-1)\rfloor}  &c_k \sum_{(i_1, \ldots, i_p)\in \mathcal D^{\rm c}} \prod_{\substack{b, c\in [p]\\b<c}}\left \langle \bw_{i_{b}}, \bw_{i_c} \right \rangle^k \prod_{q = 1}^p\left \langle \bw_{i_q}, \bx_j \right \rangle^{\ell-(p-1)k}\Bigg| \\
& \le \sum_{k=m}^{\lfloor \ell/(p-1)\rfloor}  c_k \sum_{(i_1, \ldots, i_p)\in \mathcal D^{\rm c}} \prod_{\substack{b, c\in [p]\\b<c}}\left|\left \langle \bw_{i_{b}}, \bw_{i_c} \right \rangle^k \right|\prod_{q = 1}^p\left|\left \langle \bw_{i_q}, \bx_j \right \rangle^{\ell-(p-1)k}\right|\\
& \stackrel{\mathclap{\mbox{\footnotesize (a)}}}{\le} \sum_{k=m}^{\lfloor \ell/(p-1)\rfloor}\hspace{-1em}  c_k \hspace{-.5em}\sum_{(i_1, \ldots, i_p)\in \mathcal D^{\rm c}} \hspace{-1em}\delta^{k(p-1)} \prod_{q = 1}^p\left|\left \langle \bw_{i_q}, \bx_j \right \rangle^{\ell-(p-1)k}\right|\\
& \le \sum_{k=m}^{\lfloor \ell/(p-1)\rfloor}\hspace{-1em}c_k\cdot \delta^{k(p-1)}\hspace{-1em} \sum_{i_1, \ldots, i_p\in [r]} \prod_{q = 1}^p\left|\left \langle \bw_{i_q}, \bx_j \right \rangle^{\ell-(p-1)k}\right|\\
& =\sum_{k=m}^{\lfloor \ell/(p-1)\rfloor}c_k\cdot \delta^{k(p-1)} \left(\sum_{i\in [r]} \left| \left \langle \bw_{i}, \bx_j \right \rangle^{\ell-(p-1)k}\right|\right)^p\\
& \stackrel{\mathclap{\mbox{\footnotesize (b)}}}{\le}  \sum_{k=m}^{\lfloor \ell/(p-1)\rfloor}c_k\cdot \delta^{k(p-1)} r^{p-1} \sum_{i\in [r]}  \left \langle \bw_{i}, \bx_j \right \rangle^{p(\ell-(p-1)k)}\\
& \le (\delta^{m} \cdot r)^{p-1} \sum_{k=m}^{\lfloor \ell/(p-1)\rfloor}c_k  \sum_{i\in [r]}  \left \langle \bw_{i}, \bx_j \right \rangle^{p(\ell-(p-1)k)},
\end{split}
\end{equation}
where in (a) we use that the weights $\{\bw_i\}_{1\le i\le r}\in\mathcal S_\delta$ and the fact that, for any $(i_1, \ldots, i_p)\in \mathcal D^{\rm c}$, there are at least $p-1$ pairs of distinct indices $i_b\neq i_c$, and in (b) we use H\"older's inequality.

Hence, the quantity \eqref{eq:q0} is equal to $y_{\rm noisy}(\bx_j)$, where the error term $E(\bx)$ satisfies the condition \eqref{eq:err}. Consequently, we can construct the set $\{(\bx_j, y_{\rm noisy}(\bx_j))\}_{1\le j\le n}$ with polynomial complexity. By applying the algorithm $\mathcal A$ with input $\{(\bx_j, y_{\rm noisy}(\bx_j))\}_{1\le j\le n}$, we obtain with polynomial complexity the estimates $\{\hat{\bw}_{i}\}_{1\le i \le R}$ with unit norm s.t. $|\left \langle \bw_i, \hat{\bw}_j\right \rangle| \ge \epsilon$ for some $i \in [r]$ and $j\in [R]$. As a result, given the tensor $\bT^{(\ell)}$, the problem of learning $\{\bw_{i}\}_{1\le i \le r}$ is not $\epsilon$-hard, which violates the hypothesis and concludes the proof.
\end{proof}

\end{document}